\documentclass{article}
\usepackage{times,natbib}
\setcitestyle{authoryear,round,citesep={;},aysep={,},yysep={;}}

\usepackage{color}
\usepackage{graphicx}
\graphicspath{{../}}
\usepackage{subcaption}
\usepackage{amsthm}
\usepackage{amsbsy}
\usepackage{booktabs}
\usepackage{amsmath,amssymb,amsfonts,mathtools}
\usepackage{algorithm}
\usepackage{enumerate}
\usepackage{algpseudocode}
\usepackage{placeins}
\usepackage{bm}
\usepackage{fullpage}
\newcommand{\mA}{\mathbf{A}}
\newcommand{\mB}{\mathbf{B}}
\newcommand{\mC}{\mathbf{C}}
\newcommand{\mD}{\mathbf{D}}

\newcommand{\mF}{\mathbf{F}}
\newcommand{\mG}{\mathbf{G}}
\newcommand{\mI}{\mathbf{I}}
\newcommand{\mL}{\mathbf{L}}
\newcommand{\mM}{\mathbf{M}}
\newcommand{\mN}{\mathbf{N}}
\newcommand{\mP}{\mathbf{P}}
\newcommand{\mQ}{\mathbf{Q}}
\newcommand{\mR}{\mathbf{R}}
\newcommand{\mS}{\mathbf{S}}

\newcommand{\mU}{\mathbf{U}}
\newcommand{\mV}{\mathbf{V}}
\newcommand{\mW}{\mathbf{W}}
\newcommand{\mX}{\mathbf{X}}
\newcommand{\mY}{\mathbf{Y}}
\newcommand{\mZ}{\mathbf{Z}}
\newcommand{\mSigma}{\bm{\Sigma}}

\newcommand{\vg}{\bm{g}}
\newcommand{\vr}{\bm{r}}
\newcommand{\vc}{\bm{c}}
\newcommand{\va}{\bm{a}}
\newcommand{\vb}{\bm{b}}
\newcommand{\vv}{\bm{v}}
\newcommand{\vu}{\bm{u}}

\newcommand{\vm}{\bm{m}}
\newcommand{\vw}{\bm{w}}

\newcommand{\vone}{\mathbf{1}}
\newcommand{\R}{\mathbb{R}}
\newcommand{\St}{\mathrm{St}}
\newcommand{\Polar}{\mathrm{Polar}}

\newcommand{\MN}{\mathrm{MN}}
\newcommand{\Normal}{\mathcal{N}}
\newcommand{\IW}{\mathrm{IW}}
\newcommand{\diag}{\operatorname{diag}}
\newcommand{\tr}{\operatorname{tr}}
\renewcommand{\vec}{\operatorname{vec}}

\newcommand{\KL}{\operatorname{KL}}

\providecommand{\eg}{e.g.,\ }

\DeclareMathOperator{\argmininn}{argmin} 
\DeclareMathOperator{\argmaxinn}{argmax} 
\newcommand{\argmin}[1]{ \underset{#1}{\argmininn} \;}
\newcommand{\argmax}[1]{ \underset{#1}{\argmaxinn}\;}

\newcommand{\E}[1]{\mathbb{E}\left[#1\right] }

\usepackage{hyperref}
\usepackage{url}
\usepackage{mdframed} 
\usepackage{thmtools}

\definecolor{theoremblue}{RGB}{245,249,255}
\definecolor{theoremlineblue}{RGB}{199,218,242}
\declaretheoremstyle[
headfont=\normalfont\bfseries,
notefont=\mdseries, notebraces={(}{)},
bodyfont=\normalfont,
postheadspace=0.5em,
spaceabove=3pt,
spacebelow=-3pt,
mdframed={
  skipabove=8pt,
  skipbelow=8pt,
  roundcorner=3pt,
  hidealllines=true,
  backgroundcolor={theoremblue},
  innerleftmargin=8pt,
  innerrightmargin=8pt,
  innertopmargin=6pt,
  innerbottommargin=6pt}
]{shaded}

\declaretheorem[style=shaded,within=section]{definition}

\declaretheorem[style=shaded,sibling=definition]{proposition}
\declaretheorem[style=shaded,numbered=no,name=Proposition]{proposition*}

\declaretheorem[style=shaded,sibling=definition]{lemma}
\declaretheorem[style=shaded,sibling=definition]{remark}

\usepackage[nameinlink]{cleveref}
\makeatletter
\providecommand{\theHALG@line}{}
\renewcommand{\theHALG@line}{\thealgorithm.\arabic{ALG@line}}
\makeatother

\hypersetup{colorlinks, citecolor=blue, filecolor=black, urlcolor=blue}

\title{Stochastic optimization methods through statistical whitening}
\author{Tetiana Parshakova$^{*}$ \and Robert M. Gower$^{*}$ \and Paulo Orenstein}

\begin{document}

\maketitle
{\renewcommand{\thefootnote}{\fnsymbol{footnote}}%
\footnotetext[1]{Joint first authors.}%
}

\begin{abstract}
We provide new statistical insights into designing stochastic optimization methods.
 By modeling the gradient via a normal distribution, we argue that the stochastic gradient should be whitened using its covariance. 
We give an online method for estimating the mean and variance of the gradients based on regularized likelihood or, equivalently,
through a maximum a posteriori estimate. We then show how to use structured approximations to the covariance 
within this same framework, which includes diagonal and Kronecker-factored approximations. The diagonal approximation results in a
 method that is very closely related to Adam and performs as well in practice. The Kronecker approximation results in
 the new MN method, which is a variant
of Shampoo and KL-Shampoo. 
We also recover Muon by applying this same statistical framework to the gradient direction rather than the full gradient. 
By combining this with a whitening step, we arrive at MN-Muon, a novel preconditioned version of Muon. 
Our experiments on GPT-2-type models at Chinchilla scale indicate that MND, the diagonal implementation of MN, 
outperforms both the Distributed Shampoo and the KL-Shampoo implementations in terms of stability, final validation loss and time taken.
MND-Muon, the diagonal implementation of MN-Muon, improves on Muon in 21 of 24 paired GPT-small runs and achieves a lower best validation loss on GPT-medium,
 where it remains stable over the six tested learning rates. However, we found that NorMuon was comparable to MND-Muon and achieved a slightly 
lower validation loss on GPT-medium and at GPT-small batch 512 on FineWeb.
\end{abstract}

\section{Introduction}

Here we present a new viewpoint for modeling stochastic pretraining methods based on whitening. 
Stochastic gradients are noisy estimates of the full-batch gradient, with some components 
having high variance and others low variance. We need to take this variance into account; 
otherwise, we might accidentally take a large step in a high-variance direction, leading 
the method astray. 
% By modelling the gradient with a normal distribution, I will argue that 
% gradient estimates should be whitened using their covariance.

To explain this whitening, let $\ell(\vw)$ be a loss we wish to minimize with parameters $\vw\in \R^d$, and
let $\vg_t \in \R^d$ be a stochastic gradient evaluated at a given $\vw_t\in\R^d$. Thus  $\E{\vg_t\mid \vw_t}= \nabla \ell(\vw_t)$.
With a sampled stochastic gradient, we may update our parameters using 
$$\vw_{t+1} = \vw_t -\eta \vg_t,$$
where $\eta>0$ is a step size. 
Suppose now that we fit a normal model to our stochastic gradients, and assume $\vg_t~\sim~\Normal(\vm_t,\mSigma_t).$ Instead of using $\vg_t$ as our update direction, we could use an
estimate $\hat{\vm}_t$ of the mean and update according to
$$\vw_{t+1} = \vw_t -\eta \hat{\vm}_t.$$
This approach corresponds to using the momentum algorithm~\citep{schaipp2025tracking}. But this momentum update ignores the variance in the gradients. Some of the coordinates of the gradient may have a very high variance, in which case we should be cautious and take a smaller step in these coordinates. For coordinates that have low variance, we can safely take larger steps. We can implement this idea by using the covariance to \emph{whiten} our estimate of the mean. 
That is, the whitened gradient $\mSigma_t^{-1/2}\vg_t\sim\Normal(\mSigma_t^{-1/2}\vm_t,\mI)$ has independent coordinates, each with variance one.
% That is, note that $\mSigma_t^{-1/2}\vg_t\sim \cN(\mSigma_t^{-1/2}m_t, \mI)$ has independent coordinates each with an equal variance of one. 
Taking a step using this whitened stochastic gradient corresponds to
\begin{equation}\label{eq:whiten-steps-gt}
     \vw_{t+1} = \vw_t -\eta \mSigma_t^{-1/2} \vg_t.
\end{equation}
To illustrate, see~\Cref{fig:sgd-precon}, where we compare SGD to Whitened SGD~\eqref{eq:whiten-steps-gt} on a simple quadratic loss with Gaussian noise. The left and middle panels of~\Cref{fig:sgd-precon} show the trajectories of the two methods, each with the best tuned base learning rate (over five seeds).
% , which happened to be the same base learning rate.
 SGD takes large, noisy steps in the high-variance direction, while Whitened SGD rescales those steps so it can make steadier progress toward the minimum.
\begin{figure}
    \centering
    \includegraphics[width=\linewidth]{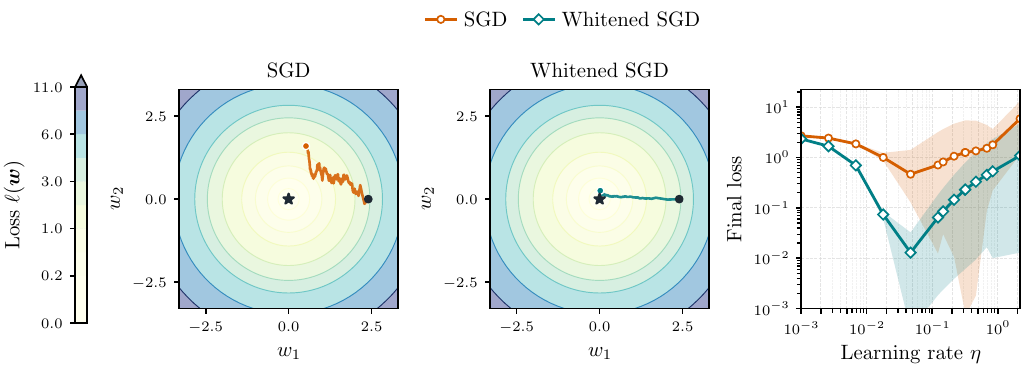}
\caption{SGD and Whitened SGD on $\ell(\vw)=\frac{1}{2}\|\vw\|^2$  with stochastic gradients $\vg_t=\vw_t+\bm{\xi}_t$, where $\bm{\xi}_t\sim\Normal(0,\mSigma)$ and $\mSigma=\diag(0.1,100)$.  Right: final loss after 300 iterations versus base learning rate over five seeds. Left and middle: 300-step trajectories of SGD and Whitened SGD each with best base learning rate chosen from the right plot. }
    \label{fig:sgd-precon}
\end{figure}

In practice we will use an estimate of the mean whitened gradient and update according to
\begin{equation}\label{eq:whiten-steps}
     \vw_{t+1} = \vw_t -\eta \hat{\mSigma}_t^{-1/2} \hat{\vm}_t,
\end{equation}
where $\hat{\mSigma}_t$ and $\hat{\vm}_t$ are estimates of the covariance and mean. We refer to methods that fit the format~\eqref{eq:whiten-steps} as the \emph{Whitened Momentum methods}.
% \[\nabla_{\vw} f_{x_t}(\vw_t) = \vg_t\sim\cN(\vm_t,\mSigma_t)\]

% Our starting point is in~\Cref{sec:Matrix Normal-warmup} where we show that by modeling the gradient matrices (of linear layers) using a Matrix Normal distribution we arrive at variants of Shampoo. The Matrix Normal model uses a Kronecker factored covariance, and we show that the original Shampoo method arises from an empirical estimate of the these Kronecker factors in~\Cref{sec:shampoo-empirical}. 
% We then argue in~\Cref{sec:likelihood-mean-covariance} that a better method for estimating the Kronecker factors and the mean (momentum) simultaneously is to maximize the posterior of the Matrix Normal distribution. We choose a reasonable prior such that the posterior has a well defined and unique global minimal and admits an efficient alternating minimization algorithm for computing the factors. This despite the fact that the posterior is a non-convex function. The result of using two steps of this alternating minimization  on the posterior to compute the mean and covariance factors gives our MN (Matrix Normal model) method.
Our starting point is in~\Cref{sec:bayesian-and-optimizer-agree} where we show that by modeling the gradient 
through a Bayesian recursion or, equivalently, by solving a regularized likelihood optimization problem, we arrive at an online method
for estimating both the variance $\hat{\mSigma}_t$ and mean $\hat{\vm}_t$ of the gradients. Our method can be seen as a full 
covariance matrix variant of Adam~\citep{adam}.

In practice we cannot store the full covariance matrix or approximate its inverse square root. To arrive at a practical method, 
in \Cref{sec:structured} we show how to use structured approximations to the full covariance matrix, such as diagonal matrices or Kronecker-factored matrices with full or diagonal factors.
 
We then show that using a diagonal approximation in~\Cref{sec:Adam} we recover the update of the first and second moment estimates of Adam as a special case~\citep{orvieto2025adamsecretsauce}.
However, the Adam update does not exactly fit the format of a Whitened Momentum method, because Adam uses the uncentered second
 moment to whiten momentum, instead of the centered second moment (covariance) as is used in the Whitened Momentum method~\eqref{eq:whiten-steps}. 
Combining this diagonal covariance with the Whitened Momentum step~\eqref{eq:whiten-steps} yields Centered Adam, a variant of both Adam and AdaBelief~\citep{zhuang2020adabelief}.
We do several experiments showing that Centered Adam 
 performs as well as the original Adam method. Thus, explaining the practical
effectiveness of Adam-like methods may not require an exact account of
Adam itself: the closely related method Centered Adam achieves comparable
performance and admits a simple interpretation as online gradient
whitening using a diagonal covariance approximation.

In~\Cref{sec:mn}, we move beyond diagonal covariance approximations
to the richer class of Kronecker-factored matrices. Within this class,
we show that square-root (as opposed to quarter-root) variants of
Shampoo fit the Whitened Momentum
format~\eqref{eq:whiten-steps}, giving them an interpretation as
gradient whitening with a Kronecker-factored covariance.
Applying our covariance estimation approach to these Kronecker
factors then yields new variants of
KL-Shampoo~\citep{lin2026klshampoo}, which we 
call the MN (Matrix Normal) method.
This name reflects the underlying probabilistic model: a Gaussian
distribution over matrices with a Kronecker-factored covariance
is known as a Matrix Normal distribution.

% We show our optimization problem has a remarkably simple solution through alternating minimization with strong guarantees, resulting in our MN (Matrix Normal model) method.

The MN method has storage and computational costs of the same order as Shampoo, 
but requires substantially more optimizer state storage than Muon. 
For layers with a $1{:}4$ aspect ratio, Shampoo and MN require $5.25$ 
times as much optimizer state storage as Muon, counting momentum and 
covariance factors at the same precision.
To arrive at an algorithm with storage comparable to that of Muon, we propose MND, 
which is a variant of our MN where the factors are diagonal matrices. 
Our experiments\footnote{Our open source implementation is available at:
{\color{blue}\url{https://github.com/parshakova/matrix_normal_optimizers}}} 
in~\Cref{sec:mn-experiments} compare the MND methods 
to the most well-established variants of Shampoo, namely  Distributed Shampoo~\citep{shi2023distributedshampoo}
 and KL-Shampoo on GPT-small and GPT-medium at Chinchilla scale. These experiments show that 
MND was the most robust method to the choice of learning rate and achieved a significantly lower final validation loss.
% Because of the efficacy of MND in these first experiments, we focus on MND for the remainder of the paper. 

% We find that MN was not as effective because of issues with numerical linear algebra routines.
%  Indeed, MN  requires both matrix inverses and inverse square roots, for which we use GPU-friendly 
%  variants of the Newton--Schulz method. We ran into issues around numerical stability when using lower precision 
%  (\text{BF16} and \text{FP32}) which we highlight in~\Cref{sec:Newtonschulz}. 

In~\Cref{sec:mn-muon} we go on to show that Muon also admits a statistical interpretation by modeling 
the behavior of the gradient direction, rather than the gradient itself.
Using this interpretation we derive a new preconditioned version of Muon that we call MND-Muon.
Benchmarking MND-Muon, we find it consistently improves upon Muon, but falls a little short of  NorMuon in \Cref{sec:mn-muon-experiments}.

\paragraph{Background.}
 Our work is largely motivated by the 
 empirical success of  Muon~\citep{muon}, Adam~\citep{adam}, Shampoo~\citep{gupta2018shampoo} 
  and its variants such as KL-Shampoo and Distributed Shampoo \citep{lin2026klshampoo,kasimbeg2025accelerating,shi2023distributedshampoo}. 
More recently, Muon~\citep{muon} has emerged as one of the most competitive methods, having already been used to train several frontier models~\citep{moonlight,glm2025glm45,kimi2025k2,deepseek2026v4}.
Our technique for estimating the mean and covariance of the gradients can be seen as an extension of~\cite{orvieto2025adamsecretsauce},
but where we unify methods through the lens of gradient whitening, 
consider other structured approximations to the covariance matrix, and introduce a complete Bayesian interpretation.
Another closely related line of work is by~\cite{lin2026klshampoo}, where the authors use a KL divergence projection of the uncentered gradient second moment
onto the space of Kronecker-factored matrices to derive a new variant of Shampoo. Our methods can also be seen as KL projections,
but where we jointly model the mean and variance, and show that the target mean and variance are the posterior mode of a Bayesian model.
See~\Cref{sec:extended-background} for a more detailed discussion of related work.

\section{A Gaussian model for the gradients}
\label{sec:bayesian-and-optimizer-agree}

%%\rob{Very confusing to have this here, since we dont explain these dimensions, nor mention them below.}
% For an $m\times n$ parameter matrix, we write $d=mn$ and stack gradient
% columns into $\vg_t\in\R^{mn}$.

Suppose we observe a stochastic gradient
 $\vg_t\sim\Normal(\vm,\mSigma)$ with unknown mean $\vm$ and
covariance $\mSigma$, and have estimates
$(\hat{\vm}_{t-1},\hat{\mSigma}_{t-1})$ computed from past gradients.
As the iterates $\vw_t$ change, we assume that the gradient
distribution changes slowly. Past estimates therefore provide
information about the current mean and covariance.

There are two ways we can use the above to design a method for updating our parameters.
An optimization approach such as that of~\cite{orvieto2025adamsecretsauce} would minimize the negative log-likelihood plus an additional regularizer that encourages the unknown $(\vm, \mSigma)$ to be close to $(\hat{\vm}_{t-1},\hat{\mSigma}_{t-1})$. We call this the optimization approach.
A Bayesian approach would choose a reasonable prior on $(\vm, \mSigma)$ to reflect our assumptions and then update our estimates using the posterior.  Our method admits both interpretations.

% Both the Bayesian and optimization approaches want to find updated estimates $(\hat{\vm}_t, \hat{\mSigma}_t)$ such that it is likely that the new gradient $\vg_t$ is drawn from $\Normal(\hat{\vm}_t, \hat{\mSigma}_t)$. But they differ on how to incorporate past information.

\paragraph{An optimization viewpoint.} The standard optimization approach encodes the importance of the past using a regularizer~\citep{orvieto2025adamsecretsauce}. That is, since we also expect the distribution of the gradients to change slowly across iterations,  we  prefer that $\Normal(\vm, \mSigma)$ remain close to $\Normal(\hat{\vm}_{t-1}, \hat{\mSigma}_{t-1})$ in distribution. For this, we use the KL divergence between these two normal distributions to encourage their difference to be small; see the second term of~\eqref{eq:full-optimization_problem}.

In addition, we also want $\vg_t$ to be a \emph{likely} observation; that is, we also want to maximize the likelihood of  $\vg_t$ being sampled from 
$\Normal(\vm, \mSigma)$.  To balance this likelihood and regularizer, we solve 
% \[ p(\vg_t \mid \vm, \mSigma) \;p(\vm, \mSigma\mid \hat{\vm}_{t-1}, \hat{\mSigma}_{t-1}) = \Normal(\vg_t \mid \vm, \mSigma) \]
\begin{align}
(\hat{\vm}_t, \hat{\mSigma}_t') =  \argmin{\vm, \mSigma\succ0} \Big\{ &-\log \Normal(\vg_t \mid \vm, \mSigma) + \frac{\beta}{1-\beta} \KL\!\left(\Normal(\hat{\vm}_{t-1}, \hat{\mSigma}_{t-1}) \,\middle\|\, \Normal(\vm, \mSigma)\right)\Big\}, \label{eq:full-optimization_problem}
\end{align}
where $\beta \in (0,1)$ is the hyperparameter used to control how much regularization we use.
The above objective is a natural trade-off between our goal of making $\vg_t$ likely and our assumption that the distribution of $\vg_t$ changes slowly.
% The regularization is controlled by the parameter $\beta \in (0,1)$.

\paragraph{A Bayesian viewpoint.}
 The Bayesian would encode this dependency on the past using a prior $p(\vm, \mSigma \mid \hat{\vm}_{t-1},\hat{\mSigma}_{t-1})$.
The standard \emph{conjugate} prior for a multivariate normal with unknown mean and covariance is the Normal-Inverse-Wishart distribution
% \[p(\vm_t, \mSigma_t \mid \hat{\vm}_{t-1},\hat{\mSigma}_{t-1})= \operatorname{Normal-Inverse-Wishart}\]
% \begin{equation}\label{eq:niw}
% p(\vm_t, \mSigma_t \mid \hat{\vm}_{t-1},\hat{\mSigma}_{t-1})= \mathcal{NIW}\!\left(\hat{\vm}_{t-1},\
% \frac{\beta}{1-\beta},\ \frac{\beta}{1-\beta}\hat{\mSigma}_{t-1},\
% \frac{\beta}{1-\beta}-(d+2)\right)
% \end{equation}
\begin{equation}\label{eq:niw}
p(\vm, \mSigma \mid \hat{\vm}_{t-1},\hat{\mSigma}_{t-1})=\mathcal{NIW}\!\left(\hat{\vm}_{t-1},\
\frac{\beta}{1-\beta},\ \frac{\beta}{1-\beta}\hat{\mSigma}_{t-1},\
\frac{\beta}{1-\beta}-(d+2)\right),
\end{equation}
where $\beta \in (0,1)$ is used to discount the past. 
In the usual notation,
$\mathcal{NIW}(\vm_0, \lambda, \bm{\Psi}, \nu)$ is defined by
$$\vm \mid \mSigma \sim \Normal(\vm_0, \mSigma/\lambda) \quad \mbox{ and } \quad 
\mSigma \sim \mbox{Inverse-Wishart}(\nu, \bm{\Psi}).$$
See~\Cref{sec:results-from-2-joint-prior} for the detailed definition. 
The Bayesian approach would then compute the posterior distribution. We will then use this posterior to obtain point estimates $(\hat{\vm}_t, \hat{\mSigma}_t')$ by maximizing the posterior. Using Bayes' rule, the posterior is proportional to the product of the likelihood and prior. Thus, our point estimates are given by
\begin{equation} \label{eq:bayes-up}
    (\hat{\vm}_t, \hat{\mSigma}_t')  =    \argmax{\vm, \mSigma \succ 0} p(\vg_t \mid \vm, \mSigma) \, 
    p(\vm, \mSigma \mid \hat{\vm}_{t-1},\hat{\mSigma}_{t-1}).
\end{equation}
When the prior is proper\footnote{The prior~\eqref{eq:niw} is proper when $\frac{\beta}{1-\beta}>2d+1$,
where $d$ is the dimension of the gradient with $\vg_t \in \R^d$.
If $\beta$ violates this inequality, then~\eqref{eq:niw} still defines an unnormalized
prior kernel, and thus the optimization interpretation~\eqref{eq:full-optimization_problem} and the mode updates below
remain valid.}, conjugacy gives us that the posterior is a Normal-Inverse-Wishart distribution;
see~\Cref{prop:prior-mode,prop:posterior}.
To prepare for iteration $(t+1)$, we raise this posterior density
to the power $\beta$ and normalize again. The resulting
distribution becomes the prior for the next gradient observation.

% Normal-Inverse-Wishart conjugacy gives a Normal-Inverse-Wishart posterior; see~\Cref{prop:prior-mode,prop:posterior}. 
% At iteration $t$, the likelihood of $\vg_t$ updates the prior over $(\vm_t,\mSigma_t)$. Raising the posterior density to the power $\beta$ and renormalizing gives the next prior. The same algebra updates the unnormalized kernels outside the proper-prior regime.

Fortunately, the Bayesian~\eqref{eq:bayes-up} and optimization~\eqref{eq:full-optimization_problem} approaches agree.

\begin{restatable}[Bayesian and optimization approaches agree]{proposition}{closdformfull} 
\label{prop:alternating}
Assume $0<\beta<1$ and $\hat{\mSigma}_{t-1}\succ0$.
Both~\eqref{eq:full-optimization_problem} and~\eqref{eq:bayes-up}  are equivalent to solving the projection problem 
\begin{equation} \label{eq:KL-viewpoint}
    (\hat{\vm}_t, \hat{\mSigma}_t')  =   \argmin{\vm, \mSigma \succ 0 }  \KL\!\left(
\Normal(\hat{\vm}_t,\hat{\mSigma}'_t)
\,\middle\|\,
\Normal(\vm,\mSigma)
\right),
\end{equation}
where the solution 
is given by
\begin{align}
  \hat{\vm}_{t} &= \beta \hat{\vm}_{t-1} + (1-\beta) \vg_t \label{eq:mean_update}\\
  \hat{\mSigma}_t' &= \beta \hat{\mSigma}_{t-1} + \beta(1-\beta) (\vg_t - \hat{\vm}_{t-1}) (\vg_t - \hat{\vm}_{t-1})^{\top}.\label{eq:covariance_update}
\end{align}
\end{restatable}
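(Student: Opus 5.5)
The plan is to show that, up to additive constants independent of $(\vm,\mSigma)$ and a positive multiplicative factor, the objective in~\eqref{eq:full-optimization_problem} and the negative log of the unnormalized posterior in~\eqref{eq:bayes-up} are both equal to $\KL(\Normal(\hat{\vm}_t,\hat{\mSigma}_t')\,\|\,\Normal(\vm,\mSigma))$. Here $\hat{\vm}_t,\hat{\mSigma}_t'$ are defined by~\eqref{eq:mean_update}--\eqref{eq:covariance_update}. The minimization in~\eqref{eq:KL-viewpoint} is then immediate, since a KL divergence is nonnegative and vanishes exactly when the two Gaussians coincide. To make $\Normal(\hat{\vm}_t,\hat{\mSigma}_t')$ a legitimate target, we note that $\hat{\mSigma}_t'\succeq\beta\hat{\mSigma}_{t-1}\succ 0$.

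\textbf{Optimization objective.} Write $\gamma=\beta/(1-\beta)$, $\va=\hat{\vm}_{t-1}$ and $\mA=\hat{\mSigma}_{t-1}$. We use
$$-\log\Normal(\vg_t\mid\vm,\mSigma)=\tfrac12\log\det\mSigma+\tfrac12\tr\!\big(\mSigma^{-1}(\vg_t-\vm)(\vg_t-\vm)^\top\big)+c$$
together with the closed form
$$\KL(\Normal(\va,\mA)\|\Normal(\vm,\mSigma))=\tfrac12\big[\tr(\mSigma^{-1}\mA)+(\vm-\va)^\top\mSigma^{-1}(\vm-\va)-d+\log\det\mSigma-\log\det\mA\big].$$
Summing gives
$$\tfrac{1+\gamma}{2}\log\det\mSigma+\tfrac12\tr\!\Big(\mSigma^{-1}\big[(\vg_t-\vm)(\vg_t-\vm)^\top+\gamma\mA+\gamma(\vm-\va)(\vm-\va)^\top\big]\Big)+\text{const}.$$
The key algebraic step is completing the square in the rank-one terms:
$$(\vg_t-\vm)(\vg_t-\vm)^\top+\gamma(\vm-\va)(\vm-\va)^\top=(1+\gamma)(\vm-\hat{\vm}_t)(\vm-\hat{\vm}_t)^\top+\tfrac{\gamma}{1+\gamma}(\vg_t-\va)(\vg_t-\va)^\top,$$
where $\hat{\vm}_t=(\vg_t+\gamma\va)/(1+\gamma)=\beta\va+(1-\beta)\vg_t$. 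Factoring out $1+\gamma=1/(1-\beta)$ leaves
$$\tfrac{1}{1-\beta}\cdot\tfrac12\big[\log\det\mSigma+\tr(\mSigma^{-1}\hat{\mSigma}_t')+(\vm-\hat{\vm}_t)^\top\mSigma^{-1}(\vm-\hat{\vm}_t)\big]+\text{const}.$$
The remaining matrix is
$$\hat{\mSigma}_t'=(1-\beta)\Big[\gamma\mA+\tfrac{\gamma}{1+\gamma}(\vg_t-\va)(\vg_t-\va)^\top\Big]=\beta\mA+\beta(1-\beta)(\vg_t-\va)(\vg_t-\va)^\top,$$
which is exactly~\eqref{eq:covariance_update}. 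The bracket equals $2\KL(\Normal(\hat{\vm}_t,\hat{\mSigma}_t')\|\Normal(\vm,\mSigma))$ up to terms independent of $(\vm,\mSigma)$, so this case is done.

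\textbf{Bayesian objective.} We use the NIW density from~\Cref{sec:results-from-2-joint-prior}. The conditional $\Normal(\vm_0,\mSigma/\lambda)$ contributes $\tfrac12\log\det\mSigma+\tfrac\lambda2(\vm-\vm_0)^\top\mSigma^{-1}(\vm-\vm_0)$ to the negative log-density. The Inverse-Wishart factor contributes $\tfrac{\nu+d+1}{2}\log\det\mSigma+\tfrac12\tr(\bm{\Psi}\mSigma^{-1})$. We plug in $\lambda=\gamma$, $\bm{\Psi}=\gamma\mA$ and $\nu=\gamma-(d+2)$, so that the total log-det coefficient is $\gamma/2$. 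The negative log-prior then becomes
$$\tfrac{\gamma}{2}\big[\log\det\mSigma+\tr(\mSigma^{-1}\mA)+(\vm-\va)^\top\mSigma^{-1}(\vm-\va)\big]+\text{const}=\gamma\,\KL(\Normal(\va,\mA)\|\Normal(\vm,\mSigma))+\text{const}.$$
This holds whether or not the prior is proper, because only the kernel is used. Adding the negative log-likelihood therefore reproduces the objective of~\eqref{eq:full-optimization_problem} exactly, and the maximizer of~\eqref{eq:bayes-up} coincides with the minimizer above.

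The main obstacle is bookkeeping rather than conceptual. One must get the $\log\det\mSigma$ exponent of the NIW kernel right, in particular the extra $\tfrac12$ coming from the $\mSigma/\lambda$ scaling of the conditional mean, and one must carry out the rank-one completion of the square carefully so that the factor $\beta(1-\beta)$ emerges.
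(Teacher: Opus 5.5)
Your proof is correct and takes essentially the same route as the paper. You identify the negative log NIW prior kernel with $\frac{\beta}{1-\beta}$ times the KL regularizer (the paper's prior-mode proposition), then complete the square in the rank-one terms to rewrite the combined objective as $\frac{1}{1-\beta}\KL\!\left(\Normal(\hat{\vm}_t,\hat{\mSigma}'_t)\,\middle\|\,\Normal(\vm,\mSigma)\right)$ plus a constant. The only difference is that you complete the square on the regularized likelihood rather than on the posterior kernel; this is immaterial, since the two objectives agree up to additive constants.
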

All proofs are given in the appendix. 

% To see why~\eqref{eq:full-optimization_problem} and~\eqref{eq:bayes-up} are equal, both have the same likelihood term on the left (up to a change of sign). The log prior is also equal, up to an additive constant, to the regularization term in~\eqref{eq:full-optimization_problem}. 

To illustrate, in \Cref{fig:white-mom} we compare the standard Momentum method to our Whitened Momentum~\eqref{eq:whiten-steps} when using \eqref{eq:mean_update}--\eqref{eq:covariance_update} on a simple two-dimensional quadratic. Here we can already see that the high variance of the Momentum method makes the method less stable and unable to achieve as low a loss as Whitened Momentum.

\begin{figure}
    \centering
    \includegraphics[width=\linewidth]{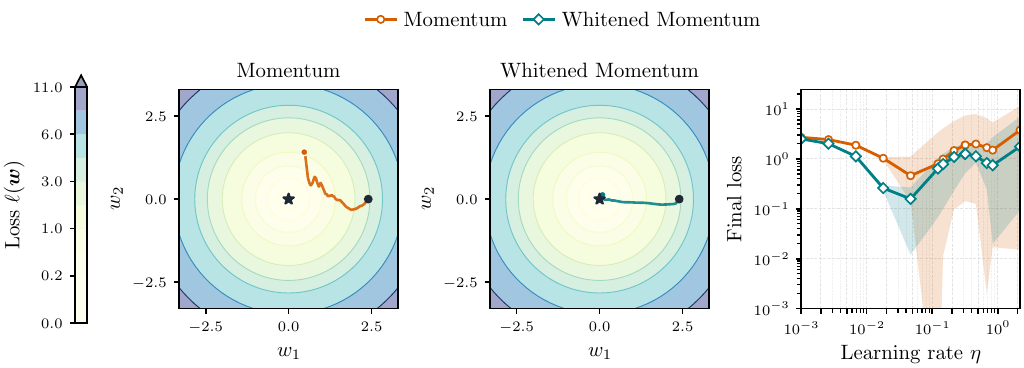}
\caption{Momentum and  Whitened Momentum using~\eqref{eq:mean_update}--\eqref{eq:covariance_update}   with stochastic gradients $\vg_t=\vw_t+\bm{\xi}_t$, where $\bm{\xi}_t\sim\Normal(0,\diag(0.1,100))$. 
Right: final loss after 300 iterations versus base learning rate over five seeds. Left and middle: 300-step trajectories of each method with best base learning rate chosen from the right plot.
}
    \label{fig:white-mom}
\end{figure}

%%\rob{Comment on why this holds. Not that interesting, and can be confusing} 
%  This modeling choice can be directly tied to the optimization problem above since by~\Cref{prop:prior-mode} the negative log density of the prior~\eqref{eq:niw} is equal to the regularization term in~\eqref{eq:full-optimization_problem}, up to an additive constant, that is
% \begin{equation} 
%   -\log p(\vm_t, \mSigma_t | \hat{\vm}_{t-1},\hat{\mSigma}_{t-1}) =  \frac{\beta}{1-\beta}\,
% \mathrm{KL}\!\left(
% \Normal (\hat{\vm}_{t-1},\hat{\mSigma}_{t-1})
% ~\middle\|~
% \Normal (\vm_t,\mSigma_t)
% \right)+\mbox{const}.
% \end{equation}
% The resulting optimization problem has exactly the objective in~\eqref{eq:full-optimization_problem}, and thus the MAP is also given by solving~\eqref{eq:full-optimization_problem}.

The resulting method
\eqref{eq:mean_update}--\eqref{eq:covariance_update} can be interpreted as a generalization of Adam's first and second moment estimates that take into account a full covariance, as opposed to a diagonal covariance~\citep{orvieto2025adamsecretsauce}.

In neural networks, storing this full covariance matrix is practically impossible. Indeed, even for the nanoGPT model~\citep{modded_nanogpt_2024}, which is a relatively small transformer language model with $124$ million parameters, storing the full covariance matrix would exceed $300$ terabytes.
To circumvent this cost, we will use structured approximations for the covariance matrix. 

Since both the Bayesian and optimization approaches boil down to solving a KL projection~\eqref{eq:KL-viewpoint}, we will from now on use this KL formulation to address this problem.

\section{Structured approximations for the covariance}
\label{sec:structured}
To reduce the costs of both storing and using the covariance matrix in the whitening step~\eqref{eq:whiten-steps}, we will impose that $\mSigma$ has some structure. Namely, $\mSigma$ is either diagonal or Kronecker-factored with full or diagonal factors. Let $\mathcal{S}$ denote the class of structured positive definite matrices we are considering. We will now choose $\mSigma \in \mathcal{S}$ that best solves the projection problem~\eqref{eq:KL-viewpoint}, that is
\begin{equation}\label{eq:KL-viewpoint-structured}
\begin{array}{ll}
\underset{\vm,\mSigma \in \mathcal{S}}{\mbox{minimize}}
&
\KL\!\left(
\Normal(\hat{\vm}_t,\hat{\mSigma}'_t)
\,\middle\|\,
\Normal(\vm,\mSigma)
\right).
\end{array}
\end{equation}
Regardless of the structure of $\mathcal{S}$, so long as the covariance is positive definite, solving for the mean always gives $\vm = \hat{\vm}_t$, which is the momentum update~\eqref{eq:mean_update}. Substituting back the solution $\hat{\vm}_t$  into the above gives 
\begin{equation}\label{eq:KL-viewpoint-cov-gen}
\begin{array}{ll}
\underset{\mSigma \in \mathcal{S}}{\mbox{minimize}}
&
\KL\!\left(
\Normal(0,\hat{\mSigma}'_t)
\,\middle\|\,
\Normal(0,\mSigma)
\right).
\end{array}
\end{equation}
Thus it only remains to solve for the covariance matrix.
After dropping multiplicative and additive constants, solving the above KL projection is equivalent to 
% \begin{equation} \label{eq:f-structured}
%   \hat{\mSigma}_t =  \argmin{\mSigma \in \mathcal{S}}  f(\mSigma), \qquad \mbox{where  } f(\mSigma) :=  \log|\mSigma| + \tr(\mSigma^{-1}\hat{\mSigma}'_t).
% \end{equation}
\begin{equation}\label{eq:f-structured}
\hat{\mSigma}_t =
\argmin{\mSigma \in \mathcal{S}}
f(\mSigma), \qquad f(\mSigma) := \log|\mSigma|
+ \tr(\mSigma^{-1}\hat{\mSigma}'_t).
\end{equation}
See~\Cref{prop:regularized-likelihood} for the details of the above.
The next two sections are dedicated to solving this \emph{covariance estimation} problem~\eqref{eq:f-structured} for different choices of structure $\mathcal{S},$ and testing the resulting Whitened Momentum method.

% \rob{Do we really want this next paragraph? It breaks the flow, and I'm not sure it adds much. It just says, this type of projections have been used by Bayesians' before.}

From the strict Bayesian viewpoint, this restriction to $\mathcal{S}$ breaks the exact Bayesian recursion,
 since the unconstrained MAP covariance $\hat{\mSigma}_t'$ given by~\eqref{eq:covariance_update} may not 
 admit the structure imposed by $\mathcal{S}$. But this does fit the \emph{update, then project} pattern used in 
 approximate sequential Bayesian inference~\citep{opper1999bayesian}; see~\Cref{rem:update-project} for more details.

This projection approach~\eqref{eq:KL-viewpoint-cov-gen} was also the main motivation for the KL-Shampoo method~\citep{lin2026klshampoo}, except that~\cite{lin2026klshampoo} use the uncentered gradient second moment
instead of $\hat{\mSigma}_t'$ as a target.

\section{Diagonal approximation and Adam}
\label{sec:Adam}

The simplest structure we can work with is a diagonal matrix. Choosing $\mathcal{S}$ to be the class of positive definite diagonal matrices, we have that the solution to our covariance estimation problem~\eqref{eq:f-structured} is given by $\hat{\mSigma}_t = \diag(\hat{\bm{\sigma}}_t)$, where
\begin{align}
  \hat{\bm{\sigma}}_t &= \beta  \hat{\bm{\sigma}}_{t-1} + \beta(1-\beta) (\vg_t - \hat{\vm}_{t-1}) \circ (\vg_t - \hat{\vm}_{t-1}),\label{eq:covariance_update-diag}
\end{align}
where $\circ$ denotes the elementwise product.
% such that $(\vv \circ \vz)_i = v_i z_i$ for $i=1,\ldots,d$, and $\vv,\vz \in \R^d$.
See~\Cref{sec:diagonal-projection-proof} for the derivation.
Though not immediately apparent, the update of Adam can be written with respect to this estimate of variance $\hat{\bm{\sigma}}_t$
~\eqref{eq:covariance_update-diag} and the mean estimate $\hat{\vm}_t$~\eqref{eq:mean_update}. Indeed, for $\beta_1=\beta_2=\beta$, $\varepsilon=0$, matching buffer initialization, and without bias correction, \cite{orvieto2025adamsecretsauce} showed that the Adam update is equivalent to
\begin{equation} \label{eq:adam-update-uncentered}
    \vw_{t+1} = \vw_t- \eta (\diag(\hat{\bm{\sigma}}_t+\hat{\vm}_t\circ \hat{\vm}_t))^{-1/2}\hat{\vm}_t .
\end{equation}
That is, instead of whitening the momentum using variance, the Adam update~\eqref{eq:adam-update-uncentered}  whitens using an estimate of the \emph{uncentered} second moment since $$\diag(\hat{\bm{\sigma}}_t+\hat{\vm}_t\circ \hat{\vm}_t )\approx \diag\!\left(\E{\vg_t\circ\vg_t}\right).$$
Consequently, Adam
does not fit exactly the Whitened Momentum step~\eqref{eq:whiten-steps}.

\noindent\begin{minipage}{0.45\textwidth}
\begin{figure}[H]
    \centering
    \includegraphics[width=\linewidth]{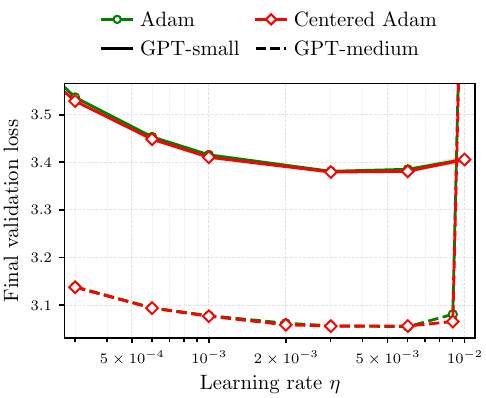}
    \caption{Final validation loss versus learning rate for Adam and Centered Adam on FineWeb. Solid lines: GPT-small; dashed lines: GPT-medium. }
    \label{fig:adam}
\end{figure}
\end{minipage}%
\hspace{0.05\textwidth}%
\begin{minipage}{0.5\textwidth}
The Whitened Momentum update, using $\hat{\bm{\sigma}}_t$
~\eqref{eq:covariance_update-diag} and the mean estimate~\eqref{eq:mean_update},  is given by
\begin{equation} \label{eq:adam-update-centered}
    \vw_{t+1} = \vw_t- \eta \diag(\hat{\bm{\sigma}}_t)^{-1/2}\hat{\vm}_t,
\end{equation}
which we will refer to as the  Centered Adam  method, since it uses the centered second moment (the covariance).

Despite this difference between Centered Adam and Adam, we  found that Centered Adam and Adam are very similar in practice.
\Cref{fig:adam} shows that the final validation losses achieved by running Adam
and Centered Adam at Chinchilla scale are almost the same across all learning rates; see~\Cref{app:adam-experiments} for the experimental details.
% \\[0.1em]

% Thus, although Adam itself does not admit a straightforward explanation, nor exactly fit into our framework, 
% we find that a very closely related method does admit a simple explanation. That is, Centered Adam is a Whitened Momentum update using a diagonal approximation.  
   
Adam has been the method of choice for over a decade, and yet has eluded a simple explanation. It might be that we need not explain Adam.
If Centered Adam  matches the performance of Adam on most problems of interest, then it would be enough to understand Centered Adam, which does admit a simple explanation as a Whitened Momentum method (with a diagonal covariance estimate).
It remains to be seen if Centered Adam performs as well as Adam at larger scales.
\end{minipage}

\paragraph{Relation to AdaBelief.}
AdaBelief~\citep{zhuang2020adabelief} accumulates squared
residuals around the updated momentum:
\[
\tilde{\bm{\sigma}}_t= \beta_2\tilde{\bm{\sigma}}_{t-1}
+(1-\beta_2)(\vg_t-\hat{\vm}_t)
\circ(\vg_t-\hat{\vm}_t),
\]
where $\beta_2 \in (0,1)$.
Since $\vg_t-\hat{\vm}_t
=\beta(\vg_t-\hat{\vm}_{t-1})$, setting $\beta_2=\beta$ gives
$\tilde{\bm{\sigma}}_t=\beta\hat{\bm{\sigma}}_t$ under matching
initialization. That is,  
without bias correction or numerical stabilizers,
Centered Adam 
with learning rate $\eta$ is equivalent to AdaBelief with $\beta_1=\beta_2=\beta$ and
learning rate $\sqrt{\beta}\eta$. Our framework provides a covariance-estimation derivation and whitening
interpretation of this update.

We now turn our attention to using a Kronecker-factored approximation to the covariance matrix, which will in turn result in a variant of Shampoo (more specifically KL-Shampoo), whose diagonal implementation outperforms the tested Shampoo implementations.

\section{A Matrix Normal model of the gradients (MN)} \label{sec:mn}
Most parameters of a neural network are matrices. Let
$\mW \in \R^{m \times n}$ be a matrix parameter and $\mG_t \in \R^{m \times n}$ the gradient with respect to $\mW$ at some given iterate $\mW_t$. 
Our method \eqref{eq:mean_update}--\eqref{eq:covariance_update} with a full covariance matrix is infeasible because storing $\hat{\mSigma}_t\in\R^{nm\times nm}$ would cost $\mathcal{O}(m^{2}n^{2})$ in storage. 
% Also, computing the inverse square root in~\eqref{eq:whiten-steps} would cost $\mathcal{O}(m^{3}n^{3})$
% operations with a direct solver.

However, once the covariance admits a Kronecker decomposition $\hat{\mSigma}_t = \hat{\mC}_t \otimes \hat{\mR}_t$, storage becomes feasible at $\mathcal{O}(n^{2}+m^{2})$. Thus to arrive at a practical algorithm we will now assume that \begin{equation}\label{eq:mat-norm-fact}
    \vg_t = \vec(\mG_t) \sim \Normal(\vm_t,  \mC_t \otimes \mR_t) .
\end{equation}  This distribution~\eqref{eq:mat-norm-fact} is also referred to as the Matrix Normal distribution which is denoted by
$$
\mG_t \sim \MN_{m,n}(\mM_t, \mR_t, \mC_t), \qquad  \vm_t = \vec(\mM_t).
$$  This factored form not only reduces storage, but can also be used to efficiently compute the whitened update~\eqref{eq:whiten-steps}. Indeed the whitened mean in~\eqref{eq:whiten-steps} is now
\begin{align} \label{eq:edzdzczecc}
(\mC_t\otimes\mR_t)^{-1/2}\vec(\mM_t) &=
(\mC_t^{-1/2}\otimes\mR_t^{-1/2})\vec(\mM_t)   = \vec(\mR_t^{-1/2} \mM_t \mC_t^{-1/2}), \end{align}
Thus we only need to approximate inverse square roots of an $n\times n$ and an $m \times m$ matrix, rather than an $nm \times nm$ matrix. 
Using~\eqref{eq:edzdzczecc}, a step along the whitened mean~\eqref{eq:whiten-steps} is 
\begin{equation}\label{eq:mn-sgd_update}
\mW_{t+1} = \mW_t - \eta \hat{\mR}_t^{-1/2} \hat{\mM}_t \hat{\mC}_t^{-1/2}.
\end{equation}
As a side note, using a Kronecker factorization leads to scale ambiguity since for all $a>0$,
\[
(a \mC) \otimes (a^{-1}\mR) = \mC \otimes \mR.
\]
Though this scale ambiguity does not affect the resulting product, it can result in numerical rounding issues if one of the factors races off to infinity, and the other to zero.  To avoid such numerical issues we normalize our estimates $(\hat{\mC}_t,\hat{\mR}_t)$ of these factors to have equal trace $\tr(\hat{\mR}_t) = \tr(\hat{\mC}_t)$.

Next we will compute this Kronecker factorization by solving~\eqref{eq:f-structured} restricted to Kronecker-factored matrices.

% \rob{Repetitive and not that interesting}
% One way to enforce this Kronecker factorization is by solving~\eqref{eq:full-optimization_problem} where the covariance $\mSigma$ is restricted to having a Kronecker factorization, that is
% \begin{align}
% (\hat{\vm}_t, \hat{\mR}_t, \hat{\mC}_t) \in  \argmin{\vm, \, \mSigma=\mC\otimes \mR} \Big\{ &-\log \Normal(\vg_t \mid \vm, \mSigma) \label{eq:mn-sigma_optimization}
%  % \\ &\hspace{5mm}  \notag
% + \frac{\beta}{1-\beta} \KL\left(\Normal(\hat{\vm}_{t-1}, \hat{\mSigma}_{t-1}) ~\|~ \Normal(\vm, \mSigma)\right)\Big\}.
%  \end{align}  %\vspace{-0.3cm}
% % as we prove in~\Cref{prop:regularized-likelihood}.

\subsection{Computing mean and covariance factors}

Choosing $\mathcal{S}$ to be the set of Kronecker-factored matrices in~\eqref{eq:f-structured} gives the following optimization problem
\begin{equation}\label{eq:mn-sigma_optimization}
\begin{array}{ll}
\underset{\substack{\mSigma = \mC \otimes \mR\\
                    \mR \succ 0,\;\mC \succ 0}}{\mbox{minimize}}
&
f(\mSigma) = \log|\mSigma|
+ \tr(\mSigma^{-1}\hat{\mSigma}'_t).
\end{array}
\end{equation}
Recall that $\hat{\mSigma}'_t$ is given by \eqref{eq:covariance_update}.
Initially, finding a global solution to~\eqref{eq:mn-sigma_optimization} may seem
hopeless, because the Kronecker constraint makes the problem nonconvex. Fortunately, despite $f(\mC\otimes\mR)$  being nonconvex, its minimizing Kronecker product is unique, and the factors can be found using alternating minimization over $\mR$ and $\mC$ followed by trace normalization.

\begin{restatable}[Unique solution and alternating minimization]{proposition}{mrcalternating} 
\label{lem:alternating}
Assume $0<\beta<1$ and that $\hat{\mR}_{t-1}$ and $\hat{\mC}_{t-1}$ are positive definite.
The function $f(\mC\otimes \mR)$ in \eqref{eq:mn-sigma_optimization} admits a unique global minimizer $(\hat{\mR}_t^{\star}, \hat{\mC}^{\star}_t)$ satisfying
$\tr(\hat{\mR}_t^{\star}) = \tr(\hat{\mC}_t^{\star})$. For steps $k=1, 2, \ldots$, if we alternately minimize
\[
  \hat{\mR}_{t}^{(k+1)} = \argmin{\mR \succ 0} f(\hat{\mC}_{t}^{(k)} \otimes \mR),
  \qquad 
  \hat{\mC}_t^{(k+1)} = \argmin{\mC \succ 0} f(\mC \otimes \hat{\mR}_t^{(k+1)}),
\]
and rescale the factors reciprocally so that $\tr(\hat{\mR}_{t}^{(k+1)}) = \tr(\hat{\mC}_{t}^{(k+1)})$, we get convergence from any positive definite initialization: $(\hat{\mR}_t^{(k)}, \hat{\mC}^{(k)}_t) \stackrel{k \to \infty}{\longrightarrow} (\hat{\mR}_t^{\star}, \hat{\mC}_t^{\star})$. Defining $\mX_t  = \mG_t - \hat{\mM}_{t-1}$, each block minimization, before rescaling, admits the closed-form solution:
\begin{align}
  \hat{\mR}_{t}^{(k+1)} &= \frac{\beta}{n}\tr\bigl((\hat{\mC}_{t}^{(k)})^{-1}\hat{\mC}_{t-1}\bigr)\hat{\mR}_{t-1}
  + \frac{\beta(1-\beta)}{n}\mX_t(\hat{\mC}_{t}^{(k)})^{-1}\mX_t^{\top},
  \label{eq:mn-R_update}\\
  \hat{\mC}_t^{(k+1)} &= \frac{\beta}{m}\tr\bigl((\hat{\mR}_t^{(k+1)})^{-1}\hat{\mR}_{t-1}\bigr)\hat{\mC}_{t-1}
  + \frac{\beta(1-\beta)}{m}\mX_t^{\top}(\hat{\mR}_t^{(k+1)})^{-1}\mX_t.
  \label{eq:mn-C_update}
\end{align}
\end{restatable}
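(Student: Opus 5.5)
The plan is to reduce everything to the structure of the target $\hat{\mSigma}'_t=\beta\,\hat{\mC}_{t-1}\otimes\hat{\mR}_{t-1}+\beta(1-\beta)\vec(\mX_t)\vec(\mX_t)^\top$. Here I use $\hat{\mSigma}_{t-1}=\hat{\mC}_{t-1}\otimes\hat{\mR}_{t-1}$ and $\vg_t-\hat{\vm}_{t-1}=\vec(\mX_t)$. Since $\log|\mC\otimes\mR|=n\log|\mR|+m\log|\mC|$, we get
\[
f(\mC\otimes\mR)=n\log|\mR|+m\log|\mC|+\beta\tr(\mC^{-1}\hat{\mC}_{t-1})\tr(\mR^{-1}\hat{\mR}_{t-1})+\beta(1-\beta)\tr(\mR^{-1}\mX_t\mC^{-1}\mX_t^\top).
\]
This uses the mixed-product rule $\tr((\mC^{-1}\otimes\mR^{-1})(\hat{\mC}_{t-1}\otimes\hat{\mR}_{t-1}))=\tr(\mC^{-1}\hat{\mC}_{t-1})\tr(\mR^{-1}\hat{\mR}_{t-1})$ and the identity $\vec(\mX)^\top(\mC^{-1}\otimes\mR^{-1})\vec(\mX)=\tr(\mR^{-1}\mX\mC^{-1}\mX^\top)$.

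\textbf{Closed-form block updates.} For fixed $\mC\succ0$, the objective is $n\log|\mR|+\tr(\mR^{-1}\mA(\mC))$ with
\[
\mA(\mC)=\beta\tr(\mC^{-1}\hat{\mC}_{t-1})\hat{\mR}_{t-1}+\beta(1-\beta)\mX_t\mC^{-1}\mX_t^\top\succ0 .
\]
This matrix is positive definite because $\hat{\mR}_{t-1}\succ0$ and $\beta>0$. By the standard Gaussian MLE fact (the same one behind \Cref{prop:alternating}), the unique minimizer is $\mR=\mA(\mC)/n$, which is \eqref{eq:mn-R_update}. By symmetry, \eqref{eq:mn-C_update} is the unique $\mC$-block minimizer. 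The rescaling $(\mR,\mC)\mapsto(a^{-1}\mR,a\mC)$ leaves $f$ unchanged, so the normalized iteration is still a monotone block-coordinate descent.

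\textbf{Existence and uniqueness.} Because $\hat{\mSigma}'_t\succeq\beta\hat{\mSigma}_{t-1}\succeq\lambda\mI$ with $\lambda>0$, we have $f(\mSigma)\ge\log|\mSigma|+\lambda\tr(\mSigma^{-1})$. This lower bound tends to $+\infty$ whenever an eigenvalue of $\mSigma$ tends to $0$ or $\infty$. On the trace-normalized slice $\tr(\mR)=\tr(\mC)$, eigenvalues of $\mC\otimes\mR$ bounded away from $0$ and $\infty$ force those of $\mR$ and $\mC$ to be bounded too. Hence $f$ is coercive on the slice and a minimizer exists.

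For uniqueness I would use geodesic convexity on the positive definite cone (in the style of Wiesel's analysis of the Kronecker MLE). Along geodesics $\mR(s)=\mR_0^{1/2}\exp(s\mP)\mR_0^{1/2}$ and $\mC(s)=\mC_0^{1/2}\exp(s\mQ)\mC_0^{1/2}$, the log-det terms are affine in $s$. The Kronecker product of two geodesics is again a geodesic, and $\tr(\mSigma(s)^{-1}\hat{\mSigma}'_t)$ is strictly convex along it unless the geodesic $\mC(s)\otimes\mR(s)$ is constant. That happens only when $\mQ\otimes\mI+\mI\otimes\mP=0$, i.e.\ $\mP=c\mI$ and $\mQ=-c\mI$, which is the pure scale direction. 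So $f$ is strictly geodesically convex modulo scale. This gives a unique minimizing product $\mC\otimes\mR$, and hence a unique trace-balanced pair $(\hat{\mR}_t^\star,\hat{\mC}_t^\star)$.

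\textbf{Convergence.} The values $f^{(k)}$ are nonincreasing. By coercivity, the normalized iterates stay in a compact subset of the slice. Since each block minimizer is unique and depends continuously on the other block, a standard Zangwill/Bertsekas block-coordinate argument shows every limit point is blockwise optimal, hence a stationary point of $f$. For a geodesically convex function, stationary points are global minimizers, so every limit point equals $(\hat{\mR}_t^\star,\hat{\mC}_t^\star)$ and the whole sequence converges.

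The step I expect to be the main obstacle is strict geodesic convexity modulo scale. I would first try to prove it by writing the trace term as $\sum_i \mathbf{u}_i^\top e^{-s(\mQ\otimes\mI+\mI\otimes\mP)}\mathbf{u}_i$, where $\hat{\mSigma}'_t$ (transported to the base point) equals $\sum_i\mathbf{u}_i\mathbf{u}_i^\top$ with the $\mathbf{u}_i$ spanning the whole space, which holds since $\hat{\mSigma}'_t\succ0$. Each summand is convex in $s$, and the sum is strictly convex unless the exponent matrix vanishes. A secondary point is the continuity and uniqueness argument needed to pass from monotone decrease to convergence of the iterates themselves.
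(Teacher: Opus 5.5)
Your proposal is correct and follows essentially the same route as the paper's proof: the Kronecker expansion of $f$, closed-form block minimizers from the log-determinant/trace inequality, and compact sublevel sets on the trace-balanced slice for existence. It also shares Wiesel-style geodesic convexity for uniqueness (the Kronecker product of factor geodesics is a geodesic), and monotone block descent whose limit points are blockwise optimal and hence, by geodesic convexity, global minimizers. The differences are cosmetic: you obtain uniqueness of the factors up to reciprocal scaling from the generator condition $\mQ\otimes\mI+\mI\otimes\mP=0$ rather than from the paper's block-comparison argument, and you cite a standard block-coordinate result where the paper writes out the limit-point argument explicitly.
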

%% \rob{This doesn't help}
% The mean and unrestricted covariance share the discount $\beta$; the additional
% factor $\beta$ multiplies the covariance innovation in~\eqref{eq:covariance_update}.
% The projected factor updates~\eqref{eq:mn-R_update}--\eqref{eq:mn-C_update}
% also contain trace-weighted prior terms.
The minimizer of $f(\mSigma)$ is unique because $f$ is strictly geodesically convex in the covariance
product $\mSigma=\mC\otimes\mR$ under the affine-invariant geometry; see \Cref{sec:results-from-2-proof-1} and ~\Cref{cor:mn-optimization_solution} for details.

%% \rob{This corollary repeats too much from the proposition. I think it's just enough in text to say what is going and cite the Corollary}
% \begin{restatable}[Solution to \eqref{eq:mn-sigma_optimization}]{corollary}{mnoptsolution}
% \label{cor:mn-optimization_solution}
% Assume $0 < \beta < 1$ and that $\hat{\mR}_{t-1}$ and $\hat{\mC}_{t-1}$ are positive
% definite. The covariance projection problem \eqref{eq:mn-sigma_optimization} admits a
% global solution, unique in $(\mR,\mC)$ under the normalization
% $\tr(\mR)=\tr(\mC)$. Together with $\vm=\hat{\vm}_t$, it also solves
% the regularized likelihood problem \eqref{eq:full-optimization_problem}
% restricted to $\mSigma=\mC\otimes\mR$,
% and it is unique in $(\vm, \mR, \mC)$ under the normalization
% $\tr(\mR) = \tr(\mC)$. This solution is given by the mean update
% \eqref{eq:mean_update}, together with the limit of the factor updates
% \eqref{eq:mn-R_update} and \eqref{eq:mn-C_update} iterated to convergence.
% \end{restatable}

Though it may take many iterations in $k$ for 
\eqref{eq:mn-R_update} and \eqref{eq:mn-C_update} to converge to the solution, we found that using $k=1$ was sufficient in practice.

\subsection{Algorithm}

% \paragraph{The MND method.} 

Computing the updates \eqref{eq:mn-R_update} and \eqref{eq:mn-C_update} requires storing and inverting the factor matrices, which can be computationally expensive and numerically sensitive, as discussed in~\Cref{sec:Newtonschulz}. Indeed, storing $\mR$ and $\mC$ costs $\mathcal{O}(n^2 +m^2)$ in memory, which can be significantly more than the $\mathcal{O}(nm)$ required for storing Adam's second-moment estimator.  Because of this cost in memory, our experiments are focused on using diagonal factors. For these diagonal factors we write $\hat{\mR}_t=\diag(\hat\vr_t)$ and $\hat{\mC}_t=\diag(\hat\vc_t)$, where $\hat\vr_t\in\R^m$ and $\hat\vc_t\in\R^n$. By restricting the factors to diagonal matrices in \eqref{eq:mn-sigma_optimization}, we obtain  
our MND (Matrix Normal Diagonal) method, see \Cref{alg:mnd-method}. The diagonal-factor updates are derived in~\Cref{sec:mnd-factor-proof}. We also implement the (approximate) full MN (Matrix Normal) method, see \Cref{alg:mn-method}.

\begin{algorithm} 
\small
\caption{MND method}
\label{alg:mnd-method}
\begin{algorithmic}[1]
\Require Gradient matrix $\mG_t \in \R^{m \times n}$, step size $\eta > 0$,
momentum coefficient $\beta \in (0, 1)$,
and $\varepsilon>0$
\State Initialize $\hat{\mM}_0\gets0$,
$\hat\vr_0\gets\vone_m/m$, and $\hat\vc_0\gets\vone_n/n$
\For{$t=1,2,\ldots$}
    \State $\mX_t\gets\mG_t-\hat{\mM}_{t-1}$, \quad
    $\hat{\mM}_t\gets\beta\hat{\mM}_{t-1}+(1-\beta)\mG_t$ \label{ln:X-samples}
    \State $\vr^+\gets
    \beta\hat\vr_{t-1}
    +\dfrac{\beta(1-\beta)}{n}
    (\mX_t\circ\mX_t)\mathrm{inv}_\varepsilon(\hat\vc_{t-1})$ \Comment{see \eqref{eq:clipinv}} \label{ln:rd-update}
    \State $\vc^+\gets
    \dfrac{\beta}{m}
    \bigl\langle\mathrm{inv}_\varepsilon(\vr^+),\hat\vr_{t-1}\bigr\rangle\hat\vc_{t-1}
    +\dfrac{\beta(1-\beta)}{m}
    (\mX_t\circ\mX_t)^\top
    \mathrm{inv}_\varepsilon(\vr^+)$ \label{ln:cd-update}
    \State $(\hat\vr_t,\hat\vc_t)
    \gets  \left(
       \alpha \vr^+,\;
        \alpha^{-1} \vc^+
    \right), \quad \mbox{where }  \alpha :=  
    \sqrt{\tfrac{
    \max\{\vone_n^\top\vc^+,\varepsilon\}}{\max\{\vone_m^\top\vr^+,\varepsilon\}}}$
    % \sqrt{
    % \max\{\vone_n^\top\vc^+,\varepsilon\} \big/\max\{\vone_m^\top\vr^+,\varepsilon\}}$ 
    \Comment{trace balancing} \label{ln:trace-balance}
    % \State $\Delta\mW_t\gets-\eta\,
    % \diag\!\bigl(\mathrm{isqrt}_\varepsilon(\hat\vr_t)\bigr)
    % \hat{\mM}_t
    % \diag\!\bigl(\mathrm{isqrt}_\varepsilon(\hat\vc_t)\bigr)$
    \State $\Delta\mW_t \gets -\eta \hat{\mM}_t \circ
    \Bigl(
    \mathrm{isqrt}_\varepsilon(\hat\vr_t)\,
    \mathrm{isqrt}_\varepsilon(\hat\vc_t)^\top
    \Bigr)$
    \State $\mW_{t+1}\gets\mW_t+\Delta\mW_t$
\EndFor
\end{algorithmic}
\end{algorithm}

\paragraph{Relation to KL-Shampoo.}
The MN method is closely related to KL-Shampoo~\citep{lin2026klshampoo},
whose full-matrix factor updates are
\begin{align}
\hat{\mR}_t
&=\beta_2\hat{\mR}_{t-1}
+\frac{1-\beta_2}{n}\mG_t\hat{\mC}_{t-1}^{-1}\mG_t^{\top},
\label{eq:kl-shampoo-R_update}\\
\hat{\mC}_t
&=\beta_2\hat{\mC}_{t-1}
+\frac{1-\beta_2}{m}\mG_t^{\top}\hat{\mR}_{t-1}^{-1}\mG_t.
\label{eq:kl-shampoo-C_update}
\end{align}
Here $\beta_2\in(0,1)$ weights the previous factor estimate and can differ
from the momentum discount $\beta$.
KL-Shampoo computes both new factors
from $(\hat{\mR}_{t-1},\hat{\mC}_{t-1})$.
In each MN iteration, the column-factor update instead uses the
newly computed row factor $\hat{\mR}_t^{(k+1)}$.
For $\beta_2=\beta$, the remaining differences between
\eqref{eq:mn-R_update}--\eqref{eq:mn-C_update} and KL-Shampoo~(\ref{eq:kl-shampoo-R_update})--\eqref{eq:kl-shampoo-C_update} are the absence of the extra
$\beta$ on the innovation, the replacement of the trace weights by $n$ and
$m$, and the use of $\mG_t$ instead of $\mX_t$.
These differences arise because KL-Shampoo  
does not jointly model the mean and the factors.
We will test each of these differences in our experiments further down in~\Cref{sec:mn-experiments,sec:further-mn-experiments}.

% \begin{equation} \label{eq:trace-term}
% \tr\bigl((\hat{\mC}_{t}^{(k)})^{-1}\hat{\mC}_{t-1}\bigr) \rightarrow n,
%     \qquad \tr\bigl((\hat{\mR}_t^{(k+1)})^{-1}\hat{\mR}_{t-1}\bigr) \rightarrow m,
% \end{equation}

\paragraph{Relation to Shampoo.} While we take a likelihood approach to finding $\hat{\mR}_t$ and $\hat{\mC}_t$, we could have taken a moment matching approach instead. 
 By \Cref{lem:mn-gram-moments} we have that 
\[ 
\E{(\mG_t - \mM_t)(\mG_t - \mM_t)^{\top}} = \tr(\mC_t) \mR_t,  
\qquad 
\E{(\mG_t - \mM_t)^{\top}(\mG_t - \mM_t)} = \tr(\mR_t) \mC_t.
\]
 We show how Shampoo can be seen as a method that uses an exponential moving average to fit these identities; see~\Cref{sec:shampoo-moment-matching}.
 % Using $(\mG_t - \mM_t)(\mG_t - \mM_t)^{\top}$ and $(\mG_t - \mM_t)^{\top}(\mG_t - \mM_t)$ as unbiased observations of $\mR_t$ and $\mC_t$ within an exponential moving average yields the original Shampoo method (\Cref{sec:shampoo-moment-matching}).

\section{Experiments with  MND}
\label{sec:mn-experiments}

We now compare MND, which uses diagonal covariance factors, against two popular variants of Shampoo: KL-Shampoo~\citep{lin2026klshampoo} and the official implementation of Distributed Shampoo~\citep{shi2023distributedshampoo}.

\paragraph{Numerical details.} Implementing our diagonal version  MND in a numerically stable way was straightforward. Since diagonal factors allow elementwise inverses and inverse square roots, we could explicitly clip these inverses to be numerically stable as follows
\begin{equation}
\mathrm{inv}_{\varepsilon}(v)
=\max\{v,\varepsilon\}^{-1},
\qquad
\mathrm{isqrt}_{\varepsilon}(v)
={\max\{v,\varepsilon\}}^{-1/2}.
\label{eq:clipinv}
\end{equation}
Finally, to take care of the scale ambiguity,
we enforce $\tr(\mR) = \vone_m^\top  \vr = \vone_n^\top \vc = \tr(\mC).$ 
% The reciprocal rescaling in line~\ref{ln:trace-balance} of~\Cref{alg:mnd-method}
% preserves $\mC\otimes\mR$. It equalizes the actual traces when neither
% trace guard is active.
% This is equivalent to applying the following trace balancing operation
% \begin{equation}
%     \mathcal B_{\rm d}(\vr,\vc)
%     =
%     \left(
%        \alpha \vr,\;
%         \alpha^{-1} \vc
%     \right), \quad \mbox{where }  \alpha := \textstyle \sqrt{\frac{
%     \max\{\vone_n^\top\vc,\varepsilon\}}{\max\{\vone_m^\top\vr,\varepsilon\}}}.
%     \label{eq:mnd-transient-balance}
% \end{equation}

To test what component of the MN method most contributes to its effectiveness, we consider three variants of MN. MND refers to the implementation in~\Cref{alg:mnd-method}, whereas
MND (uncentered) refers to~\Cref{alg:mnd-method} but where the samples are  uncentered $\mX_t=\mG_t$, and finally we use MND $(\beta_2)$ to refer to the variant that uses both uncentered 
 samples and a separate factor discount $\beta_2=0.99$ to update the factors. This 
 MND $(\beta_2)$ uses essentially the same update to the factors as KL-Shampoo (\ref{eq:kl-shampoo-R_update})--\eqref{eq:kl-shampoo-C_update}, 
except that the factors are diagonal, and the prior terms retain the trace weights.
 
 %% \rob{Generated text?}
% For the updated experiments, the factor recursion includes trace-weighted
% prior terms and dimension-normalized evidence, with one row/column sweep
% and trace balancing, as in \Cref{alg:mnd-method} (see \Cref{app:mnd-comparison}).

\paragraph{Results.} \Cref{fig:mn-d} compares the methods when training a
124M-parameter GPT-small model on FineWeb-10B~\citep{penedo2024fineweb}
for 2.5B tokens, at Chinchilla scale~\citep{chinchilla}.
We evaluate each learning rate with seeds 1, 2, and 42 and select the rate
with the lowest mean final validation loss. The method-specific grids and training settings are given in \Cref{app:mnd-comparison}.
%  the  learning grid
% and detailed results are in~\Cref{tab:mnd-updated-results}.
At their selected learning rates, centered and uncentered MND attain mean validation losses of
3.373 and 3.376, respectively, compared with 3.409 for
MND ($\beta_2$), 3.444 for KL-Shampoo, and 3.494 for
Distributed Shampoo. Diagonal covariance factors store
$\mathcal O(m+n)$ entries per matrix, compared with
$\mathcal O(m^2+n^2)$ for dense factors.
\Cref{fig:mn-d-medium} extends the comparison to GPT-medium, trained
for 7.09B tokens with batch size 768 and the same three seeds. 
At both model sizes, 
centered MND achieves the lowest mean final validation
loss over the tested learning rate grids, followed closely by uncentered MND.
The small gap suggests that centering has a limited effect on the best
validation loss in these experiments, consistent with the similar performance
of Centered Adam and Adam in~\Cref{sec:Adam}.
The tested MND ($\beta_2$) variant performs worse than both variants that
use a shared decay factor, so introducing a separate $\beta_2$
does not improve the best validation loss.
Among the two established implementations, KL-Shampoo achieves a lower
best mean validation loss on GPT-small, whereas Distributed Shampoo achieves
a lower best mean validation loss on GPT-medium. Both attain higher best
mean validation losses than all three MND variants at both model sizes. 

\paragraph{The MN method.} We also test the full-factor MN method in \Cref{sec:further-mn-experiments}, but we found it not to be as competitive as MND; see \Cref{fig:mn-lr-sens} for example. We find that MN underperformed because of numerical stability issues in approximating the matrix inverse. Indeed, 
MN needs matrix inverses in~\eqref{eq:mn-R_update} and~\eqref{eq:mn-C_update}, and inverse square roots in~\eqref{eq:mn-sgd_update}. We compute both with Newton--Schulz iterations~\citep{highamfunctions}, which use only matrix multiplications and, therefore, run fast on GPUs~\citep{amsel2025polar}. 
Newton--Schulz is numerically stable for the inverse square root~\citep{higham_stable_1997,highamfunctions}, but for the inverse it is unstable in \text{BF16}, the precision that makes it affordable, see \Cref{sec:Newtonschulz} for details. Another alternative  would be to use an exact SVD,  as is done periodically in the official Distributed Shampoo implementation~\cite{shi2023distributedshampoo}. But computing a full SVD is very slow on GPU, so we ruled out this approach. We conjecture that a stable, GPU-friendly matrix inversion is needed to make the full MN method truly competitive. 
% Full-factor MN is less sensitive to the learning rate than Distributed Shampoo and achieves a lower best validation loss (3.453 versus 3.494). The official implementation \footnote{\url{https://github.com/yorkerlin/KL-Methods}} of KL-Shampoo achieves a best validation loss of 3.444, improving on MN by 0.009 (\Cref{sec:further-mn-experiments}).

\begin{figure}
    \centering
    \includegraphics[width=0.98\linewidth]{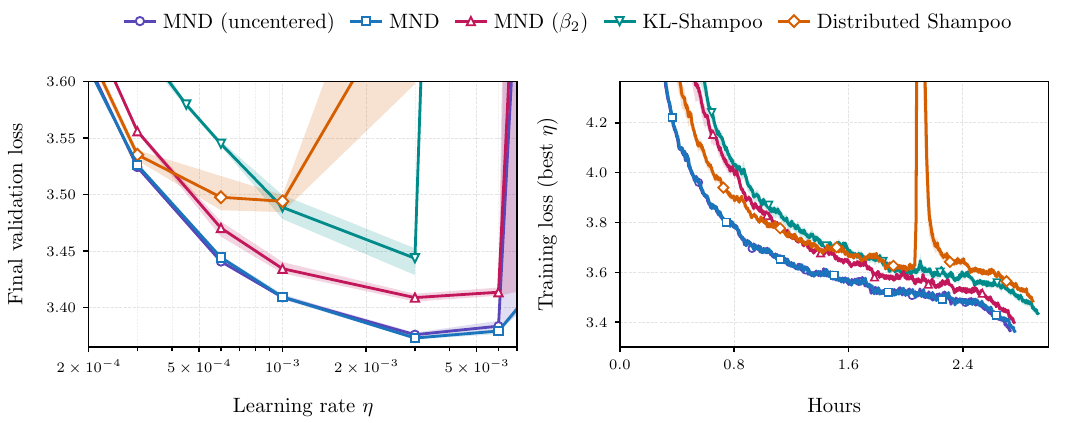}
    \caption{GPT-small on FineWeb, batch 512, 2.5B tokens.
    Left: validation loss versus learning rate. Right: training loss
    versus recorded training hours at tuned rates. Lines: three-seed means;
    shading: seed ranges. Times are cumulative step times averaged across
    seeds. Details: \Cref{app:mnd-comparison}.}
    \label{fig:mn-d}
\end{figure}

\begin{figure}[t]
    \centering
    \includegraphics[width=0.98\linewidth]{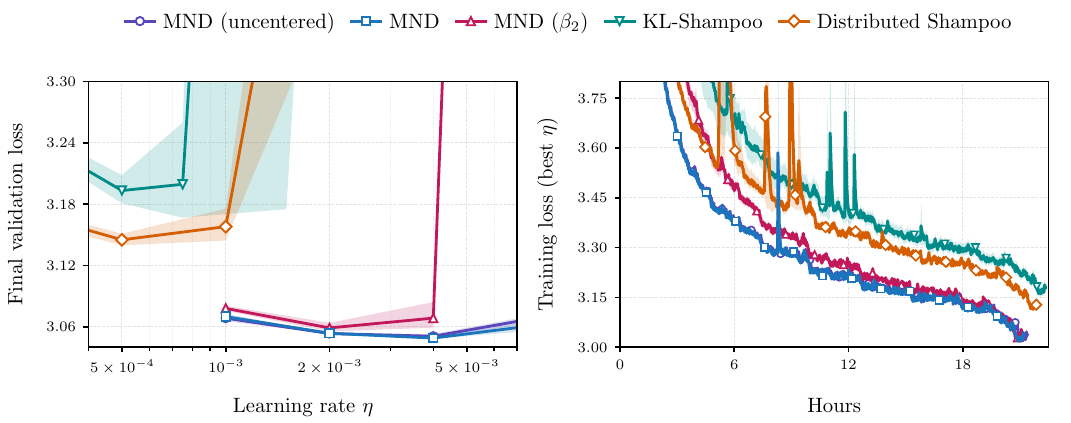}
    \caption{GPT-medium on FineWeb, batch 768, 7.09B tokens.
    Left: validation loss versus learning rate. Right: training loss
    versus recorded training hours at tuned rates. Lines: three-seed means;
    shading: seed ranges. Times are cumulative step times averaged across
    seeds.}
    \label{fig:mn-d-medium}
\end{figure}
% \FloatBarrier

\section{A statistical view of Muon}
\label{sec:mn-muon}

% \rob{Tanya I know you said you would re-write this today, but I took a stab at making it more like my slides already, and incorporating Paulo's comments.}
  
We now switch to the Muon method. We first show that the update direction of the Muon method is the mode of a Matrix Langevin distribution. We use this to motivate a Bayesian viewpoint of Muon. This Bayesian view of Muon requires assuming an isotropic model of the gradient. We use this to motivate a preconditioned view of Muon by instead assuming an anisotropic model of the gradient.
%%%%%%%%%%%%%%%%
\subsection{Muon as a Matrix Langevin mode}
%%%%%%%%%%%%%%%%%
The Muon update is given by
\begin{equation}\label{eq:muon}
\mW_{t+1}=\mW_t-\eta \hat{\mD}_t,
\qquad \hat{\mD}_t=\Polar(\hat{\mM}_t),
\end{equation}
where $\hat{\mM}_t$ is the momentum estimate
\begin{equation}\label{eq:mom-matrix}
\hat{\mM}_t=\beta\hat{\mM}_{t-1}+(1-\beta)\mG_t,
\qquad 0<\beta<1.
\end{equation}
The polar factor in~\eqref{eq:muon} is given by $\Polar(\hat{\mM}_t) = \mU\mV^\top$, where $\hat{\mM}_t=\mU\mS\mV^\top$ is a compact singular value decomposition. 
For simplicity, we assume that $\Polar$ is applied only to full rank matrices.
Thus, Muon models the \emph{direction} (singular vectors) of the momentum estimate and discards its singular values. This direction also has a geometric interpretation, since
the space of matrices with unit singular values is known as the Stiefel manifold~\citep{anthology},
\begin{equation}\label{eq:stiefel}
\St(m,n)=\{\mD\in\R^{m\times n} \mid
\mD^\top\mD=\mI_n\ \mbox{or}\ \mD\mD^\top=\mI_m\}.
\end{equation}
% In particular the polar factor is a nearest matrix on $\St(m,n)$,
% and can be viewed as the matrix analogue of a unit direction.
This gradient direction also has a statistical interpretation, as the mode of a
\emph{Matrix Langevin} distribution. The Matrix
Langevin distribution $\mathrm{ML}_{m,n}(\mM)$ has density
\begin{equation} \label{eq:mat-langevin}
p(\mD\mid\mM)\propto\exp(\langle\mM,\mD\rangle),
\qquad \mD\in\St(m,n),
\end{equation}
with respect to the uniform measure, where
$\langle\mM,\mD\rangle=\tr(\mM^\top\mD)$.
It favors directions aligned with $\mM$.
By~\Cref{lem:ml-mode}, the unique mode of~\eqref{eq:mat-langevin} is $\Polar(\mM)$. Thus, 
\begin{equation} \label{eq:muon-as-mode}
    \hat{\mD}_t = \Polar(\hat{\mM}_t)= \argmax{ \mD\in\St(m,n)} p(\mD\mid\hat{\mM}_t).
\end{equation}
That is,
Muon's direction
$\hat{\mD}_t$ is the unique mode of $\mathrm{ML}_{m,n}(\hat{\mM}_t)$.
Next we derive the entire Muon update, including the momentum recursion~\eqref{eq:mom-matrix}, from a Bayesian observation model.

\subsection{A Bayesian view of Muon}

Suppose the gradient is a noisy observation of a latent direction
$\mD_t\in\St(m,n)$, with isotropic Matrix Normal noise:
\begin{equation}\label{eq:muon-isotropic-observation}
p(\mG_t\mid\mD_t) = \MN_{m,n}(\mD_t,\mI_m,\mI_n).
\end{equation}
To both encode that we want only the direction and 
to make use of previous gradients, we define the prior as 
\begin{equation}\label{eq:muon-isotropic-prior}
p(\mD_t\mid\hat{\mM}_{t-1})
=\mathrm{ML}_{m,n}\!\left(\frac{\beta}{1-\beta}\hat{\mM}_{t-1}\right).
\end{equation}
As in~\Cref{sec:bayesian-and-optimizer-agree}, $\beta\in(0,1)$
controls forgetting: larger values give more weight to past information.
With $\hat{\mM}_0=0$, the initial direction prior is uniform. 
With the likelihood~\eqref{eq:muon-isotropic-observation} and the prior~\eqref{eq:muon-isotropic-prior}, we can now write down the posterior distribution of our desired direction $\mD_t$, which by Bayes' rule is
\begin{equation} \label{eq:muon-post} p(\mD_t\mid\mG_t,\hat{\mM}_{t-1})  \propto p(\mG_t\mid\mD_t) p(\mD_t\mid\hat{\mM}_{t-1}).\end{equation}
This posterior also has a Matrix Langevin distribution (see~\Cref{lem:ml-posterior}). 
Indeed, since
$\|\mD_t\|_F^2=\min\{m,n\}$ is constant on $\St(m,n)$, the likelihood~\eqref{eq:muon-isotropic-observation}
 is proportional to
$\exp(\langle\mG_t,\mD_t\rangle)$. Thus the likelihood times the prior~\eqref{eq:muon-post} gives
\begin{align}
p(\mD_t\mid\mG_t,\hat{\mM}_{t-1})
&\propto\exp\!\left(
\left\langle\frac{\beta}{1-\beta}\hat{\mM}_{t-1}+\mG_t,\mD_t\right\rangle
\right)
\;=\exp\!\left(\frac{1}{1-\beta}
\langle\hat{\mM}_t,\mD_t\rangle\right),
\label{eq:muon-isotropic-posterior}
\end{align}
where the updated momentum $\hat{\mM}_t$ is given by~\eqref{eq:mom-matrix}.
Since the posterior again has distribution $\mathrm{ML}_{m,n}(\frac{1}{1-\beta}\hat{\mM}_t)$, its unique mode, or maximum a posteriori estimate, is given by
\begin{equation} \label{eq:muon-as-map}
   \Polar(\hat{\mM}_t) = \Polar(\tfrac{1}{1-\beta}\hat{\mM}_t)= \argmax{ \mD\in\St(m,n)}p(\mD\mid\mG_t,\hat{\mM}_{t-1}),
\end{equation}
where we used~\Cref{lem:ml-mode}.
Equation~\eqref{eq:muon-as-map} gives precisely the Muon update~\eqref{eq:muon}.
Raising this posterior density to the power $\beta$ and renormalizing
gives the prior~\eqref{eq:muon-isotropic-prior} for the next iteration.
This discounting changes the concentration of the distribution but
preserves its mode, giving a recursive Bayesian interpretation of
momentum followed by polar decomposition.

\subsection{From Muon to MN-Muon}

The isotropic model~\eqref{eq:muon-isotropic-observation} treats all gradient coordinates as equally noisy. 
To account for different noise levels and correlations, we now introduce
positive definite row and column covariance factors
$\mR_t\in\R^{m\times m}$ and $\mC_t\in\R^{n\times n}$, and use the following anisotropic likelihood model for the gradient
\begin{equation}\label{eq:observe-mn-muon}
p(\mG_t\mid\mD_t,\mR_t,\mC_t)
=\MN_{m,n}(\mR_t^{1/2}\mD_t\mC_t^{1/2},\mR_t,\mC_t),
\qquad \mD_t\in\St(m,n).
\end{equation}
Whitening gives the isotropic model of the previous subsection
\[
p(\mR_t^{-1/2}\mG_t\mC_t^{-1/2}\mid\mD_t, \mR_t, \mC_t)
=\MN_{m,n}(\mD_t,\mI_m,\mI_n).
\]
Thus $\mC_t\otimes\mR_t$ is the covariance of $\vec(\mG_t)$, and
$\mR_t^{1/2}\mD_t\mC_t^{1/2}$ is its mean. Unlike the
MN model of~\Cref{sec:mn}, the mean depends on the covariance
factors.
% The covariance of $\vec(\mG_t)$ is $\mC_t\otimes\mR_t$, as
% in~\Cref{sec:mn}. The mean in the original coordinates,
% $\mR_t^{1/2}\mD_t\mC_t^{1/2}$, now depends on the covariance factors.
% This coupling introduces a crucial complication in estimating
% $\mD_t$, $\mR_t$ and $\mC_t$.
% Indeed, while the MN model of~\Cref{sec:mn} decouples the mean
% gradient update from the covariance updates, this is no longer possible
% when modeling the direction $\mD_t$.

It is convenient to start with the Bayesian approach.
Suppose we are given positive definite factors $\hat{\mR}_{t-1}$ and
$\hat{\mC}_{t-1}$, together with the momentum of previously whitened
gradients, $\widetilde{\mM}_{t-1}$, which is recursively defined as 
\begin{equation} \label{eq:white-mom}
  \widetilde{\mM}_{s} = \beta\widetilde{\mM}_{s-1}
+(1-\beta)\hat{\mR}_s^{-1/2}\mG_s\hat{\mC}_s^{-1/2},
\quad s=1,\ldots,t-1,\qquad \widetilde{\mM}_{0}=0.
\end{equation}
% The momentum need not lie on $\St(m,n)$, and is held fixed when
% estimating the current direction and factors.
% As in the isotropic case, a reasonable direction prior is
These quantities remain fixed while we estimate the current direction and
factors. The momentum need not lie on $\St(m,n)$.
As in the isotropic model, we use the direction prior
\begin{equation}
p(\mD_t\mid\widetilde{\mM}_{t-1})
=\mathrm{ML}_{m,n}\!\left(
\frac{\beta}{1-\beta}\widetilde{\mM}_{t-1}\right).
\label{eq:mn-muon-direction-prior}
\end{equation}
Again, $\beta\in(0,1)$ discounts past information, with the same
interpretation as in~\Cref{sec:bayesian-and-optimizer-agree}.
% {\color{magenta} T: Rob, note that there is a difference between covariance distribution in MN section and covariance factor distribution in Muon section.
% Specifially, IW density over \(\mSigma\) is proper when its degrees of freedom exceed $mn-1$, while evaluating IW kernel at \(\mSigma=\mC\otimes\mR\) does not give an integrable density, because the unrestricted rescaling \((\mR,\mC)\mapsto(s\mR,\mC/s)\) leaves the kernel unchanged and makes its integral under \(d\mR\,d\mC\) infinite.}
For the covariance factors, set $\mSigma_t=\mC_t\otimes\mR_t$ and
$\hat{\mSigma}_{t-1}=\hat{\mC}_{t-1}\otimes\hat{\mR}_{t-1}$, and choose the
Inverse-Wishart density kernel
% \rob{Maybe TMI giving the formula below. Maybe instead we just call it the Inverse-Wishart given certain parameters? Or, we call upon~\Cref{prop:regularized-likelihood}  again and use that 
% $p(\mR_t,\mC_t\mid\hat{\mR}_{t-1},\hat{\mC}_{t-1}) \propto \exp\!\left(-\frac{\beta}{1-\beta}\KL\!\left(
% \Normal(0,\hat{\mSigma}_{t-1})
% \,\middle\|\,
% \Normal(0,\mSigma_t)
% \right)\right)$?
% }
% {\color{magenta} T: see the discussion above, the normalization constant does not exist for the covariance factorization. How about we just mention that its Inverse-Wishart density kernel
% without specifying it?}
\begin{equation}\label{eq:mn-muon-covariance-prior}
p(\mR_t,\mC_t\mid\hat{\mR}_{t-1},\hat{\mC}_{t-1})
\propto
|\mSigma_t|^{-\frac{\beta}{2(1-\beta)}}
\exp\!\left(-\frac{\beta}{2(1-\beta)}
\tr(\mSigma_t^{-1}\hat{\mSigma}_{t-1})\right).
\end{equation}
This kernel is maximized when
$\mSigma_t=\hat{\mSigma}_{t-1}$.
% We use the product of the direction density and the factor kernel,
% conditional on $(\widetilde{\mM}_{t-1},\hat{\mR}_{t-1},\hat{\mC}_{t-1})$,
% Multiplying by the likelihood gives the posterior kernel
Multiplying the likelihood, direction
prior, and factor kernel gives the joint posterior kernel
\begin{equation}\label{eq:mn-muon-joint-posterior}
p(\mD,\mR,\mC\mid\mG_t,\widetilde{\mM}_{t-1},\hat{\mR}_{t-1},\hat{\mC}_{t-1})\propto 
p(\mG_t\mid\mD,\mR,\mC)\,
p(\mD\mid\widetilde{\mM}_{t-1})\,
p(\mR,\mC\mid\hat{\mR}_{t-1},\hat{\mC}_{t-1}).
\end{equation}
This product cannot be normalized over unrestricted factors: scaling one factor up and the other down leaves it unchanged, making its integral infinite.
% We define mode
% estimation directly by maximizing this posterior kernel.
We estimate the direction and factors jointly by maximizing this kernel
% \begin{equation}\label{eq:mn-muon-joint-map}
% \begin{array}{ll}
% \mbox{maximize} &
% p(\mG_t\mid\mD,\mR,\mC)\,
% p(\mD\mid\widetilde{\mM}_{t-1})\,
% p(\mR,\mC\mid\hat{\mR}_{t-1},\hat{\mC}_{t-1}) \\[1ex]
% \mbox{subject to} & \mD\in\St(m,n),\quad \mR,\mC\succ0.
% \end{array}
% \end{equation}
\begin{equation}\label{eq:mn-muon-joint-map}
\begin{array}{ll}
\underset{\substack{\mD\in\St(m,n)\\
                    \mR,\mC\succ0}}{\mbox{maximize}}
&
p(\mG_t\mid\mD,\mR,\mC)\,
p(\mD\mid\widetilde{\mM}_{t-1})\,
p(\mR,\mC\mid\hat{\mR}_{t-1},\hat{\mC}_{t-1}).
\end{array}
\end{equation}
% We take the direction and covariance priors to be independent.
% Because reciprocal rescaling of the factors leaves their Kronecker
% product unchanged, this kernel need not be normalizable over the
% factors. We use it to define the posterior mode, as detailed
% in~\Cref{sec:results-from-4-optimization}.

After observing $\mG_t$, we multiply the likelihood
\eqref{eq:observe-mn-muon} by the priors
\eqref{eq:mn-muon-direction-prior}--\eqref{eq:mn-muon-covariance-prior}.
For any fixed positive definite factors $(\mR_t,\mC_t)$,
the conditional posterior of $\mD_t$ is again Matrix Langevin
(\Cref{lem:ml-posterior}):
\begin{equation}
p(\mD_t\mid\mG_t,\mR_t,\mC_t,\widetilde{\mM}_{t-1})
=\mathrm{ML}_{m,n}\!\left(\frac{1}{1-\beta}\widetilde{\mM}_t\right),
\label{eq:mn-muon-momentum}
\end{equation}
where
\begin{equation}
\widetilde{\mM}_t
:=\beta\widetilde{\mM}_{t-1}
+(1-\beta)\mR_t^{-1/2}\mG_t\mC_t^{-1/2}.
\label{eq:white-momentum}
\end{equation}
Thus the conditional MAP direction is
\begin{equation}
\hat{\mD}_t=\Polar(\widetilde{\mM}_t)
=\argmax{\mD\in\St(m,n)}
p(\mD\mid\mG_t,\mR_t,\mC_t,\widetilde{\mM}_{t-1}).
\label{eq:mn-muon-direction}
\end{equation}
This is momentum followed by polar decomposition in whitened coordinates.
Setting $\mR_t=\mI_m$ and $\mC_t=\mI_n$ recovers the isotropic
Muon model.
% This has the same momentum-then-polar structure as Muon,
% now applied to whitened gradients. Fixing both factors to the identity
% recovers the isotropic Muon model.

The direction prior and conditional posterior are therefore conjugate
for fixed covariance factors, but this conjugacy does not extend to
the factors. As in~\Cref{sec:mn}, imposing Kronecker structure prevents
an exact conjugate covariance update; here the covariance-dependent
mean adds a further coupling.
Consequently, we cannot estimate $\mD_t$ independently of the factors
and then estimate the covariance.
We can, however, substitute the conditional MAP direction
\eqref{eq:mn-muon-direction} into the joint posterior and numerically
optimize the resulting \emph{profiled} objective over the factors.
Profiling maximizes over the direction rather than integrating it out.

% This leads to a recursive estimation procedure: the previous estimates
% determine the priors, the new gradient determines the likelihood, and
% their product defines the current joint MAP problem.
% Below we formulate this optimization problem and give tractable
% alternating updates for its direction and covariance blocks.
% After the numerical updates, we retain
% $(\widetilde{\mM}_t,\hat{\mR}_t,\hat{\mC}_t)$ to construct the next priors.
% Discounting the conditional direction posterior preserves its Matrix
% Langevin form and mode. For the factors, rebuilding the prior around
% the new estimates is an approximation to posterior propagation;
% see~\Cref{sec:results-from-4-discounting}.

After these updates, we retain
$(\widetilde{\mM}_t,\hat{\mR}_t,\hat{\mC}_t)$ for the next iteration.
Raising the conditional direction posterior to the power $\beta$ and
renormalizing gives the next direction prior. Rebuilding the factor
kernel around the new estimates approximates posterior propagation
(\Cref{sec:results-from-4-discounting}).

\subsection{Optimization problem}

Given the priors~\eqref{eq:mn-muon-direction-prior}--\eqref{eq:mn-muon-covariance-prior} and the observation model~\eqref{eq:observe-mn-muon}, computing the
joint posterior mode of $(\mD_t, \mR_t, \mC_t)$ is equivalent to solving the regularized likelihood
problem
\begin{align}
\underset{\mD \in \St(m,n),\ \mR,\mC\succ0}{\mbox{minimize}} \Big\{
&-\log \MN_{m,n}\bigl(\mG_t \mid \mR^{1/2}\mD\mC^{1/2}, \mR, \mC\bigr)
\label{eq:mn-muon-optimization_problem}\\
&\qquad + \tfrac{\beta}{1-\beta}\KL\!\left(\MN_{m,n}(\widetilde{\mM}_{t-1}, \mI_m, \mI_n) \,\middle\|\, \MN_{m,n}(\mD, \mI_m, \mI_n)\right) \notag\\
&\qquad + \tfrac{\beta}{1-\beta}\KL\!\left(\MN_{m,n}(0, \hat{\mR}_{t-1}, \hat{\mC}_{t-1}) \,\middle\|\, \MN_{m,n}(0, \mR, \mC)\right)\Big\}. \notag
\end{align}

\Cref{prop:mn-muon-map} establishes this equivalence. We can interpret the objective above analogously to \eqref{eq:full-optimization_problem}: (i) we want estimates that make the newly observed gradient $\mG_t$ likely; but (ii) since gradients are expected to move slowly across iterations, we want the whitened mean direction to stay close
to the previous whitened momentum, and likewise with the covariance.
We could penalize a single joint distribution instead, as in \eqref{eq:full-optimization_problem}; however, that does not propagate the whitened momentum that recovers Muon.

Unlike the optimization problem underlying MN~\eqref{eq:mn-sigma_optimization}, the optimization problem above~\eqref{eq:mn-muon-optimization_problem} couples the
direction $\mD$ and covariance factors $\mR,\mC$ through the whitened gradient.
The problem is nonconvex, but we can alternate between tractable updates of
these three variables, holding the others fixed (\Cref{prop:mn-muon-blocks}). 
\begin{restatable}[Blockwise updates]{proposition}{mnmuonblocks}
\label{prop:mn-muon-blocks}
Each block of \eqref{eq:mn-muon-optimization_problem} has a simple solution given the
others:
\begin{enumerate}[(i)]
\item \emph{Direction $\mD$.} For fixed factors, minimizing over $\mD\in\St(m,n)$ gives
$\Polar\bigl(\beta\widetilde{\mM}_{t-1} + (1-\beta)\mR^{-1/2}\mG_t\mC^{-1/2}\bigr)$, that is,
the momentum update~\eqref{eq:white-momentum} followed by~\eqref{eq:mn-muon-direction}.
% It is unique when this argument has full rank.
\item \emph{Factors.} For fixed $\mD$ and $\mC$, the objective is strictly convex in
$\mR^{-1/2}$, so it has a unique minimizer over $\mR$; likewise for $\mC$.
\item \emph{Diagonal factors.} If $\mR$ and $\mC$ are diagonal, both factor minimizers have
closed forms.
\end{enumerate}
\end{restatable}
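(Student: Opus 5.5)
We first write the objective of \eqref{eq:mn-muon-optimization_problem} explicitly, dropping additive constants. With $\mSigma=\mC\otimes\mR$ and $\mY:=\mR^{-1/2}\mG_t\mC^{-1/2}$, the negative log-likelihood is
\[
\tfrac{n}{2}\log|\mR|+\tfrac{m}{2}\log|\mC|+\tfrac12\|\mY-\mD\|_F^2 .
\]
The direction KL term equals $\tfrac12\|\widetilde{\mM}_{t-1}-\mD\|_F^2$. By \Cref{prop:regularized-likelihood}, the covariance KL term equals, up to constants,
\[
\tfrac12\Bigl[n\log|\mR|+m\log|\mC|+\tr(\mR^{-1}\hat{\mR}_{t-1})\tr(\mC^{-1}\hat{\mC}_{t-1})\Bigr].
\]
Here we used $|\mC\otimes\mR|=|\mC|^m|\mR|^n$ and the trace identity for Kronecker products.

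For (i), with $\mR,\mC$ fixed, the only $\mD$-dependence is in the two quadratic terms. On $\St(m,n)$ we have $\|\mD\|_F^2=\min\{m,n\}$, which is constant. Expanding the squares therefore leaves $-\langle \mY+\tfrac{\beta}{1-\beta}\widetilde{\mM}_{t-1},\mD\rangle$ plus a constant. This is $-\tfrac{1}{1-\beta}\langle \beta\widetilde{\mM}_{t-1}+(1-\beta)\mY,\mD\rangle$, and by \Cref{lem:ml-mode} it is minimized by the polar factor of $\beta\widetilde{\mM}_{t-1}+(1-\beta)\mY$. That is exactly \eqref{eq:white-momentum} followed by \eqref{eq:mn-muon-direction}.

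For (ii), fix $\mD$ and $\mC$ and substitute $\mP:=\mR^{-1/2}\succ0$. Then $\log|\mR|=-2\log|\mP|$, so the $\mR$-dependent part becomes
\[
h(\mP)=-\tfrac{n}{1-\beta}\log|\mP|+\tfrac12\|\mP\mA-\mD\|_F^2+\tfrac{\beta}{2(1-\beta)}\,c\,\tr(\mP^2\hat{\mR}_{t-1}),
\]
where $\mA=\mG_t\mC^{-1/2}$ and $c=\tr(\mC^{-1}\hat{\mC}_{t-1})>0$.

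Each term is convex in $\mP$ on the positive definite cone. The first is strictly convex because $-\log\det$ is strictly convex. The second is a convex quadratic in $\mP$. The third, $\tr(\mP\hat{\mR}_{t-1}\mP)=\|\hat{\mR}_{t-1}^{1/2}\mP\|_F^2$, is a strictly convex quadratic because $\hat{\mR}_{t-1}\succ0$. Hence $h$ is strictly convex, so any minimizer is unique.

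Existence follows from coercivity. As $\mP$ approaches the boundary of the cone, $-\log|\mP|\to\infty$. As $\|\mP\|\to\infty$, the strictly convex quadratic term dominates the logarithm. So $h$ attains its minimum in the interior. Since $\mP\mapsto\mP^{-2}$ is a bijection of the cone, the minimizer over $\mR$ is also unique. The $\mC$ case is symmetric, using $\mQ=\mC^{-1/2}$ and $\mY=\mR^{-1/2}\mG_t\mQ$.

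For (iii), restrict to $\mP=\diag(\vu)$ with $u_i>0$. The objective then separates across rows:
\[
h=\sum_i\Bigl[-\tfrac{n}{1-\beta}\log u_i+\tfrac12 a_i u_i^2-b_i u_i\Bigr],
\]
with
\[
a_i=\|\mA_{i,:}\|^2+\tfrac{\beta c}{1-\beta}(\hat{\mR}_{t-1})_{ii},\qquad b_i=\langle \mA_{i,:},\mD_{i,:}\rangle .
\]
Setting the derivative to zero gives $a_iu_i^2-b_iu_i-\tfrac{n}{1-\beta}=0$. Its unique positive root is
\[
u_i=\frac{b_i+\sqrt{b_i^2+4a_in/(1-\beta)}}{2a_i},
\]
and then $\mR_{ii}=u_i^{-2}$; the column factor is handled identically.

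The main obstacle is the coupling term $\|\mP\mA-\mD\|^2$ in (ii). For full factors this rules out a closed form, so the argument must rest on the convexity and coercivity above rather than an explicit solution. We also need to check that $a_i>0$, which holds since $\hat{\mR}_{t-1}\succ0$ and $c>0$.
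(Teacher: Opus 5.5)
Your proof is correct and follows essentially the same route as the paper: the direction block reduces to maximizing a linear functional over $\St(m,n)$ via \Cref{lem:ml-mode}, the substitution $\mR^{-1/2}$ turns the factor block into a $-\log\det$ term plus convex quadratics, which is strictly convex and coercive, and the diagonal case splits into scalar quadratics whose positive root agrees with the paper's $\Psi_n(u_i,v_i)$, since the paper's coefficients are $(1-\beta)a_i$ and $(1-\beta)b_i$ in your notation. The differences are minor: you argue strict convexity term by term rather than through the second derivative, and part (i) implicitly needs the argument of $\Polar$ to have full rank for \Cref{lem:ml-mode} to apply, as the paper also assumes.
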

The explicit factor updates and the first-order conditions
 characterizing the dense-factor minimizers are derived
  in~\Cref{sec:results-from-4-solution-1}.

We minimize~\eqref{eq:mn-muon-optimization_problem} by alternating over the blocks, starting from
the previous estimates: update $\mR$, then $\mC$, rescale them so that $\tr(\mR) = \tr(\mC)$, and
then update $\mD$. The rescaling changes nothing, since the objective depends on the factors only
through $\mC \otimes \mR$.

\begin{restatable}[Descent and limit points]{proposition}{mnmuonconvergence}
\label{prop:mn-muon-convergence}
Suppose the exact block updates of \Cref{prop:mn-muon-blocks} are repeated in the
order $\mR$, $\mC$, trace balancing, then $\mD$, starting from any
$\mD \in \St(m,n)$ and $\mR,\mC \succ 0$.
Then the objective \eqref{eq:mn-muon-optimization_problem} never increases, and the
trace-balanced iterates remain bounded. Every limit point is a critical point;
at that point, each of $\mD$, $\mR$, and $\mC$ minimizes the objective with the
other two held fixed.
\end{restatable}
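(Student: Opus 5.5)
Write $F(\mD,\mR,\mC)$ for the objective in \eqref{eq:mn-muon-optimization_problem}. The proof is the standard block-coordinate-descent argument, adapted to two features of $F$: it is invariant under the rescaling $(\mR,\mC)\mapsto(a\mR,a^{-1}\mC)$, and the $\mD$-block lives on the compact set $\St(m,n)$.

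\textbf{Monotonicity.} Each exact block update of \Cref{prop:mn-muon-blocks} minimizes $F$ over one block with the others fixed, so $F$ cannot increase. Trace balancing leaves $\mC\otimes\mR$ unchanged, and $F$ depends on the factors only through this product, so balancing leaves $F$ unchanged. Hence $F(\mD^{(k)},\mR^{(k)},\mC^{(k)})$ is nonincreasing.

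To see that $F$ is bounded below, expand it up to constants as
\[
\tfrac{1}{1-\beta}\bigl(\log|\mSigma| + \beta\tr(\mSigma^{-1}\hat{\mSigma}_{t-1})\bigr)
+ \tfrac12\|\mR^{-1/2}\mG_t\mC^{-1/2}-\mD\|_F^2
+ \tfrac{\beta}{2(1-\beta)}\|\widetilde{\mM}_{t-1}-\mD\|_F^2 .
\]
Up to the exact scalars, the $\log$-determinant weight combines the likelihood's $\tfrac12$ with the KL's $\tfrac{\beta}{2(1-\beta)}$. The function $\log|\mSigma|+c\,\tr(\mSigma^{-1}\hat{\mSigma}_{t-1})$ with $c>0$ is bounded below and coercive on positive definite matrices: it tends to $+\infty$ as any eigenvalue of $\mSigma$ tends to $0$ or $\infty$. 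The remaining terms are nonnegative. So the values $F(\cdot)$ converge.

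\textbf{Boundedness.} Since $F$ stays below its initial value, the sublevel-set coercivity bounds every eigenvalue of $\mSigma^{(k)}=\mC^{(k)}\otimes\mR^{(k)}$ away from $0$ and $\infty$. The eigenvalues of a Kronecker product are the products $\lambda_i(\mC)\lambda_j(\mR)$. Trace balancing fixes the split between the two factors: $\tr(\mR)=\tr(\mC)$ together with $\tr(\mR)\tr(\mC)=\tr(\mSigma)$, which is bounded above and below. This gives uniform upper and lower bounds on the eigenvalues of $\mR^{(k)}$ and $\mC^{(k)}$ individually. Hence the balanced factors stay in a compact subset of the positive definite cone, and $\mD^{(k)}$ lies in the compact $\St(m,n)$. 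Limit points therefore exist and have positive definite factors.

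\textbf{Limit points.} This is where I expect the main difficulty, because the block minimizers are not all unique. The factor blocks are unique by strict convexity in $\mR^{-1/2}$ and $\mC^{-1/2}$ (\Cref{prop:mn-muon-blocks}(ii)), but the $\mD$-block is unique only when its argument has full rank.

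My first attempt avoids uniqueness for $\mD$ by ordering the argument around it.
\begin{enumerate}[(a)]
\item Take a subsequence converging to $(\bar\mD,\bar\mR,\bar\mC)$.
\item By continuity of $F$ and convergence of the values, for every $\mD'\in\St(m,n)$ we have $F(\bar\mD,\bar\mR,\bar\mC)\le F(\mD',\bar\mR,\bar\mC)$, because each $\mD^{(k)}$ minimizes $F(\cdot,\mR^{(k)},\mC^{(k)})$ and the inequality passes to the limit. So $\bar\mD$ is a $\mD$-block minimizer.
\item For the factor blocks, use the unique-minimizer maps $\mR^+(\mD,\mC)$ and $\mC^+(\mD,\mR)$, which are continuous by a Berge maximum-theorem argument on the compact sublevel sets. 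The decrease from the $\mR$-step tends to zero, since the values converge. Then strict convexity, through a uniform quadratic-growth bound on the compact set, gives $\|\mR^{(k+1)}-\mR^{(k)}\|\to0$, and likewise for $\mC$ after accounting for balancing.
\item Consequently, along the subsequence the next iterates converge to the same limit, and $\bar\mR=\mR^+(\bar\mD,\bar\mC)$ and $\bar\mC=\mC^+(\bar\mD,\bar\mR)$ hold up to the rescaling, which does not change $F$. So each block is optimal at the limit point.
\end{enumerate}

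To finish, I would argue that blockwise optimality gives a vanishing Riemannian gradient in each block: in $\mD$ on the Stiefel manifold, and in $\mR$ and $\mC$ in the open cone. Since $F$ is smooth, this means the full gradient vanishes, so the limit point is critical.
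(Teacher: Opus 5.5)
Your proof is correct. Monotonicity, the lower bound and the compactness of the balanced factors match the paper's Steps 1--2. Your lower bound, which just drops the nonnegative terms $\tfrac12\|\mR^{-1/2}\mG_t\mC^{-1/2}-\mD\|_F^2$ and $\tfrac{\beta}{2(1-\beta)}\|\widetilde{\mM}_{t-1}-\mD\|_F^2$, is even slightly simpler than the paper's Cauchy--Schwarz bound.

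The genuine difference is how you obtain factor-block optimality at a limit point. You first prove asymptotic regularity. Each factor step's decrease is squeezed by $h_k-h_{k+1}\to0$, and a uniform growth bound turns this into $\|\mR_\circ^{(k+1)}-\mR^{(k)}\|\to0$. The same holds for $\mC$, and the balancing scalar then tends to one. Strict convexity alone does not give such a bound. What makes it work is that the row objective, as a function of $\mA=\mR^{-1/2}$, has Hessian at least $\tr(\bm{H}\mQ\bm{H})\ge\lambda_{\min}(\mQ)\|\bm{H}\|_F^2$. Here $\lambda_{\min}(\mQ)$ is uniformly bounded below while $\mC$ stays in a compact set, so the objective is uniformly strongly convex. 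A compactness-plus-uniqueness argument would also do.

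The paper never shows that successive iterates get close. It passes the sandwiched intermediate values $F(\mD^{(k_j)},\mR_\circ^{(k_j+1)},\mC^{(k_j)})$ and $F(\mD^{(k_j)},\mR_\circ^{(k_j+1)},\mC_\circ^{(k_j+1)})$ to the limit, using continuity of the factor maps from the implicit function theorem. This gives $F(\bar{\mD},\bar{\mR},\bar{\mC})=F(\bar{\mD},\mR^+,\bar{\mC})=F(\bar{\mD},\mR^+,\mC^+)$, and uniqueness of the block minimizers then yields $\bar{\mR}=\mR^+$ and $\bar{\mC}=\mC^+$.

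The paper's route is shorter and needs only uniqueness and continuity. Yours needs a quantitative growth estimate and bookkeeping for the balancing. In exchange it gives the extra fact that $\|\mR^{(k+1)}-\mR^{(k)}\|\to0$ and $\|\mC^{(k+1)}-\mC^{(k)}\|\to0$. With that fact you do not need continuity of the minimizer maps (your Berge step) at all. Passing $F(\mD^{(k_j)},\mR_\circ^{(k_j+1)},\mC^{(k_j)})\le F(\mD^{(k_j)},\mR,\mC^{(k_j)})$ to the limit for each fixed $\mR$ already gives row optimality, and likewise for $\mC$. Your ``up to the rescaling'' caveat is also unnecessary: the balanced iterates themselves converge along the subsequence, so the block-optimality identities hold exactly.
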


Unlike \Cref{lem:alternating}, this result does not guarantee a globally optimal
solution. The coupling between direction and covariance breaks the geodesic convexity
exploited in \Cref{sec:mn}, so which critical point is reached can depend on the warm start.
For a fixed gradient and fixed previous estimates, one exact sweep from a
feasible direction also decreases this objective. 
% The training implementation
% uses clipping and a finite-step polar approximation, so this guarantee does
% not automatically apply to its numerical updates or to the changing
% objectives across training iterations.

\subsection{Algorithm}

% \paragraph{The MN-Muon method.}
% Let $\mW \in \R^{m \times n}$ be a matrix parameter and $\mG_t$ the stochastic gradient at the iterate
% $\mW_t$. The MN-Muon step moves along the estimated whitened mean direction,
% \begin{equation}
% \mW_{t+1} = \mW_t - \eta\hat{\mD}_t,
% \quad \mbox{where} \quad
% \hat{\mD}_t = \Polar(\widetilde{\mM}_t),
% \label{eq:mn-muon-sgd_update}
% \end{equation}
% and where $\widetilde{\mM}_t$ is given by \eqref{eq:mn-muon-momentum} with the factors
% $(\hat{\mR}_t, \hat{\mC}_t)$ obtained from one row-then-column sweep of
% \Cref{prop:mn-muon-blocks}, evaluated at $\mD = \hat{\mD}_{t-1}$. This mirrors
% \eqref{eq:mn-sgd_update}: MN steps along the whitened mean, and MN-Muon along
% the whitened mean direction.
% % Dense factors cost $\mathcal{O}(m^{2}+n^{2})$ memory and require a numerical solve of the first-order conditions. 
% As in Muon, the polar factor can be approximated by
% Newton--Schulz.

% \paragraph{The MND-Muon method.}
Restricting the factors to diagonal matrices $\mR = \diag(\vr)$ and
$\mC = \diag(\vc)$, a row--column sweep with
$\mD=\hat{\mD}_{t-1}$ held fixed uses
\[
\vr^{+} = \Psi_n\bigl(\mQ_t\vc^{-1}, \mB_t\vc^{-1/2}\bigr),
\qquad
\vc^{+} = \Psi_m\bigl(\mQ_t^\top(\vr^{+})^{-1}, \mB_t^\top(\vr^{+})^{-1/2}\bigr),
\]
where vector powers are entrywise, $\circ$ denotes entrywise multiplication,
\begin{equation}
\mQ_t = \beta\hat{\vr}_{t-1}\hat{\vc}_{t-1}^\top + (1-\beta)(\mG_t \circ \mG_t),
\qquad
\mB_t = (1-\beta)(\mG_t \circ \hat{\mD}_{t-1}),
\label{eq:mn-muon-statistics}
\end{equation}
and $\Psi_q$ is applied entrywise for $q\in\{m,n\}$, with
\begin{equation}
\Psi_q(u, v) := \left(\frac{\sqrt{v^2 + 4qu} - v}{2q}\right)^2 .
\label{eq:mn-muon-variance-root}
\end{equation}
We refer to the resulting method as the MND-Muon (Matrix Normal Diagonal Muon) method (\Cref{alg:mn-muon-d-method}). The numerical factor solver uses
$\Psi_{q,\varepsilon}(u,v)=\max\{\Psi_q(u,v),\varepsilon\}$,
clipped inverses and inverse square roots~\eqref{eq:clipinv}, and reciprocal
trace balancing. The latter preserves the covariance product and equalizes
traces when its trace guards are inactive.
We initialize $\hat{\mD}_0=0$ so that the first factor sweep has
$\mB_1=0$; this is an initialization convention rather than a point
on $\St(m,n)$. 
The first polar update produces a feasible direction,
so the block descent result in exact arithmetic applies to subsequent sweeps.

% \rob{ Alternative: With an exact polar computation, the first polar update produces a feasible direction.
% The exact-block descent result then applies to subsequent sweeps when clipping is inactive.}
% \eqref{eq:mnd-transient-balance}.
% where we use the floors of \eqref{eq:clipinv} together with
% $\Psi_{q,\varepsilon}(u, v) = \max\{\Psi_q(u, v), \varepsilon\}$, and balance the factors with
% $\mathcal{B}_{\rm d}$ from \eqref{eq:mnd-transient-balance}.

% We initialize $\hat{\mD}_0 = 0$. This choice is not on $\St(m,n)$, but it gives
% $\mB_1 = 0$, so the first factor sweep uses $\Psi_q(u, 0) = u/q$, and the first polar
% factor $\hat{\mD}_1$ is feasible. For $t \geq 2$, exact block updates do not increase
% the objective. 
% Active floors and approximate polar factors need not preserve this property.
{\small
\begin{algorithm}
\small
\caption{MND-Muon method}
\label{alg:mn-muon-d-method}
\begin{algorithmic}[1]
\Require Gradient matrix $\mG_t \in \R^{m \times n}$, step size $\eta > 0$,
momentum coefficient $\beta \in (0, 1)$, and $\varepsilon > 0$
\State Initialize $\widetilde{\mM}_0 \gets 0$, $\hat{\mD}_0 \gets 0$,
$\hat\vr_0 \gets \vone_m/m$, and $\hat\vc_0 \gets \vone_n/n$
\For{$t = 1, 2, \ldots$}
    \State $\mQ_t \gets \beta\hat\vr_{t-1}\hat\vc_{t-1}^\top + (1-\beta)(\mG_t \circ \mG_t)$
    \State $\mB_t \gets (1-\beta)(\mG_t \circ \hat{\mD}_{t-1})$
    \Comment{previous direction}
    \State $\vr^+ \gets \Psi_{n,\varepsilon}\!\left(
      \mQ_t\,\mathrm{inv}_\varepsilon(\hat\vc_{t-1}),\,
      \mB_t\,\mathrm{isqrt}_\varepsilon(\hat\vc_{t-1})\right)$
    \Comment{see \eqref{eq:mn-muon-variance-root}}
    \State $\vc^+ \gets \Psi_{m,\varepsilon}\!\left(
      \mQ_t^\top\mathrm{inv}_\varepsilon(\vr^+),\,
      \mB_t^\top\mathrm{isqrt}_\varepsilon(\vr^+)\right)$
    \Comment{uses the new row factor}
    % \State $(\hat\vr_t, \hat\vc_t) \gets \mathcal{B}_{\rm d}(\vr^+, \vc^+)$    \Comment{see \eqref{eq:mnd-transient-balance}}
    \State $(\hat\vr_t,\hat\vc_t)
    \gets  \left(
       \alpha \vr^+,\;
        \alpha^{-1} \vc^+
    \right), \quad \mbox{where }  \alpha :=  
    \sqrt{\tfrac{
    \max\{\vone_n^\top\vc^+,\varepsilon\}}{\max\{\vone_m^\top\vr^+,\varepsilon\}}}$
    % \sqrt{
    % \max\{\vone_n^\top\vc^+,\varepsilon\} \max\{\vone_m^\top\vr^+,\varepsilon\}^{-1}}$ 
    \Comment{trace balancing}
    % \State $\widetilde{\mM}_t \gets \beta\widetilde{\mM}_{t-1}
    %   + (1-\beta)\diag\!\bigl(\mathrm{isqrt}_\varepsilon(\hat\vr_t)\bigr)
    %   \mG_t\diag\!\bigl(\mathrm{isqrt}_\varepsilon(\hat\vc_t)\bigr)$
    \State $\widetilde{\mM}_t \gets \beta\widetilde{\mM}_{t-1}
    + (1-\beta)\,\mG_t \circ
    \Bigl(
    \mathrm{isqrt}_\varepsilon(\hat\vr_t)\,
    \mathrm{isqrt}_\varepsilon(\hat\vc_t)^\top
    \Bigr)$
    \Comment{whitened momentum}
    \State $\hat{\mD}_t \gets \Polar(\widetilde{\mM}_t)$
    \State $\mW_{t+1} \gets \mW_t - \eta\hat{\mD}_t$
\EndFor
\end{algorithmic}
\end{algorithm}
}
\paragraph{Relation to Muon.}
If the factors are fixed at $\mR = \mI_m$ and $\mC = \mI_n$, 
then~\Cref{alg:mn-muon-d-method} reduces to the Muon method.
% the objective
% \eqref{eq:mn-muon-optimization_problem} reduces to
% $-2\langle \beta\widetilde{\mM}_{t-1} + (1-\beta)\mG_t, \mD \rangle$ plus a constant.
% Its minimizer is $\hat{\mD}_t = \Polar(\beta\widetilde{\mM}_{t-1} + (1-\beta)\mG_t)$, and
% \eqref{eq:mn-muon-sgd_update} becomes Muon with heavy-ball momentum. The exponential
% moving average differs from Muon's momentum buffer only by a positive scale, which does
% not change the polar factor. MN-Muon thus extends Muon by learning the metric in
% which the momentum is orthogonalized.

\paragraph{Relation to MN.}

When the direction term is absent (\eg $\mB_t = 0$, as in the first iteration), the
diagonal updates reduce to $\Psi_q(u, 0) = u/q$, and more generally the factor updates become
alternating updates on uncentered second moments.
They differ from \eqref{eq:mn-R_update} and
\eqref{eq:mn-C_update} in using the raw gradient instead of the centered $\mX_t$ and in lacking
the extra factor $\beta$ on the new observation. Both differences arise because the mean is no
longer a free parameter that can be profiled out. %Instead, the direction-dependent linear term
% accounts for the mean inside the factor updates.

\section{Experiments with MND-Muon}\label{sec:mn-muon-experiments}

All experiments in this section use the diagonal-factor implementation,
MND-Muon (\Cref{alg:mn-muon-d-method}), with $\beta=0.9$.

\subsection{GPT-small}
We train GPT-small (124M parameters) for 2.5B tokens, using 1024-token
sequences.  For each optimizer, corpus, and effective batch size, we select
the learning rate by validation loss in separate 1B-token runs with seed 42
(\Cref{app:further-mn-muon-experiments}). 
% gives the training and optimizer settings,
% complete three-method validation results, and a six-rate GPT-medium comparison
% after 7.09B training tokens.

On FineWeb at batch 512, \Cref{fig:mn-muon-competitive} compares
MND-Muon with Muon, NorMuon, AdamW,
Distributed Shampoo, and KL-Shampoo, using nine learning rates per optimizer
for the 1B-token sweep with seed 42. We hold the selected learning rates
fixed for the 2.5B-token runs with seeds 42, 1, and 2.
At the selected learning rate of 0.03, MND-Muon reaches mean
final validation loss 3.329, compared with 3.336 for Muon.
NorMuon, also at 0.03, has the lowest mean loss of 3.327.
The remaining baselines reach 3.381 for AdamW, 3.449 for
KL-Shampoo, and 3.501 for Distributed Shampoo.

\begin{figure}
\centering
\includegraphics[width=0.98\linewidth]{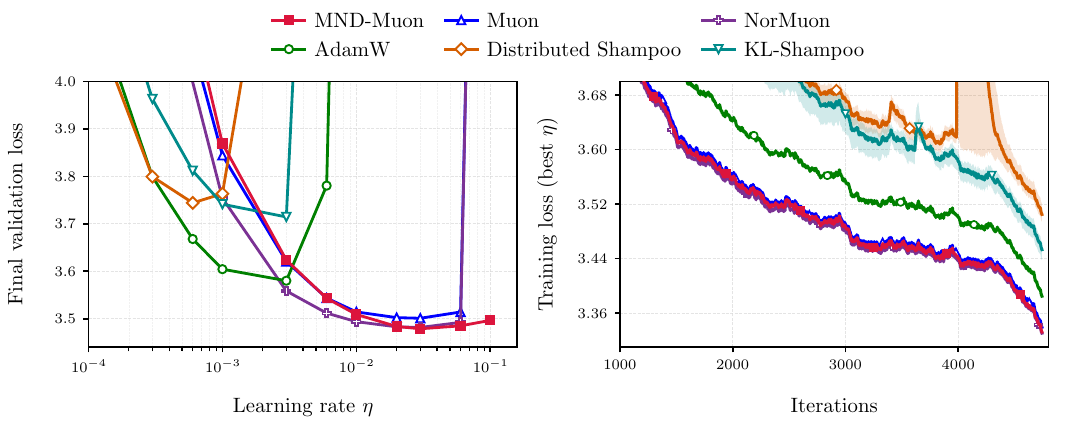}
\caption{GPT-small on FineWeb, batch 512. Left: validation loss versus
learning rate after 1B tokens (seed 42). Right: training loss through
2.5B tokens at selected rates, with three-seed means and shaded seed ranges.
Mean final validation losses after 2.5B tokens are 3.327 for NorMuon,
3.329 for MND-Muon, and 3.336 for Muon (\Cref{app:experimental-details}).}
\label{fig:mn-muon-competitive}
\end{figure}

At a fixed 2.5B-token budget on FineWeb and SlimPajama, increasing the
batch size generally improves the validation performance of Muon,
NorMuon, and MND-Muon
(\Cref{fig:mn-muon-small-batch-curves,fig:mn-muon-batch-size} in
\Cref{app:mn-muon-final-comparison}).
The validation loss gaps between MND-Muon and Muon, and
between NorMuon and Muon, shrink as batch size increases,
indicating that both methods provide larger gains over Muon at
smaller batches.

\subsection{GPT-medium}
\label{sec:gpt-medium}

We compare MND-Muon, Muon, and NorMuon on GPT-medium
(353M active parameters), trained on FineWeb for 7.09B tokens at batch
768 with seed 42. We test six learning rates per method, each for the
full training budget. The lowest final validation losses are 3.0139 for
MND-Muon at learning rate 0.045, 3.017 for Muon at
0.04, and 3.007 for NorMuon at 0.045 (\Cref{fig:mn-muon-medium}).

On this grid, MND-Muon maintains validation losses between 3.014
and 3.016 across the tested rates from 0.02 to 0.06, a range of 0.0024
nats per token. Muon is unstable at
0.045 (loss 6.295), as is NorMuon at 0.09 (loss 6.071).
The right panel compares training at each method's best tested rate.
Full results and the largest-rate stability comparison are in
\Cref{app:mn-muon-medium}.

\begin{figure}
\centering
\includegraphics[width=0.98\linewidth]{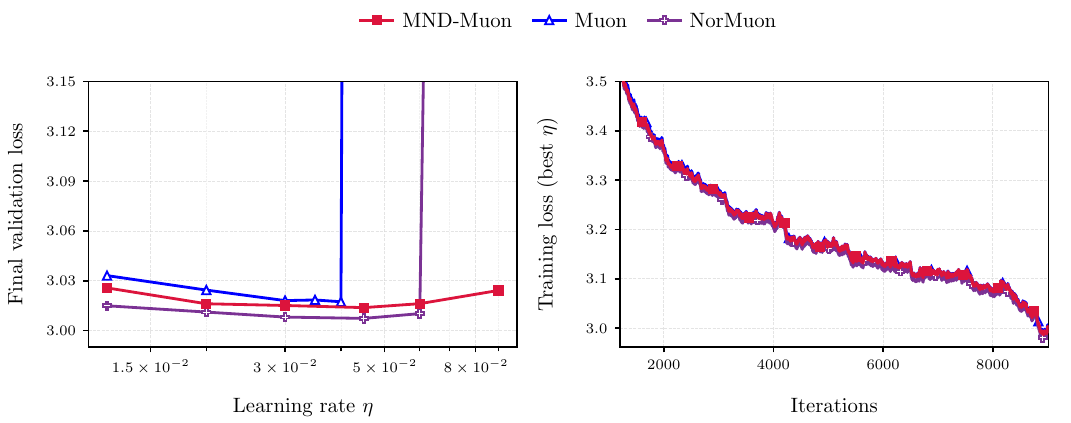}
\caption{GPT-medium on FineWeb: 7.09B tokens, batch 768, seed 42.
Left: validation loss versus learning rate. Right: training loss at the
best tested rates (MND-Muon: 0.045; Muon: 0.04;
NorMuon: 0.045).
Final validation losses are 3.007 for NorMuon,
3.014 for MND-Muon, and 3.017 for Muon.}
\label{fig:mn-muon-medium}
\end{figure}

\section{Conclusions, limitations and future work}
Our work postulates that several stochastic optimization methods can be seen as instances of a Whitened Momentum method~\eqref{eq:whiten-steps}. Using different structured approximations for the covariance, we arrive at variants of Adam and Shampoo.

There are a few limitations of our work, which also point to future work. First, our justification for using a Whitened Momentum method~\eqref{eq:whiten-steps} is mostly empirical: The methods that follow from this framework work well. But having more theoretical insight into the Whitened Momentum method on toy models or certain classes of functions would lend further evidence. Furthermore, we have so far only verified experimentally the new Centered Adam, MND and MND-Muon on two different sizes of GPT-2 at Chinchilla scale. More models, datasets and scales are needed to strengthen our conclusions.

Our approach required assuming the gradients have a Gaussian distribution, which could be reasonable for image classification tasks, but has been shown to be unreasonable for language models~\citep{zhang2020adaptive,kunstner2023noise}. This raises the question of whether we could use other distributions to model the gradient, and whether we could match those distributions to the distributions of the gradients.

Finally, our statistical model of Muon again required assuming a Gaussian model for the gradient, and thus it remains open what the result would be for a different, heavier-tailed model of the gradient distribution.

\subsection*{Acknowledgements}
We thank Umut Simsekli and 
Lénaïc Chizat for initial conversations regarding an early version of the draft. In particular, we would like to acknowledge that Lénaïc Chizat had also independently developed a Bayesian filtration view of the two momentum buffers of Adam.

% \newpage 
\begingroup
\let\url\nolinkurl
\def\doi#1{doi: \nolinkurl{#1}}
\bibliography{references}
\endgroup
\bibliographystyle{plainnat-no-url}

\clearpage
\appendix

\tableofcontents

\clearpage
\section{Extended background}
\label{sec:extended-background}
 
Matrix Normal models and Kronecker-structured approximations appear in
several approaches to neural-network optimization. Their relationship
depends on what the factors represent and how they enter the update.
We first review gradient models, second-moment estimators, and curvature
approximations, then discuss their connections to our methods.

\paragraph{Tensor normal gradient models.}
Prior work has made use of a tensor normal model in
Tensor Normal Training (TNT)~\citep{ren2021tensornormal}.  TNT assumes that the
sampling-based gradient tensor follows a tensor-normal distribution, so that its
covariance has a Kronecker-separable form.  For a matrix-valued parameter, this
reduces to a Matrix Normal model in which the gradient covariance is represented
by separate row and column covariance factors,  
whose inverses define an approximate natural-gradient update. Our centered
formulation instead models gradients computed with observed training labels
and jointly estimates their generally nonzero mean and residual covariance.
The resulting factors scale the update by inverse-square-root whitening of
the estimated gradient noise, rather than inverse Fisher preconditioning.

\paragraph{Structured second-moment estimation.}
A second line of related work consists of Kronecker-structured adaptive
optimizers.
Recent work interprets structured second-moment estimation in
Shampoo and SOAP through covariance estimation and Kullback--Leibler
minimization~\citep{lin2026klshampoo}. In particular, KL-Shampoo
admits a maximum-likelihood interpretation under a zero-mean Matrix Normal
model and uses the fitted factors for whitening. 
Fitting a zero-mean model to uncentered gradients targets a second moment
that contains both residual variability and the mean-gradient contribution.
Our centered formulation jointly estimates the mean and residual covariance
through a KL-regularized likelihood. This joint estimation uses gradient
residuals $\mG_t-\hat{\mM}_{t-1}$, obtained by centering each gradient using
the previous mean estimate, and a shared discount parameter $\beta$ for
momentum and covariance. It also introduces an additional factor $\beta$
in the covariance update, giving the residual contribution a weight of
$\beta(1-\beta)$.

\paragraph{Kronecker-factored curvature.}
There is also a large literature on Kronecker-factored curvature and natural
gradient methods.  K-FAC~\citep{martens2015kfac} approximates Fisher blocks by
Kronecker products, and EKFAC~\citep{george2018ekfac} refines this approximation
by estimating diagonal second moments in a Kronecker-factored eigenbasis;
the corrected matrix need not itself be a single Kronecker product. These
representations are related to Matrix Normal preconditioning, but their
statistical targets differ. For a negative log-likelihood gradient
$h_\theta(x,y)=-\nabla_\theta\log p_\theta(y\mid x)$, the model Fisher is
\[
    \mF(\theta)
    =\mathbb E_{x\sim q_x,\,\widetilde y\sim p_\theta(\cdot\mid x)}
    \!\left[h_\theta(x,\widetilde y)
    h_\theta(x,\widetilde y)^\top\right],
\]
where $q_x$ is the training input distribution. The gradient has zero mean under the model label expectation,
so the Fisher is also a covariance under that distribution. The empirical
Fisher instead averages outer products of per-example gradients with observed
labels; it estimates an uncentered second moment and need not capture Fisher
curvature~\citep{kunstner2019empiricalfisher}. Such second moments can still
inform adaptation to gradient noise. Our centered formulation explicitly
models the covariance $\mSigma$ of the actual stochastic-gradient residuals
and uses $\mSigma^{-1/2}\vm$ to whiten the mean gradient, whereas a
natural-gradient update uses $\mF^{-1}\vm$. Thus our statistical target and
the matrix power used in the update both differ from Fisher preconditioning.

\paragraph{Preconditioned Muon.}
Concurrent work, Mousse~\citep{zhang2026mousserectifyinggeometrymuon},
combines Muon with Kronecker-factored statistics derived from Shampoo.
It performs orthogonalization in whitened coordinates and motivates
the update through an anisotropic constraint on the step. Its factors are
estimated from uncentered gradient Gram matrices; it whitens accumulated
momentum and applies the preconditioning factors again after the polar step.
Whitening before polar decomposition is therefore shared with our approach.
Our formulation derives the whitening transformation from a gradient
observation model, and MN-Muon estimates its factors
jointly with the direction and accumulates momentum from already-whitened
gradients. It then uses the polar factor directly as the update direction
in~\Cref{alg:mn-muon-d-method}, without the additional factor multiplication
after orthogonalization used by Mousse.

NorMuon~\citep{li2025normuon} and
AdaMuon~\citep{si2025adamuon} apply adaptive scaling after
orthogonalization. NorMuon tracks an EMA of the mean
squared entry in each row of the orthogonalized update and uses it to
scale that row. AdaMuon takes the elementwise sign of the momentum before 
orthogonalization and tracks a separate second moment
for each entry of the resulting polar factor. 
NorMuon and AdaMuon estimate their adaptive scales from orthogonalized
updates, whereas MN-Muon estimates its covariance factors from
the gradient observation and the previous direction.

Muon$^2$~\citep{liu2026muon2} applies elementwise second-moment
scaling before orthogonalization. It divides accumulated momentum by the
square root of a moving average of squared gradients, aiming to improve
the conditioning of the subsequent Newton--Schulz iteration. This shares
the order of preconditioning and orthogonalization with MN-Muon,
but uses uncentered gradient statistics and preconditions momentum after
accumulation. In MN-Muon, each gradient is whitened before it
enters the momentum average.

Aurora~\citep{dewulf2026aurora} addresses nonuniform neuron
updates through row balancing. For tall matrices, it alternates damped
row normalization with polar computations, seeking updates with
approximately orthonormal columns and equal row norms. The row scaling
is determined within this inner iteration. Aurora chooses its 
row scaling to equalize the row norms of the
update, whereas MN-Muon derives its whitening factors from the
anisotropic gradient likelihood and covariance prior.

\paragraph{Matrix Normal weight models.}
Finally, Matrix Normal distributions have also appeared in Bayesian deep
learning, though typically as distributions over weights rather than over
stochastic gradients.  Matrix-Gaussian variational posteriors were used by
Louizos and Welling~\citep{louizos2016matrixgaussian} to model correlations among
the input and output dimensions of a neural network layer.  Noisy natural
gradient methods have also been interpreted as variational inference with
Gaussian or matrix-variate Gaussian posteriors~\citep{zhang2018noisyngvi}.  These
works provide useful probabilistic background for matrix-variate Gaussian
structure in neural networks, but they are complementary to the present
gradient-observation viewpoint.

\paragraph{Online Bayesian updating and projection.}
Our recursion alternates an exact conjugate Normal-Inverse-Wishart update
with a Kullback--Leibler projection onto a Kronecker-structured family,
placing it in the long line of ``update, then project'' methods for
sequential Bayesian inference.  The classical instance is assumed density
filtering~\citep{maybeck1979stochastic, opper1999bayesian}, which projects
each exact posterior onto a tractable family in the forward-KL direction;
for exponential families this reduces to moment matching, and ADF forms the
basis of expectation propagation~\citep{minka2001ep}.  Reverse-KL projection
instead yields a one-observation variational inference problem with the
previous approximation as prior, since the Bayesian posterior itself solves
a KL-regularized variational problem~\citep{zellner1988optimal,
knoblauch2022optimization}.  This view underlies streaming variational
Bayes~\citep{broderick2013streaming} and variational continual
learning~\citep{nguyen2018vcl}, and connects to natural-gradient methods
through the Bayesian learning rule~\citep{khan2023blr}.  Our projection lies
on this variational side: over the mean and covariance, it is a reverse-KL
projection onto Normal-Inverse-Wishart distributions with a
Kronecker-structured scale matrix.  
Raising the previous approximate
posterior to the power $\beta$ implements exponential
forgetting~\citep{kulhavy1993forgetting} and is closely related to the
discount factors used in Bayesian dynamic linear
models~\citep{west1997bayesian}.  Kronecker-structured approximations have
also been propagated online for network weights, as in structured Laplace
methods for continual learning~\citep{ritter2018online}.  Unlike these
parameter-belief methods, we maintain beliefs over the parameters of the
stochastic gradient distribution, making the Bayesian update closed form
and leaving the Kronecker projection iterative.

\paragraph{Details of \Cref{fig:sgd-precon,fig:white-mom}.}
Both figures minimize $\ell(\vw)=\frac12\|\vw\|^2$ from $\vw_0=(2.4,0)$, with
Gaussian gradient noise of covariance $\diag(0.1,100)$.
Each run takes 300 steps with $\eta_t=\eta/\sqrt{t+1}$,
for $t=0,\ldots,299$; the horizontal axis shows $\eta$.
Within each seed, both methods and all tested rates use the same noise draws.
For each method, we test each learning rate with seeds $7,\ldots,11$ and choose the
learning rate with the smallest average final loss. Shading shows the range over
these seeds. All displayed trajectories use seed 7.
Whitened SGD uses the exact noise covariance.
Whitened Momentum uses $\beta=0.9$, initializes momentum at the first
gradient and covariance at $\mI$, and whitens with
$(\hat{\mSigma}_t+0.1\mI)^{-1/2}$.

\section{Derivations and proofs of~\Cref{sec:bayesian-and-optimizer-agree} }
\label{sec:derivations-proofs}

\subsection{Joint prior}
\label{sec:results-from-2-joint-prior}

When the prior is proper, we model the latent parameters $\vm_t \in \R^{d}, \mSigma_t \in \R^{d \times d}$ hierarchically through
\begin{align*}
  \mSigma_t \mid \hat{\mSigma}_{t-1} &\sim \IW_{d}\left(\frac{\beta}{1-\beta}-(d+2), \frac{\beta}{1-\beta}\hat{\mSigma}_{t-1}\right),\\
  \vm_t \mid \mSigma_{t}, \hat{\vm}_{t-1} &\sim \Normal\!\left(\hat{\vm}_{t-1}, \frac{1-\beta}{\beta}\mSigma_t\right).
\end{align*}
Here, $0<\beta<1$ and $\hat{\mSigma}_{t-1}\succ0$.

\paragraph{The Inverse-Wishart distribution.} The distribution $\mSigma \sim \IW_p(\nu, \bm{\Psi})$
has density proportional to \\
$|\mSigma|^{-(\nu+p+1)/2}\exp\bigl(-\tfrac{1}{2}\tr(\bm{\Psi}\mSigma^{-1})\bigr)$.
In our case, the dimension is $p=d$, and we choose the degrees of freedom as $$\nu = \frac{\beta}{1-\beta}-(d+2),$$
and the scale matrix is $\bm{\Psi} =\frac{\beta}{1-\beta}\hat{\mSigma}_{t-1}$. The density is therefore proportional to
\begin{align*}
p(\mSigma_t \mid \hat{\mSigma}_{t-1})
\propto |\mSigma_t|^{-\frac{1}{2}\left(\frac{\beta}{1-\beta}-1\right)}
\exp\!\left(-\frac{1}{2}\frac{\beta}{1-\beta}
\tr\!\left(\hat{\mSigma}_{t-1}\mSigma_t^{-1}\right)\right).
\end{align*}
The density is proper as long as the degrees of freedom exceed $d-1$,
that is, when $\frac{\beta}{1-\beta} > 2d+1$.
Whenever this condition fails, we use the joint expression below as an
unnormalized prior kernel. Its mode and the regularized likelihood
interpretation still hold for every $0<\beta<1$, but it need not define
a probability distribution.

\paragraph{Joint prior density.} The conditional normal density is
\[
p(\vm_t \mid \mSigma_t, \hat{\vm}_{t-1})
= (2\pi)^{-\frac{d}{2}} \left(\frac{\beta}{1-\beta}\right)^{\frac{d}{2}} |\mSigma_t|^{-\frac{1}{2}}
\exp\!\left(-\frac{1}{2}\frac{\beta}{1-\beta}
(\vm_t-\hat{\vm}_{t-1})^\top \mSigma_t^{-1} (\vm_t-\hat{\vm}_{t-1})\right).
\]
Multiplying the two densities, the powers of $|\mSigma_t|$ combine to
$-\frac{1}{2}\left(\frac{\beta}{1-\beta}-1\right) - \frac{1}{2} = -\frac{1}{2}\frac{\beta}{1-\beta}$.
Using 
\[
(\vm_t-\hat{\vm}_{t-1})^\top \mSigma_t^{-1} (\vm_t-\hat{\vm}_{t-1})
= \tr\bigl(\mSigma_t^{-1}(\vm_t-\hat{\vm}_{t-1})(\vm_t-\hat{\vm}_{t-1})^\top\bigr),
\]
the two exponentials combine into a single trace, giving
\begin{align} \label{eq:jointprior}
p(\vm_t, \mSigma_t \mid \hat{\vm}_{t-1}, \hat{\mSigma}_{t-1})
&\propto |\mSigma_t|^{-\frac{1}{2}\frac{\beta}{1-\beta}}
\exp\!\left(-\frac{1}{2}\frac{\beta}{1-\beta}
\tr\!\left(\mSigma_t^{-1}\left[\hat{\mSigma}_{t-1}
+ (\vm_t-\hat{\vm}_{t-1})(\vm_t-\hat{\vm}_{t-1})^\top\right]\right)\right).
\end{align}
When proper, the joint prior above follows a Normal-Inverse-Wishart distribution;
otherwise it has the same unnormalized kernel. In the usual notation
$\mathcal{NIW}(\vm_0, \lambda, \bm{\Psi}, \nu)$, defined by
$\vm \mid \mSigma \sim \Normal(\vm_0, \mSigma/\lambda)$ and
$\mSigma \sim \IW(\nu, \bm{\Psi})$, we have 
\begin{equation}\label{app:eq:niw}
\vm_t, \mSigma_t \mid \hat{\vm}_{t-1}, \hat{\mSigma}_{t-1} \sim \mathcal{NIW}\!\left(\hat{\vm}_{t-1},\
\frac{\beta}{1-\beta},\ \frac{\beta}{1-\beta}\hat{\mSigma}_{t-1},\
\frac{\beta}{1-\beta}-(d+2)\right).
\end{equation}
It is the conjugate prior for a multivariate normal with unknown mean and covariance.

\paragraph{Mode of the joint prior density.} The mode of the joint prior above is precisely $(\hat{\vm}_{t-1}, \hat{\mSigma}_{t-1})$, which is one of the motivations behind this parametrization. This is proved as a proposition below.

\begin{proposition}\label{prop:prior-mode}
Assume $0 < \beta < 1$ and that $\hat{\mSigma}_{t-1}$ is positive definite. Then the joint
prior kernel \eqref{eq:jointprior}, over $\vm_t \in \R^{d}$ and positive definite
$\mSigma_t \in \R^{d \times d}$, is such that 
\begin{equation} \label{eq:prior-equal-kl}
    \log p(\vm_t, \mSigma_t \mid \hat{\vm}_{t-1}, \hat{\mSigma}_{t-1}) \; = \; -\frac{\beta}{1-\beta}\,
\KL\!\left(
\Normal(\hat{\vm}_{t-1},\hat{\mSigma}_{t-1})
\,\middle\|\,
\Normal(\vm_t,\mSigma_t)
\right)+\mathrm{const}.
\end{equation}
Consequently the prior kernel
has a unique maximizer at
$(\vm_t, \mSigma_t) = (\hat{\vm}_{t-1}, \hat{\mSigma}_{t-1})$.
\end{proposition}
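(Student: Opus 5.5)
The plan is to write out the Gaussian KL divergence explicitly and compare it term by term with the logarithm of the kernel \eqref{eq:jointprior}. For two $d$-dimensional normals the standard formula gives
\[
\KL\!\left(\Normal(\hat{\vm}_{t-1},\hat{\mSigma}_{t-1})\,\middle\|\,\Normal(\vm_t,\mSigma_t)\right)
=\tfrac12\Bigl[\log\tfrac{|\mSigma_t|}{|\hat{\mSigma}_{t-1}|}+\tr(\mSigma_t^{-1}\hat{\mSigma}_{t-1})+(\vm_t-\hat{\vm}_{t-1})^\top\mSigma_t^{-1}(\vm_t-\hat{\vm}_{t-1})-d\Bigr].
\]
I will rewrite the quadratic form as a trace, exactly as was done just before \eqref{eq:jointprior}. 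Then $\tr(\mSigma_t^{-1}\hat{\mSigma}_{t-1})$ and the quadratic term merge into $\tr\bigl(\mSigma_t^{-1}[\hat{\mSigma}_{t-1}+(\vm_t-\hat{\vm}_{t-1})(\vm_t-\hat{\vm}_{t-1})^\top]\bigr)$.

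Taking logarithms of \eqref{eq:jointprior} gives
\[
-\tfrac12\tfrac{\beta}{1-\beta}\log|\mSigma_t|-\tfrac12\tfrac{\beta}{1-\beta}\tr\bigl(\mSigma_t^{-1}[\cdots]\bigr)+\mathrm{const}.
\]
Multiplying the KL expression by $-\frac{\beta}{1-\beta}$ reproduces these two terms. What is left over is $\frac{\beta}{2(1-\beta)}(\log|\hat{\mSigma}_{t-1}|+d)$, which depends only on $\hat{\mSigma}_{t-1}$, $\beta$ and $d$, so it can be absorbed into the constant. This proves \eqref{eq:prior-equal-kl}. The argument is the same whether or not the prior is proper, because it concerns only the kernel.

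For the maximizer, the factor $\frac{\beta}{1-\beta}$ is positive, so maximizing the kernel is equivalent to minimizing the KL divergence over $\vm_t\in\R^d$ and $\mSigma_t\succ0$. Two standard facts finish the argument. First, the KL divergence is nonnegative, and it equals zero exactly when the two distributions coincide. Second, Gaussians with positive definite covariance are identified by their mean and covariance, so zero KL forces $(\vm_t,\mSigma_t)=(\hat{\vm}_{t-1},\hat{\mSigma}_{t-1})$. That point is feasible because $\hat{\mSigma}_{t-1}\succ0$. Hence it is the unique maximizer.

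If a self-contained check is preferred over citing Gibbs' inequality, I would argue directly. For any fixed $\mSigma_t\succ0$, the quadratic term is minimized only at $\vm_t=\hat{\vm}_{t-1}$. Writing $\mA=\mSigma_t^{-1/2}\hat{\mSigma}_{t-1}\mSigma_t^{-1/2}$, the remaining terms are, up to the factor $\tfrac12$,
\[
\tr\mA-\log|\mA|-d=\sum_i(\lambda_i-\log\lambda_i-1),
\]
where $\lambda_i>0$ are the eigenvalues of $\mA$. Each summand is nonnegative and vanishes only at $\lambda_i=1$, so the sum is zero exactly when $\mA=\mI$, that is, when $\mSigma_t=\hat{\mSigma}_{t-1}$. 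None of these steps is a real obstacle. The only point needing care is the bookkeeping of the $|\mSigma_t|$ exponent, making sure the combined power $-\frac12\frac{\beta}{1-\beta}$ in \eqref{eq:jointprior} matches the KL's $\tfrac12\log|\mSigma_t|$ after scaling.
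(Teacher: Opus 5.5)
Your proposal is correct and follows the paper's proof essentially step for step. You expand the Gaussian KL, match the $\log|\mSigma_t|$ and merged trace terms against the logarithm of \eqref{eq:jointprior}, absorb $\frac{\beta}{2(1-\beta)}(\log|\hat{\mSigma}_{t-1}|+d)$ into the constant, and get uniqueness from the KL vanishing only when the two Gaussians coincide. Your optional eigenvalue check uses the same $\lambda-\log\lambda\geq 1$ bound the paper later states as \eqref{eq:logdet-trace-bound}, so it just makes that final step self-contained.
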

\begin{proof}
First note that the KL divergence between two normal distributions is given by 
\begin{align} \label{eq:KL-normal}
    \KL\!\left(
\Normal(\hat{\vm}_{t-1},\hat{\mSigma}_{t-1})
\,\middle\|\,
\Normal(\vm_t,\mSigma_t)\right)
&=
\frac{1}{2}\left[
\tr\!\left(\mSigma_t^{-1}\hat{\mSigma}_{t-1}\right)
+
(\vm_t-\hat{\vm}_{t-1})^\top
\mSigma_t^{-1}
(\vm_t-\hat{\vm}_{t-1})  \right] \nonumber \\
& \qquad +\frac{1}{2} \left[\log|\mSigma_t| -\log|\hat{\mSigma}_{t-1}|
 -d
\right].
\end{align}
Taking logarithms in the joint prior~\eqref{eq:jointprior} gives
\begin{align*}
\log p&(\vm_t, \mSigma_t
\mid \hat{\vm}_{t-1}, \hat{\mSigma}_{t-1}) \\
&= -\frac{\beta}{2(1-\beta)}\left[
\log|\mSigma_t|
+ \tr\!\left(\hat{\mSigma}_{t-1}\mSigma_t^{-1}\right)
+ (\vm_t-\hat{\vm}_{t-1})^\top
\mSigma_t^{-1}
(\vm_t-\hat{\vm}_{t-1})
\right] + \mathrm{const} \\
&= -\frac{\beta}{1-\beta}\,\left(
\KL\!\left(
\Normal(\hat{\vm}_{t-1},\hat{\mSigma}_{t-1})
\,\middle\|\,
\Normal(\vm_t,\mSigma_t)\right)
+\frac{\log|\hat{\mSigma}_{t-1}|+d}{2}
 \right)+\mathrm{const},
\end{align*}
which proves~\eqref{eq:prior-equal-kl}. Since the KL is a proper divergence, we have that it is only minimized (or the negative is maximized) when the inputs are equal, that is, when $(\vm_t, \mSigma_t) = (\hat{\vm}_{t-1}, \hat{\mSigma}_{t-1})$.
\end{proof}

\subsection{Joint posterior}
\label{sec:results-from-2-joint-posterior}

By conjugacy, observing the gradient $\vg_t$ turns the Normal-Inverse-Wishart prior above into a
Normal-Inverse-Wishart posterior whose mode $(\hat{\vm}_t, \hat{\mSigma}'_t)$ follows
the recursions~\eqref{eq:mean_update}--\eqref{eq:covariance_update}. 
Hence we need to store
only these two statistics, not the full posterior (\Cref{prop:posterior}).
% $$ - \log p(\vm_t, \mSigma_t \mid \vg_t, \hat{\vm}_{t-1}, \hat{\mSigma}_{t-1})$$
% $$\hat{\vm}_t, \hat{\mSigma}_t =\argmin{\vm, \mSigma}f(\vm, \mSigma )$$
\begin{proposition}\label{prop:posterior}
Assume $0 < \beta < 1$, $\frac{\beta}{1-\beta} > 2d+1$, and that $\hat{\mSigma}_{t-1}$ is
positive definite. Let the prior on $(\vm_t, \mSigma_t)$ be given by~\eqref{eq:jointprior}, and let
$\vg_t \mid \vm_t, \mSigma_t \sim \Normal(\vm_t, \mSigma_t)$ be the likelihood. The resulting posterior is then given by
\begin{equation}\label{eq:niw-posterior}
(\vm_t, \mSigma_t) \mid \vg_t, \hat{\vm}_{t-1}, \hat{\mSigma}_{t-1}
\sim \mathcal{NIW}\!\left(\hat{\vm}_t,\ \frac{1}{1-\beta},\
\frac{1}{1-\beta}\hat{\mSigma}_t',\ \frac{\beta}{1-\beta}-(d+1)\right),
\end{equation}
where
$\hat{\vm}_t$ and $\hat{\mSigma}_t'$ are defined
in \eqref{eq:mean_update} and \eqref{eq:covariance_update}. 
\end{proposition}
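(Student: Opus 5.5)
We will prove the statement by multiplying the likelihood into the joint prior kernel~\eqref{eq:jointprior} and reading off the result as an $\mathcal{NIW}$ kernel. Throughout, write $\kappa := \frac{\beta}{1-\beta}$, so that $\kappa+1 = \frac{1}{1-\beta}$. The likelihood contributes
\[
|\mSigma_t|^{-1/2}\exp\!\bigl(-\tfrac12 \tr(\mSigma_t^{-1}(\vg_t-\vm_t)(\vg_t-\vm_t)^\top)\bigr).
\]
Multiplying by~\eqref{eq:jointprior}, the posterior kernel is therefore
\[
|\mSigma_t|^{-\frac{\kappa+1}{2}}
\exp\!\Bigl(-\tfrac12\tr\bigl(\mSigma_t^{-1}\bigl[\kappa\hat{\mSigma}_{t-1}+\kappa(\vm_t-\hat{\vm}_{t-1})(\vm_t-\hat{\vm}_{t-1})^\top+(\vg_t-\vm_t)(\vg_t-\vm_t)^\top\bigr]\bigr)\Bigr).
\]

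The key step is to complete the square in $\vm_t$ inside the bracket. The quadratic part is $(\kappa+1)\vm_t\vm_t^\top$, and the linear part is centered at
\[
\frac{\kappa\hat{\vm}_{t-1}+\vg_t}{\kappa+1}=\beta\hat{\vm}_{t-1}+(1-\beta)\vg_t=\hat{\vm}_t .
\]
The standard identity for combining two weighted rank-one squares gives
\[
\kappa\,\va\va^\top+\vb\vb^\top=(\kappa+1)\vu\vu^\top+\tfrac{\kappa}{\kappa+1}(\vg_t-\hat{\vm}_{t-1})(\vg_t-\hat{\vm}_{t-1})^\top,
\]
where $\va=\vm_t-\hat{\vm}_{t-1}$, $\vb=\vg_t-\vm_t$ and $\vu=\vm_t-\hat{\vm}_t$. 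We will verify this by expanding both sides, or by noting that the cross terms cancel because $\hat{\vm}_t$ is the weighted average. Since $\tfrac{\kappa}{\kappa+1}=\beta$, the bracket becomes
\[
(\kappa+1)\bigl[\beta\hat{\mSigma}_{t-1}+\beta(1-\beta)(\vg_t-\hat{\vm}_{t-1})(\vg_t-\hat{\vm}_{t-1})^\top\bigr]+(\kappa+1)\vu\vu^\top
=\tfrac{1}{1-\beta}\hat{\mSigma}_t'+\tfrac{1}{1-\beta}\vu\vu^\top,
\]
using~\eqref{eq:covariance_update} and $\kappa=(\kappa+1)\beta$.

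It remains to match this against the $\mathcal{NIW}(\vm_0,\lambda,\bm\Psi,\nu)$ kernel, which is
\[
|\mSigma|^{-(\nu+d+2)/2}\exp\!\bigl(-\tfrac12\tr(\mSigma^{-1}[\bm\Psi+\lambda(\vm-\vm_0)(\vm-\vm_0)^\top])\bigr),
\]
as in the joint-prior derivation. Matching terms gives $\vm_0=\hat{\vm}_t$, $\lambda=\frac{1}{1-\beta}$ and $\bm\Psi=\frac{1}{1-\beta}\hat{\mSigma}_t'$. For the exponent we need $\nu+d+2=\kappa+1$, so $\nu=\kappa-(d+1)$, which is the claimed parameter. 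The posterior is proper because $\hat{\mSigma}_t'\succ0$ (it is $\beta\hat{\mSigma}_{t-1}\succ0$ plus a PSD term) and $\nu>d-1$ follows from $\kappa>2d+1$. Since the kernel is a proper density, normalizing it identifies the posterior exactly.

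The only real obstacle is the bookkeeping in completing the square, in particular confirming that the residual rank-one coefficient is exactly $\beta(1-\beta)$ after factoring out $\frac{1}{1-\beta}$; everything else is direct matching of exponents.
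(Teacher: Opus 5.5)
Your proposal is correct and follows essentially the same route as the paper. Both arguments multiply the Gaussian likelihood into the prior kernel~\eqref{eq:jointprior}, complete the square using the weighted rank-one identity~\eqref{eq:complete-square} (your version with $\kappa/(\kappa+1)=\beta$), and then read off the parameters $\lambda=\frac{1}{1-\beta}$, $\bm{\Psi}=\frac{1}{1-\beta}\hat{\mSigma}_t'$ and $\nu+d+2=\frac{1}{1-\beta}$, with properness following from $\hat{\mSigma}_t'\succ0$ and $\nu>d-1$.
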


\begin{proof}
By Bayes' rule, the posterior is proportional to the Gaussian likelihood
times the prior~\eqref{eq:jointprior},
\[
p(\vm_t,\mSigma_t \mid\vg_t,\hat{\vm}_{t-1},\hat{\mSigma}_{t-1})
\propto
p(\vg_t \mid \vm_t, \mSigma_t)
p(\vm_t, \mSigma_t \mid \hat{\vm}_{t-1}, \hat{\mSigma}_{t-1}).
\]
Completing the square gives
\begin{align}
&\frac{\beta}{1-\beta}
(\vm_t-\hat{\vm}_{t-1})(\vm_t-\hat{\vm}_{t-1})^\top
+(\vg_t-\vm_t)(\vg_t-\vm_t)^\top \notag\\
&\qquad\qquad=
\frac{1}{1-\beta}(\vm_t-\hat{\vm}_t)(\vm_t-\hat{\vm}_t)^\top
+\beta(\vg_t-\hat{\vm}_{t-1})(\vg_t-\hat{\vm}_{t-1})^\top.
\label{eq:complete-square}
\end{align}
Using \eqref{eq:covariance_update}, the posterior is proportional to
\begin{align}
&p(\vm_t,\mSigma_t
\mid\vg_t,\hat{\vm}_{t-1},\hat{\mSigma}_{t-1}) \notag\\
&\quad\propto
|\mSigma_t|^{-\frac{1}{2(1-\beta)}}
\exp\!\left(-\frac{1}{2(1-\beta)}
\left[
\tr\!\left(\mSigma_t^{-1}\hat{\mSigma}'_t\right)
+(\vm_t-\hat{\vm}_t)^\top
\mSigma_t^{-1}(\vm_t-\hat{\vm}_t)
\right]\right).
\label{eq:posterior-kernel}
\end{align}
This is the desired NIW density with parameters
$\lambda=1/(1-\beta)$,
$\bm{\Psi}=\frac{1}{1-\beta}\hat{\mSigma}'_t$, and
$\nu+d+2=\frac{1}{1-\beta}$.
The scale matrix is positive definite and $\nu>d-1$ by the assumptions.

\end{proof}

The new observation increases the NIW parameter $\lambda$ from
$\frac{\beta}{1-\beta}$ in \eqref{app:eq:niw} to
$\frac{1}{1-\beta}$ in \eqref{eq:niw-posterior}.
Raising the posterior density to the power $\beta$
restores the original weight for the next iteration. Thus $\beta$ acts
as a forgetting factor, reducing the influence of past observations.
More precisely, raising~\eqref{eq:posterior-kernel} to this power gives
the joint prior kernel~\eqref{eq:jointprior} with
$(\hat{\vm}_{t-1},\hat{\mSigma}_{t-1})$ replaced by
$(\hat{\vm}_t,\hat{\mSigma}'_t)$.
For a structured covariance, the next prior is instead constructed
with the projected $\hat{\mSigma}_t$, as described in~\Cref{sec:structured}.

\subsection{Closed-form solution and equivalence between optimization and Bayesian approaches}

%%%%%%%%%%%%%%%%%%%%%%%%%%%
\closdformfull*
%%%%%%%%%%%%%%%%%%%%%%%%%%
\begin{proof}
Assume $0<\beta<1$ and $\hat{\mSigma}_{t-1}\succ0$.
By Bayes' rule, maximizing~\eqref{eq:bayes-up} is equivalent to
minimizing the negative log likelihood plus the negative log prior.
By~\Cref{prop:prior-mode}, the latter is the KL regularizer in
\eqref{eq:full-optimization_problem}, up to an additive constant.
Thus the Bayesian and optimization objectives have the same minimizers.

Next, completing the square as in~\eqref{eq:complete-square}
gives the posterior kernel~\eqref{eq:posterior-kernel}, with
$\hat{\vm}_t$ and $\hat{\mSigma}'_t$ given by
\eqref{eq:mean_update}--\eqref{eq:covariance_update}.
Since $\beta\hat{\mSigma}_{t-1}\succ0$ and the remaining covariance
term is positive semidefinite, $\hat{\mSigma}'_t\succ0$.
Using the Gaussian KL formula~\eqref{eq:KL-normal} with this new
mean and covariance, we obtain
\[
-\log p(\vm,\mSigma
\mid\vg_t,\hat{\vm}_{t-1},\hat{\mSigma}_{t-1})
=
\frac{1}{1-\beta}
\KL\!\left(
\Normal(\hat{\vm}_t,\hat{\mSigma}'_t)
\,\middle\|\,
\Normal(\vm,\mSigma)
\right)+\mathrm{const}.
\]
The multiplicative factor is positive and the constant is independent
of $(\vm,\mSigma)$, which proves the equivalence with
\eqref{eq:KL-viewpoint}. The KL divergence is nonnegative and vanishes
exactly at $(\vm,\mSigma)=(\hat{\vm}_t,\hat{\mSigma}'_t)$.
This proves both the claimed updates and uniqueness.

Under the proper-prior condition in~\Cref{prop:posterior}, the kernel
above is the density of the NIW posterior~\eqref{eq:niw-posterior}.
For other values of $\beta$, the same algebra proves the statements
for the unnormalized Bayesian objective.
\end{proof}

\section{Derivations and proofs of~\Cref{sec:structured}}

We now show that the mean update and covariance projection solve the
regularized likelihood problem for any prescribed covariance structure.

\begin{proposition}\label{prop:regularized-likelihood}
Assume $0<\beta<1$ and $\hat{\mSigma}_{t-1}\succ0$, and let
$\mathcal{S}$ be a nonempty class of positive definite matrices in
$\R^{d\times d}$.
Let $\hat{\vm}_t$ and $\hat{\mSigma}'_t$ be given by
\eqref{eq:mean_update} and \eqref{eq:covariance_update}.
A pair $(\vm,\mSigma)$ with $\mSigma\in\mathcal{S}$ solves
\eqref{eq:full-optimization_problem} restricted to
$\mSigma\in\mathcal{S}$ if and only if $\vm=\hat{\vm}_t$
and $\mSigma$ solves~\eqref{eq:f-structured}.
Equivalently, it solves the structured KL projection
\eqref{eq:KL-viewpoint-structured} if and only if
$\vm=\hat{\vm}_t$ and $\mSigma$ solves
\eqref{eq:KL-viewpoint-cov-gen}.
These equivalences do not require existence or uniqueness of a
minimizer over an arbitrary $\mathcal{S}$.
\end{proposition}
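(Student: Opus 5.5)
The plan is to write the objective of \eqref{eq:full-optimization_problem} as an exact identity in $(\vm,\mSigma)$ and then read off both equivalences. From the proof of \Cref{prop:alternating} (via \Cref{prop:prior-mode} and the completed square \eqref{eq:complete-square}), for every $\vm\in\R^d$ and every $\mSigma\succ0$ we have
\[
-\log\Normal(\vg_t\mid\vm,\mSigma)+\tfrac{\beta}{1-\beta}\KL\!\left(\Normal(\hat{\vm}_{t-1},\hat{\mSigma}_{t-1})\,\middle\|\,\Normal(\vm,\mSigma)\right)
=\tfrac{1}{1-\beta}\KL\!\left(\Normal(\hat{\vm}_t,\hat{\mSigma}'_t)\,\middle\|\,\Normal(\vm,\mSigma)\right)+c,
\]
where $c$ does not depend on $(\vm,\mSigma)$. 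This identity holds pointwise on all positive definite $\mSigma$, so it holds in particular on $\mathcal{S}$. Hence a pair with $\mSigma\in\mathcal{S}$ solves the restricted problem \eqref{eq:full-optimization_problem} if and only if it solves \eqref{eq:KL-viewpoint-structured}. I will therefore prove only the KL statements; the regularized-likelihood statement then follows immediately.

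Next I expand the KL with \eqref{eq:KL-normal}:
\[
2\KL\!\left(\Normal(\hat{\vm}_t,\hat{\mSigma}'_t)\,\middle\|\,\Normal(\vm,\mSigma)\right)=f(\mSigma)+(\vm-\hat{\vm}_t)^\top\mSigma^{-1}(\vm-\hat{\vm}_t)-\log|\hat{\mSigma}'_t|-d,
\]
with $f$ as in \eqref{eq:f-structured}. Call the right-hand side $F(\vm,\mSigma)$. The quadratic term is nonnegative, and since $\mSigma^{-1}\succ0$ it vanishes exactly when $\vm=\hat{\vm}_t$. The same expansion shows that $2\KL(\Normal(0,\hat{\mSigma}'_t)\,\|\,\Normal(0,\mSigma))=f(\mSigma)-\log|\hat{\mSigma}'_t|-d$. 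So \eqref{eq:KL-viewpoint-cov-gen} and \eqref{eq:f-structured} have the same solution set over $\mathcal{S}$.

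The main step is to show, without assuming a minimizer exists, that $(\vm^\ast,\mSigma^\ast)$ minimizes $F$ over $\R^d\times\mathcal{S}$ if and only if $\vm^\ast=\hat{\vm}_t$ and $\mSigma^\ast$ minimizes $f$ over $\mathcal{S}$. I will argue each direction by a direct comparison.

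\emph{Forward direction.} Suppose $(\vm^\ast,\mSigma^\ast)$ minimizes $F$. Comparing with the feasible point $(\hat{\vm}_t,\mSigma^\ast)$ gives
\[
(\vm^\ast-\hat{\vm}_t)^\top(\mSigma^\ast)^{-1}(\vm^\ast-\hat{\vm}_t)\le0,
\]
so $\vm^\ast=\hat{\vm}_t$. Then for any $\mSigma\in\mathcal{S}$,
\[
f(\mSigma^\ast)=F(\hat{\vm}_t,\mSigma^\ast)+\text{const}\le F(\hat{\vm}_t,\mSigma)+\text{const}=f(\mSigma),
\]
so $\mSigma^\ast$ minimizes $f$ over $\mathcal{S}$.

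\emph{Converse.} If $\vm^\ast=\hat{\vm}_t$ and $\mSigma^\ast$ minimizes $f$ over $\mathcal{S}$, then for any $(\vm,\mSigma)$ with $\mSigma\in\mathcal{S}$,
\[
F(\vm,\mSigma)\ge f(\mSigma)+\text{const}\ge f(\mSigma^\ast)+\text{const}=F(\hat{\vm}_t,\mSigma^\ast).
\]

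Both directions compare objective values at specific points only. No existence or uniqueness of a minimizer over $\mathcal{S}$ is used, which gives the final claim of the proposition. The one point that needs care is that $c$ in the first identity must be independent of $\mSigma$ over the whole positive definite cone, not just at the unconstrained optimum. This holds because the identity was derived by completing the square pointwise, as in the proof of \Cref{prop:alternating}.
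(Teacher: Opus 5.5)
Your proposal is correct and follows essentially the same route as the paper. Via \Cref{prop:alternating} and the Gaussian KL formula, you rewrite the objective of \eqref{eq:full-optimization_problem} as $\frac{1}{2(1-\beta)}\bigl[f(\mSigma)+(\vm-\hat{\vm}_t)^\top\mSigma^{-1}(\vm-\hat{\vm}_t)\bigr]$ plus a constant, then eliminate the mean using the nonnegative quadratic term. Your explicit forward and converse comparisons spell out, more carefully, the step the paper compresses into ``substituting it leaves $f$,'' and they correctly avoid any existence or uniqueness assumption.
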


\begin{proof}
Let $\mathcal L(\vm,\mSigma)$ denote the objective in
\eqref{eq:full-optimization_problem}.
By~\Cref{prop:alternating} and the Gaussian KL formula
\eqref{eq:KL-normal},
\begin{equation}
\mathcal L(\vm,\mSigma)
=\frac{1}{2(1-\beta)}
\left[
\log|\mSigma|
+\tr\!\left(\mSigma^{-1}\hat{\mSigma}'_t\right)
+(\vm-\hat{\vm}_t)^\top\mSigma^{-1}(\vm-\hat{\vm}_t)
\right]+\mathrm{const}.
\label{eq:objective-rewritten}
\end{equation}
For each $\mSigma\succ0$, the quadratic term is nonnegative and
vanishes only at $\vm=\hat{\vm}_t$.
Since the restriction concerns only the covariance, this mean is
optimal for every $\mSigma\in\mathcal{S}$. Substituting it leaves
\begin{equation}
f(\mSigma)=\log|\mSigma|
+\tr\!\left(\mSigma^{-1}\hat{\mSigma}'_t\right),
\qquad \mSigma\in\mathcal{S}.
\label{eq:reduced-objective}
\end{equation}
Finally, by~\eqref{eq:KL-normal},
\[
\KL\!\left(
\Normal(0,\hat{\mSigma}'_t)
\,\middle\|\,
\Normal(0,\mSigma)
\right)
=\frac12\left[f(\mSigma)-d-\log|\hat{\mSigma}'_t|\right].
\]
Dropping the positive multiplicative and additive constants gives
\eqref{eq:f-structured} and~\eqref{eq:KL-viewpoint-cov-gen}.
The same quadratic mean term appears in the structured joint KL
projection, proving all the equivalences.
\end{proof}

 \begin{remark} \label{rem:update-project}
  
From the strict Bayesian viewpoint, the restriction to $\mathcal{S}$ in~\eqref{eq:KL-viewpoint-structured} breaks the exact Bayesian recursion,
 since the unconstrained MAP covariance $\hat{\mSigma}_t'$ given by~\eqref{eq:covariance_update} may not 
 admit the structure imposed by $\mathcal{S}$, in which case it cannot be the solution to~\eqref{eq:f-structured}.
This construction follows the \emph{update, then project} pattern used in approximate sequential Bayesian inference~\citep{opper1999bayesian}. 
After each observation, we retain the updated mean $\hat{\vm}_t$ from~\eqref{eq:mean_update} and project the Gaussian 
covariance $\hat{\mSigma}_t'$ onto $\mathcal{S}$ via~\eqref{eq:KL-viewpoint-cov-gen}. We then use $(\hat{\vm}_t,\hat{\mSigma}_t)$ 
to construct the next prior through~\eqref{eq:niw} and repeat after observing $\vg_{t+1}$. This gives an approximate Bayesian 
recursion in which the projection acts on the fitted gradient distribution, summarized by its MAP mean and covariance.
% This gives a natural evolution for the gradients $\{\vg_{t}\}_{t=1}^{\infty}$ through latent parameters $(\vm_t, \mR_t, \mC_t)$. 
 \end{remark}

\subsection{Diagonal projection and the second moment in Adam}
\label{sec:diagonal-projection-proof}

Let $\mathcal{S}$ be the positive definite diagonal matrices and write
$\mSigma=\diag(\bm{\sigma})$, with $\sigma_i>0$.
By~\Cref{prop:regularized-likelihood}, it suffices to minimize
\[
f(\diag(\bm{\sigma}))
=\sum_{i=1}^{d}
\left[\log\sigma_i+\frac{[\hat{\mSigma}'_t]_{ii}}{\sigma_i}\right].
\]
Each $s_i=[\hat{\mSigma}'_t]_{ii}$ is positive.
Putting $x_i=s_i/\sigma_i$ gives
$\log\sigma_i+s_i/\sigma_i=\log s_i+x_i-\log x_i$,
which is uniquely minimized at $x_i=1$ because
$x-\log x\geq1$, with equality only at $x=1$.
Thus $\hat{\bm{\sigma}}_t$ is the diagonal of $\hat{\mSigma}'_t$.
Taking the diagonal in~\eqref{eq:covariance_update}, with
$\hat{\mSigma}_{t-1}=\diag(\hat{\bm{\sigma}}_{t-1})$, proves
\eqref{eq:covariance_update-diag}.

To relate this covariance to the uncentered second moment, set
$\vv_t=\hat{\bm{\sigma}}_t+\hat{\vm}_t\circ\hat{\vm}_t$.
The identity
\[
\bigl(\beta\va+(1-\beta)\vb\bigr)\circ
\bigl(\beta\va+(1-\beta)\vb\bigr)
+\beta(1-\beta)(\vb-\va)\circ(\vb-\va)
=\beta\,\va\circ\va+(1-\beta)\,\vb\circ\vb
\]
and~\eqref{eq:mean_update}--\eqref{eq:covariance_update-diag} give
\[
\vv_t=\beta\vv_{t-1}+(1-\beta)\vg_t\circ\vg_t.
\]
Hence $\hat{\vm}_t$ and $\vv_t$ follow the two uncorrected
Adam buffer recursions when $\beta_1=\beta_2=\beta$ and
$\vv_0=\hat{\bm{\sigma}}_0+\hat{\vm}_0\circ\hat{\vm}_0$.
This proves~\eqref{eq:adam-update-uncentered} when $\varepsilon=0$
and the buffers are used without bias correction.
%% \rob{Not needed}
% With zero initialization and the usual bias correction, the step instead uses
% $\hat{\vm}_t/(1-\beta^t)$ and $\vv_t/(1-\beta^t)$.
% Zero covariance initialization is a limiting case of the buffer identities;
% the positive definite projection argument requires strictly positive
% diagonal entries.
% Using the centered covariance itself gives
% \eqref{eq:adam-update-centered}.

\section{Adam and Centered Adam experiments}
\label{app:adam-experiments}

The Adam and Centered Adam methods in~\Cref{fig:adam}
use AdamW and its centered variant.
Both initialize $\hat{\vm}_0=\hat \vv_0=0$ and use
$\beta=0.95$  and $\varepsilon=10^{-8}$.
Their moment updates and bias corrections are
\[
\begin{aligned}
  \hat{\vm}_t &= \beta\hat{\vm}_{t-1}+(1-\beta)\vg_t,
  &\qquad
  \hat \vv_t &= \beta \hat \vv_{t-1}+(1-\beta)\vg_t\circ\vg_t,\\
  \bar{\vm}_t &= \frac{\hat{\vm}_t}{1-\beta^t},
  &
  \bar{\vv}_t &= \frac{\hat \vv_t}{1-\beta^t}.
\end{aligned}
\]
The two methods differ in the diagonal estimate used to scale the update:
\[
\begin{aligned}
  \text{Adam}:\quad
  \vw_{t+1} &= (1-\eta_t\lambda_{\mathrm{wd}})\vw_t
  -\eta_t \frac{\bar{\vm}_t}{\sqrt{\bar{\vv}_t}+\varepsilon},\\[6pt]
  \text{Centered Adam}:\quad
  \vw_{t+1} &= (1-\eta_t\lambda_{\mathrm{wd}})\vw_t
  -\eta_t \frac{\bar{\vm}_t}{
  \sqrt{\max\{\bar{\vv}_t-\bar{\vm}_t\circ\bar{\vm}_t,0\}}+\varepsilon},
\end{aligned}
\]
where all vector operations are entrywise, and $\lambda_{\mathrm{wd}}$ is the
decoupled weight decay coefficient.
Centered Adam uses the AdamW denominator for updates
1 to 50, then switches to the centered denominator at update 51.
This avoids the zero centered variance of the first observation.
The switch preserves the buffers, bias-correction counter, and learning
rate schedule; these 50 updates are included in the training budget.

The model architectures and FineWeb split follow
\Cref{app:experimental-details,app:mn-muon-medium}.
GPT-small uses 4,743 updates at batch 512 (2.49B tokens), with seeds
1, 2, and 42; GPT-medium uses 9,016 updates at batch 768 (7.09B tokens),
with seed 42. Sequences have length 1024. Runs use BF16 autocast,
FP32 parameters and optimizer state, compiled updates, and highest
FP32 matrix-multiplication precision. 
Weight decay is zero ($\lambda_{\mathrm{wd}}=0$) and the global gradient-norm clip is 1.
The learning rate rises linearly for 10\% of the updates, stays constant
for 80\%, and decays linearly for the final 10\%.
Final validation evaluates all complete batches of the held-out
100M-token split and reports cross-entropy in nats per token.

Each learning rate is tested for the full training budget.
The GPT-small grid is
\[
\{0.00001,\,0.00003,\,0.0001,\,0.0003,\,
0.0006,\,0.001,\,0.003,\,0.006,\,0.01\},
\]
and the GPT-medium grid is
\[
\{0.0003,\,0.0006,\,0.001,\,0.002,\,0.003,\,0.006,\,0.009,\,0.012,\,0.02\}.
\]
For each method and model size, the best rate minimizes final
validation loss, averaged over the three seeds for GPT-small and using
one seed for GPT-medium.
\Cref{fig:adam} displays the
range $0.0003$--$0.01$: solid lines are three-seed means, shading spans
the seed minimum and maximum, and dashed lines use only one seed.
\Cref{fig:adam-training} shows the training losses at these selected rates.
\begin{figure}
    \centering
    \includegraphics[width=0.98\linewidth]{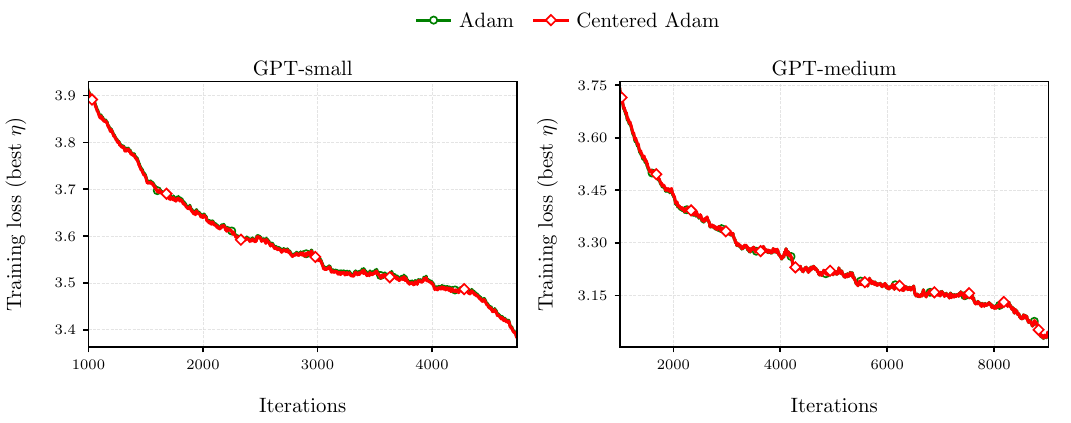}
    \caption{Adam and Centered Adam on FineWeb at the learning rates
    selected from the sweeps in~\Cref{fig:adam}. Left: GPT-small, with
    learning rate $0.003$ for both methods; lines show means over three seeds; shading: seed ranges.
    Right: GPT-medium, one seed, with learning rates $0.006$ for Adam
    and $0.003$ for Centered Adam. 
    % Training losses are smoothed separately
    % per seed with a 25M-token exponential moving average.
    }
    \label{fig:adam-training}
\end{figure}

\section{Derivations and proofs of~\Cref{sec:mn}}

\subsection{Proof of \Cref{lem:alternating}}
\label{sec:results-from-2-proof-1}

By~\Cref{prop:regularized-likelihood}, the mean has already been
eliminated and it remains to solve the covariance problem
\eqref{eq:mn-sigma_optimization}. We prove the closed-form block
updates, existence and uniqueness after trace normalization, and
convergence of alternating minimization. The argument uses the
geodesic convexity of Kronecker-structured covariance estimation
\citep{wiesel2012geodesic}.

\mrcalternating*

\begin{proof}
For every positive definite $\mS \in \R^{p \times p}$ and every
positive definite $\mSigma \in \R^{p \times p}$, the same scalar
inequality used in~\Cref{sec:diagonal-projection-proof} gives
\begin{equation}
\log|\mSigma| + \tr(\mSigma^{-1}\mS)
= \log|\mS| + \sum_{j=1}^{p}(\lambda_j - \log\lambda_j)
\geq \log|\mS| + p,
\label{eq:logdet-trace-bound}
\end{equation}
where $\lambda_1, \dots, \lambda_p > 0$ are the eigenvalues of
$\mS^{1/2}\mSigma^{-1}\mS^{1/2}$. Indeed, their sum is
$\tr(\mSigma^{-1}\mS)$ and their product is $|\mS|/|\mSigma|$,
so the equality follows by taking logarithms; the inequality follows
from $\lambda-\log\lambda\geq1$.
Equality holds if and only if
$\mSigma = \mS$. In the proof below we use
\eqref{eq:logdet-trace-bound} with $\mS = \hat{\mSigma}_t'$. This matrix is positive
definite because $\beta\hat{\mC}_{t-1} \otimes \hat{\mR}_{t-1}$ is positive definite and
the rank-one term in \eqref{eq:covariance_update} is positive semidefinite.

\emph{Step 1: $f$ on Kronecker products.}
Let $\mR$ and $\mC$ be positive definite, and set
$\mX_t=\mG_t-\hat{\mM}_{t-1}$, where
$\vec(\hat{\mM}_{t-1})=\hat{\vm}_{t-1}$.
We use four standard identities:
\begin{enumerate}[(i)]
  \item $|\mC \otimes \mR| = |\mC|^m|\mR|^n$;
  \item $(\mC \otimes \mR)^{-1} = \mC^{-1} \otimes \mR^{-1}$;
  \item $\tr\bigl((\mC^{-1} \otimes \mR^{-1})(\hat{\mC}_{t-1} \otimes \hat{\mR}_{t-1})\bigr)
  = \tr(\mC^{-1}\hat{\mC}_{t-1})\tr(\mR^{-1}\hat{\mR}_{t-1})$;
  \item $(\mC^{-1} \otimes \mR^{-1})\vec(\mX)
  = \vec(\mR^{-1}\mX\mC^{-1})$ for every $m \times n$ matrix $\mX$.
\end{enumerate}
Together with \eqref{eq:covariance_update}, these give
\begin{align}
f(\mC \otimes \mR)
&= m\log|\mC| + n\log|\mR|
+ \beta\tr(\mC^{-1}\hat{\mC}_{t-1})\tr(\mR^{-1}\hat{\mR}_{t-1}) \nonumber \\
&\quad + \beta(1-\beta)\tr\!\left(\mR^{-1}(\mG_t - \hat{\mM}_{t-1})\mC^{-1}(\mG_t - \hat{\mM}_{t-1})^\top\right).
\label{eq:f-kronecker}
\end{align}
The last trace can equivalently be written as
$\tr\bigl(\mC^{-1}\mX_t^\top\mR^{-1}\mX_t\bigr)$.

\emph{Step 2: The block minimizations.}
For positive definite $\mC$, let $\mathcal{R}(\mC)$ denote the right-hand side of
\eqref{eq:mn-R_update} with $\hat{\mC}_t^{(k)}$ replaced by $\mC$. For positive definite
$\mR$, let $\mathcal{C}(\mR)$ denote the right-hand side of \eqref{eq:mn-C_update} with
$\hat{\mR}_t^{(k+1)}$ replaced by $\mR$.

Note that $\tr(\mC^{-1}\hat{\mC}_{t-1}) = \tr(\mC^{-1/2}\hat{\mC}_{t-1}\mC^{-1/2}) > 0$.
Hence $\mathcal{R}(\mC)$ is a positive multiple of $\hat{\mR}_{t-1}$ plus a positive
semidefinite matrix, and is therefore positive definite. The same argument shows that
$\mathcal{C}(\mR)$ is positive definite.

Fix $\mC$. By \eqref{eq:f-kronecker}, the terms of $f(\mC \otimes \mR)$ that depend on
$\mR$ are
\[
n\log|\mR| + \tr\bigl(\mR^{-1}\,n\mathcal{R}(\mC)\bigr)
= n\left[\log|\mR| + \tr\bigl(\mR^{-1}\mathcal{R}(\mC)\bigr)\right].
\]
By \eqref{eq:logdet-trace-bound}, this is uniquely minimized at $\mR = \mathcal{R}(\mC)$,
which gives \eqref{eq:mn-R_update}. Symmetrically, for fixed $\mR$, the function
$\mC \mapsto f(\mC \otimes \mR)$ is uniquely minimized at
$\mC = \mathcal{C}(\mR)$, which gives \eqref{eq:mn-C_update}.

Both maps are continuous and scale-equivariant:
$\mathcal{R}(a\mC) = a^{-1}\mathcal{R}(\mC)$ and
$\mathcal{C}(a^{-1}\mR) = a\,\mathcal{C}(\mR)$ for $a > 0$. The rescaling step therefore
changes neither $f$ nor the relation
$\hat{\mC}_t^{(k+1)} = \mathcal{C}(\hat{\mR}_t^{(k+1)})$.

\emph{Step 3: Compact sublevel sets and existence.}
Fix $c \in \R$. Let $\mathcal{K}_c$ be the set of positive definite pairs $(\mR, \mC)$
with $\tr(\mR) = \tr(\mC)$ and $f(\mC \otimes \mR) \leq c$. We show that
$\mathcal{K}_c$ is compact.

Let $(\mR, \mC) \in \mathcal{K}_c$. Each summand $\lambda_j - \log\lambda_j$ in
\eqref{eq:logdet-trace-bound} is at least $1$. Hence each satisfies
$\lambda_j - \log\lambda_j \leq c - \log|\hat{\mSigma}_t'| - (mn - 1)$. Since
$x - \log x \to \infty$ as $x \to 0^+$ and as $x \to \infty$, all eigenvalues of
$(\hat{\mSigma}_t')^{1/2}(\mC \otimes \mR)^{-1}(\hat{\mSigma}_t')^{1/2}$ lie in an
interval $[a_c, b_c] \subset (0, \infty)$ that depends only on $c$. Consequently
$\ell_c\mI_{mn} \preceq \mC \otimes \mR \preceq u_c\mI_{mn}$, with
$\ell_c = \lambda_{\min}(\hat{\mSigma}_t')/b_c > 0$ and
$u_c = \lambda_{\max}(\hat{\mSigma}_t')/a_c$.

Since $\tr(\mC)\tr(\mR) = \tr(\mC \otimes \mR) \in [mn\,\ell_c, mn\,u_c]$ and
$\tr(\mR) = \tr(\mC)$, both traces lie in $[\sqrt{mn\,\ell_c}, \sqrt{mn\,u_c}]$. In
particular, $\lambda_{\max}(\mR)$ and $\lambda_{\max}(\mC)$ are at most
$\sqrt{mn\,u_c}$. Since
$\lambda_{\min}(\mC)\lambda_{\min}(\mR) = \lambda_{\min}(\mC \otimes \mR) \geq \ell_c$,
we also get $\lambda_{\min}(\mR) \geq \ell_c/\sqrt{mn\,u_c}$, and likewise for $\mC$.

Thus $\mathcal{K}_c$ is contained in the compact set of pairs whose eigenvalues all lie in
$[\ell_c/\sqrt{mn\,u_c}, \sqrt{mn\,u_c}]$. On that set, $f(\mC \otimes \mR)$ and
$\tr(\mR) - \tr(\mC)$ are continuous, so $\mathcal{K}_c$ is closed in it and hence
compact.

The reciprocal rescaling $(\mR,\mC)\mapsto(s\mR,s^{-1}\mC)$
preserves $\mC\otimes\mR$ and hence $f$. Choosing
$s=\sqrt{\tr(\mC)/\tr(\mR)}$ makes the traces equal.
Thus every positive definite pair can be trace normalized without
changing $f$. Choose $c$ such that $\mathcal{K}_c$ is nonempty. The continuous function
$f(\mC \otimes \mR)$ attains its minimum over the compact set $\mathcal{K}_c$, and this
minimum is the global minimum.

\emph{Step 4: Geodesic convexity and uniqueness.}
For positive definite matrices $\mA$ and $\mB$ of equal size and $s \in \R$, let
\[
\gamma_{\mA, \mB}(s) := \mA^{1/2}(\mA^{-1/2}\mB\mA^{-1/2})^{s}\mA^{1/2}.
\]
This is positive definite, with $\gamma_{\mA, \mB}(0) = \mA$ and
$\gamma_{\mA, \mB}(1) = \mB$. Let $\sum_i \lambda_i\vu_i\vu_i^\top$ be an eigendecomposition of
$\mA^{-1/2}\mB\mA^{-1/2}$, and take $\mA, \mB \in \R^{mn \times mn}$. Then
\begin{align*}
\log|\gamma_{\mA, \mB}(s)| &= \log|\mA| + s\sum_i \log\lambda_i, \\
\tr\bigl(\gamma_{\mA, \mB}(s)^{-1}\hat{\mSigma}_t'\bigr)
&= \sum_i \bigl(\vu_i^\top\mA^{-1/2}\hat{\mSigma}_t'\mA^{-1/2}\vu_i\bigr)\,e^{-s\log\lambda_i}.
\end{align*}
The first expression is affine in $s$. In the second, the weights are positive and each
$e^{-s\log\lambda_i}$ is convex, strictly so when $\lambda_i \neq 1$. Hence
$s \mapsto f(\gamma_{\mA, \mB}(s))$ is convex, and strictly convex unless all
$\lambda_i = 1$, that is, unless $\mA = \mB$.

Now suppose $\mA = \mC_1 \otimes \mR_1$ and $\mB = \mC_2 \otimes \mR_2$.
Then $\mA^{\pm 1/2} = \mC_1^{\pm 1/2} \otimes \mR_1^{\pm 1/2}$, and by the
mixed-product property
$\mA^{-1/2}\mB\mA^{-1/2}
= (\mC_1^{-1/2}\mC_2\mC_1^{-1/2}) \otimes (\mR_1^{-1/2}\mR_2\mR_1^{-1/2})$.
Diagonalizing both factors shows that the $s$-th power of a Kronecker product of positive
definite matrices is the Kronecker product of their $s$-th powers. Hence
\begin{equation}
\gamma_{\mC_1 \otimes \mR_1,\, \mC_2 \otimes \mR_2}(s)
= \gamma_{\mC_1, \mC_2}(s) \otimes \gamma_{\mR_1, \mR_2}(s).
\label{eq:kron-geodesic}
\end{equation}
In particular, this curve stays within Kronecker products of positive definite matrices.

Suppose $\mC_1 \otimes \mR_1 \neq \mC_2 \otimes \mR_2$ were both global minimizers.
By strict convexity, $f$ at the feasible point
$\gamma_{\mC_1, \mC_2}(\tfrac{1}{2}) \otimes \gamma_{\mR_1, \mR_2}(\tfrac{1}{2})$
would be strictly smaller than the minimum, which is a contradiction. The minimizing product
is therefore unique.

It remains to show uniqueness of the normalized factors. Suppose
$\mC \otimes \mR = \mC' \otimes \mR'$ with all four matrices positive definite.
Comparing the $(k,l)$-th $m \times m$ blocks gives
$[\mC]_{kl}\mR = [\mC']_{kl}\mR'$. Taking $k = l = 1$ and using
$[\mC]_{11}, [\mC']_{11} > 0$ yields $\mR' = a^{-1}\mR$ with
$a = [\mC']_{11}/[\mC]_{11} > 0$. Substituting back gives $\mC' = a\mC$. The
normalization $\tr(a^{-1}\mR) = \tr(a\mC)$ determines $a > 0$ uniquely. This proves the
existence and uniqueness of $(\hat{\mR}_t^{\star}, \hat{\mC}_t^{\star})$.

\emph{Step 5: Blockwise minimizers are global minimizers.}
Let $(\bar{\mR}, \bar{\mC})$ be positive definite with
$\bar{\mR} = \mathcal{R}(\bar{\mC})$ and $\bar{\mC} = \mathcal{C}(\bar{\mR})$.
The function $(\mR, \mC) \mapsto f(\mC \otimes \mR)$ is smooth on the open set of
positive definite pairs. By Step 2, $(\bar{\mR}, \bar{\mC})$ is an unconstrained
minimizer in each block separately. Both partial derivatives therefore vanish there, and hence
so does the full derivative.

Let
$\varphi(s) = f\bigl(\gamma_{\bar{\mC}, \hat{\mC}_t^{\star}}(s) \otimes \gamma_{\bar{\mR}, \hat{\mR}_t^{\star}}(s)\bigr)$
for $s \in \R$. By \eqref{eq:kron-geodesic} and Step 4, $\varphi$ is convex. By the chain rule
and the vanishing derivative, $\varphi'(0) = 0$. Hence
$\varphi(1) \geq \varphi(0) + \varphi'(0) = \varphi(0)$, that is,
$f(\hat{\mC}_t^{\star} \otimes \hat{\mR}_t^{\star}) \geq f(\bar{\mC} \otimes \bar{\mR})$.
So $(\bar{\mR}, \bar{\mC})$ is a global minimizer. By Step 4,
$\bar{\mC} \otimes \bar{\mR} = \hat{\mC}_t^{\star} \otimes \hat{\mR}_t^{\star}$.

\emph{Step 6: Convergence of the alternating scheme.}
Consider the normalized iterates for $k \geq 2$, and let
$h_k=f(\hat{\mC}_t^{(k)}\otimes\hat{\mR}_t^{(k)})$.
Write the factors before normalization as
$\mR^+=\mathcal{R}(\hat{\mC}_t^{(k)})$ and
$\mC^+=\mathcal{C}(\mR^+)$.
The normalized pair is $(a\mR^+,a^{-1}\mC^+)$, where
$a=\sqrt{\tr(\mC^+)/\tr(\mR^+)}$.
By Step 2 and invariance under this reciprocal rescaling,
\[
h_{k+1}
=f(\mC^+\otimes\mR^+)
\leq f(\hat{\mC}_t^{(k)}\otimes\mR^+)
\leq f(\hat{\mC}_t^{(k)}\otimes\hat{\mR}_t^{(k)})
=h_k.
\]
Here the intermediate row factor is the unnormalized block minimizer;
normalization is applied only after both block minimizations.
The sequence $(h_k)$ is nonincreasing and, by \eqref{eq:logdet-trace-bound}, bounded below by
$\log|\hat{\mSigma}_t'| + mn$. It therefore converges to some limit $\bar{h}$. All iterates
lie in the compact set $\mathcal{K}_{h_2}$ of Step 3.

Let $(\bar{\mR}, \bar{\mC})$ be the limit of a convergent subsequence
$(\hat{\mR}_t^{(k_j)}, \hat{\mC}_t^{(k_j)})$. This limit is a positive definite pair with
$\tr(\bar{\mR}) = \tr(\bar{\mC})$. Up to rescaling, which does not affect $f$,
\[
h_{k_j+1} = f\bigl(\mathcal{C}(\mathcal{R}(\hat{\mC}_t^{(k_j)})) \otimes \mathcal{R}(\hat{\mC}_t^{(k_j)})\bigr).
\]
By continuity of $\mathcal{R}$, $\mathcal{C}$ and $f$, the right-hand side converges to
$f\bigl(\mathcal{C}(\mathcal{R}(\bar{\mC})) \otimes \mathcal{R}(\bar{\mC})\bigr)$, and the
left-hand side converges to $\bar{h}$. Also $f(\bar{\mC} \otimes \bar{\mR}) = \bar{h}$.
Using the block-minimizing property of $\mathcal{C}$ and then that of $\mathcal{R}$,
\[
\bar{h}
= f\bigl(\mathcal{C}(\mathcal{R}(\bar{\mC})) \otimes \mathcal{R}(\bar{\mC})\bigr)
\leq f\bigl(\bar{\mC} \otimes \mathcal{R}(\bar{\mC})\bigr)
\leq f(\bar{\mC} \otimes \bar{\mR})
= \bar{h}.
\]
All inequalities are therefore equalities.

The second equality says that $\bar{\mR}$ minimizes $f(\bar{\mC} \otimes \cdot\,)$. By
uniqueness of the block minimizer, $\bar{\mR} = \mathcal{R}(\bar{\mC})$. The first
equality then reads
$f(\mathcal{C}(\bar{\mR}) \otimes \bar{\mR}) = f(\bar{\mC} \otimes \bar{\mR})$, so
$\bar{\mC} = \mathcal{C}(\bar{\mR})$ by the same uniqueness.

By Step 5, $\bar{\mC} \otimes \bar{\mR} = \hat{\mC}_t^{\star} \otimes \hat{\mR}_t^{\star}$.
Since $(\bar{\mR}, \bar{\mC})$ is normalized, Step 4 gives
$(\bar{\mR}, \bar{\mC}) = (\hat{\mR}_t^{\star}, \hat{\mC}_t^{\star})$. The iterates
lie in a compact set, and every convergent subsequence has this same limit. Hence the whole
sequence converges to $(\hat{\mR}_t^{\star}, \hat{\mC}_t^{\star})$.
\end{proof}

\subsection{Proof of \Cref{cor:mn-optimization_solution}}
\label{sec:results-from-2-proof-2}

Finally, we combine all the results above to obtain a proof of \Cref{cor:mn-optimization_solution}.

% \mnoptsolution*
\begin{restatable}[Solution to \eqref{eq:mn-sigma_optimization}]{corollary}{mnoptsolution}
\label{cor:mn-optimization_solution}
Assume $0 < \beta < 1$ and that $\hat{\mR}_{t-1}$ and $\hat{\mC}_{t-1}$ are positive
definite. The covariance projection problem \eqref{eq:mn-sigma_optimization} admits a
global solution, unique in $(\mR,\mC)$ under the normalization
$\tr(\mR)=\tr(\mC)$. Together with $\vm=\hat{\vm}_t$, it also solves
the regularized likelihood problem \eqref{eq:full-optimization_problem}
restricted to $\mSigma=\mC\otimes\mR$,
and it is unique in $(\vm, \mR, \mC)$ under the normalization
$\tr(\mR) = \tr(\mC)$. This solution is given by the mean update
\eqref{eq:mean_update}, together with the limit of the factor updates
\eqref{eq:mn-R_update} and \eqref{eq:mn-C_update} iterated to convergence.
\end{restatable}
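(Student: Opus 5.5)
The corollary is essentially a repackaging of \Cref{lem:alternating} together with \Cref{prop:regularized-likelihood}, so the plan is to assemble those two results with $\mathcal{S}$ taken to be the class of Kronecker products $\mC\otimes\mR$ with $\mR,\mC\succ0$. First, I would invoke \Cref{lem:alternating} directly. Its Steps 3 and 4 already establish that $f(\mC\otimes\mR)$ in \eqref{eq:mn-sigma_optimization} attains its global minimum and that the minimizing product is unique. Its factor-uniqueness argument, comparing the $(1,1)$ blocks, shows that two factorizations of the same product differ by a reciprocal scaling $(a^{-1}\mR,a\mC)$. Hence the trace normalization singles out a unique pair $(\hat{\mR}_t^\star,\hat{\mC}_t^\star)$. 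Step 6 of the same proposition identifies this pair as the limit of the alternating updates \eqref{eq:mn-R_update}--\eqref{eq:mn-C_update}, with rescaling, from any positive definite start. This disposes of the first and last claims.

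Second, to pass to the regularized likelihood problem \eqref{eq:full-optimization_problem} restricted to $\mSigma=\mC\otimes\mR$, I would apply \Cref{prop:regularized-likelihood} with this nonempty class $\mathcal{S}$. The hypothesis $\hat{\mSigma}_{t-1}=\hat{\mC}_{t-1}\otimes\hat{\mR}_{t-1}\succ0$ holds because Kronecker products of positive definite matrices are positive definite. The proposition says that $(\vm,\mSigma)$ solves the restricted problem if and only if $\vm=\hat{\vm}_t$ and $\mSigma$ minimizes $f$ over $\mathcal{S}$. Combined with the first step, the minimizers are exactly the pairs $(\hat{\vm}_t,\mC\otimes\mR)$ with $\mC\otimes\mR=\hat{\mC}_t^\star\otimes\hat{\mR}_t^\star$.

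Third comes uniqueness in $(\vm,\mR,\mC)$. The mean is forced to equal $\hat{\vm}_t$ by the ``only if'' direction, since the quadratic term in \eqref{eq:objective-rewritten} vanishes only there. The factors are then forced by uniqueness of the minimizing product together with the normalization argument from the first step.

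The only delicate point is bookkeeping about what ``unique'' means. The objective depends on $(\mR,\mC)$ only through the product, so uniqueness must be stated modulo reciprocal scaling and then fixed by the trace condition. I expect no real obstacle here beyond citing the block-comparison argument from Step 4 correctly.
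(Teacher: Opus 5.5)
Your proposal is correct and matches the paper's proof. Like the paper, you invoke \Cref{lem:alternating} for existence, uniqueness of the trace-normalized factors and convergence of the rescaled sweeps. You then apply \Cref{prop:regularized-likelihood} with $\mathcal{S}=\{\mC\otimes\mR:\mR,\mC\succ0\}$ to adjoin $\vm=\hat{\vm}_t$ and obtain uniqueness of the triple $(\vm,\mR,\mC)$.
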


\begin{proof}
Since $\hat{\mR}_{t-1}$ and $\hat{\mC}_{t-1}$ are positive definite,
$\hat{\mSigma}_{t-1}=\hat{\mC}_{t-1}\otimes\hat{\mR}_{t-1}\succ0$.
By~\Cref{lem:alternating}, the covariance problem
\eqref{eq:mn-sigma_optimization} has a unique minimizing Kronecker
product, and its factors are unique when their traces are equal.
The same proposition proves that the normalized factor iterates
\eqref{eq:mn-R_update}--\eqref{eq:mn-C_update} converge to these
factors from any positive definite initialization.

Apply~\Cref{prop:regularized-likelihood} with
$\mathcal{S}=\{\mC\otimes\mR:\mR\succ0,\mC\succ0\}$.
It shows that adjoining the unique mean $\vm=\hat{\vm}_t$ from
\eqref{eq:mean_update} gives a global solution of
\eqref{eq:full-optimization_problem} restricted to this
$\mathcal{S}$. Uniqueness of the mean and of the trace-normalized
factors gives uniqueness of the resulting triple.
\end{proof}

\subsection{Diagonal covariance factors and MND}
\label{sec:mnd-factor-proof}

For completeness, we derive the diagonal-factor specialization used
in~\Cref{alg:mnd-method}. Write
$\mR=\diag(\vr)$ and $\mC=\diag(\vc)$, with positive entries,
and assume the previous factors are likewise diagonal and positive.
Let
$\mX_t=\mG_t-\hat{\mM}_{t-1}$ as in~\Cref{lem:alternating}.
For fixed $\vc$, the row-dependent terms in~\eqref{eq:f-kronecker}
are
\[
n\sum_{i=1}^m\log r_i
+\sum_{i=1}^m\frac{
\beta\langle\vc^{-1},\hat\vc_{t-1}\rangle\hat r_{t-1,i}
+\beta(1-\beta)\sum_{j=1}^n X_{t,ij}^2/c_j
}{r_i},
\]
where $\vc^{-1}$ denotes the elementwise reciprocal.
By the scalar minimization in~\Cref{sec:diagonal-projection-proof},
the unique row minimizer is
\[
\vr^+
=\frac{\beta}{n}\langle\vc^{-1},\hat\vc_{t-1}\rangle\hat\vr_{t-1}
+\frac{\beta(1-\beta)}{n}(\mX_t\circ\mX_t)\vc^{-1}.
\]
Similarly, the unique column minimizer, using this new row factor, is
\[
\vc^+
=\frac{\beta}{m}\langle(\vr^+)^{-1},\hat\vr_{t-1}\rangle\hat\vc_{t-1}
+\frac{\beta(1-\beta)}{m}(\mX_t\circ\mX_t)^\top(\vr^+)^{-1}.
\]
Starting a sweep at $\vc=\hat\vc_{t-1}$ makes the first inner product
equal to $n$, giving the row update in~\Cref{alg:mnd-method}.
The column update and reciprocal trace balancing then give the
remaining factor steps. The clipped inverses in that algorithm agree
with these exact minimizers whenever the entries being inverted are
at least $\varepsilon$; clipping otherwise gives a stabilized update.
The algorithm uses one sweep per observation.

If the exact block updates are instead iterated to convergence,
the existence, uniqueness after trace normalization, and convergence
arguments in~\Cref{sec:results-from-2-proof-1} apply within the class
of diagonal factors. Indeed, diagonal matrices remain diagonal along
the geodesics used there, and the block minimizers above are positive
and unique. Thus this specialization solves the diagonal-factor
restriction of~\eqref{eq:mn-sigma_optimization}.

\subsection{Shampoo via moment matching}
\label{sec:shampoo-moment-matching}

We can estimate the covariance factors $\mR_t$ and $\mC_t$ using products of the gradients. Indeed, the relationship between the moments of the centered residuals $\mX_t = \mG_t - \mM_t$ and $\mR_t$ and $\mC_t$ is given by the following lemma.

\begin{lemma}[Matrix Normal Gram moments]
\label{lem:mn-gram-moments}
Let $\mX\sim\MN_{m,n}(0,\mR,\mC)$, where $\mR\succ0$ and
$\mC\succ0$.  Then
\begin{equation}
    \mathbb{E}\left[\mX\mX^\top\right]
    =
    \tr(\mC)\mR,
    \qquad
    \mathbb{E}\left[\mX^\top\mX\right]
    =
     \tr(\mR)\mC.
    \label{eq:appendix-gram-moments}
\end{equation}
Thus the expected Gram matrices are proportional to the covariance factors.
% Consequently, the row and column Gram matrices estimate the two covariance
% factors up to global scale constants.
\end{lemma}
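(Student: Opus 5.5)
The plan is to reduce to the vectorized Gaussian and compute the covariance of $\vec(\mX)$ blockwise. By the definition of the Matrix Normal distribution used throughout the paper, $\vec(\mX)\sim\Normal(0,\mC\otimes\mR)$. Write $\mX=[\vv_1,\dots,\vv_n]$ with columns $\vv_j\in\R^m$, so that $\vec(\mX)$ stacks $\vv_1,\dots,\vv_n$. The $(k,l)$-th $m\times m$ block of $\mC\otimes\mR$ is $[\mC]_{kl}\mR$. Hence $\mathbb{E}[\vv_k\vv_l^\top]=[\mC]_{kl}\mR$ for all $k,l$. This is the only structural fact needed, and it was already used in Step 4 of the proof of \Cref{lem:alternating}.

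For the first identity, expand $\mX\mX^\top=\sum_{j=1}^n \vv_j\vv_j^\top$ and take expectations to obtain
\[
\mathbb{E}[\mX\mX^\top]=\sum_{j=1}^n[\mC]_{jj}\mR=\tr(\mC)\mR .
\]

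For the second identity, look at entries. We have $[\mX^\top\mX]_{kl}=\vv_k^\top\vv_l=\tr(\vv_l\vv_k^\top)$. Taking expectations gives $\tr\bigl([\mC]_{lk}\mR\bigr)=[\mC]_{lk}\tr(\mR)$, and by symmetry of $\mC$ this equals $[\mC]_{kl}\tr(\mR)$. Therefore $\mathbb{E}[\mX^\top\mX]=\tr(\mR)\mC$. Alternatively, one can note that $\mX^\top\sim\MN_{n,m}(0,\mC,\mR)$, since transposing a matrix permutes $\vec$ into the commuted Kronecker product $\mR\otimes\mC$, and then apply the first identity.

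There is no real obstacle here. The only point needing care is the block-indexing convention, namely that $\vec$ stacks columns so that the covariance is $\mC\otimes\mR$ and not $\mR\otimes\mC$. This matches \eqref{eq:mat-norm-fact} and identity (iv) in the proof of \Cref{lem:alternating}.
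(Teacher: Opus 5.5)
Your proof is correct and takes essentially the same route as the paper: both use that $\vec(\mX)$ has covariance $\mC\otimes\mR$ and then sum the relevant entries. The paper works entrywise with $\mathbb{E}[X_{ij}X_{k\ell}]=R_{ik}C_{j\ell}$, while you group the same fact into column blocks $\mathbb{E}[\vv_k\vv_l^\top]=[\mC]_{kl}\mR$, and your transpose argument for the second identity is also valid.
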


\begin{proof}
By the definition of the Matrix Normal distribution,  $\vec(\mX) \sim \Normal(0, \mC\otimes \mR)$. Thus, $\E{\vec(\mX)\vec(\mX)^{\top}} = \mC \otimes \mR$, and
$\mathbb{E}[\mX_{ij}\mX_{k\ell}]
=\mR_{ik}\mC_{j\ell}$.  Therefore
\[
    \left[\mathbb{E}(\mX\mX^\top)\right]_{ik}
    =
    \sum_{j=1}^n
    \mathbb{E}[\mX_{ij}\mX_{kj}]
    =
    \sum_{j=1}^n
    \mR_{ik}\mC_{jj}
    =
    \tr(\mC)\mR_{ik}.
\]
Similarly,
\[
    \left[\mathbb{E}(\mX^\top\mX)\right]_{j\ell}
    =
    \sum_{i=1}^m
    \mathbb{E}[\mX_{ij}\mX_{i\ell}]
    =
    \sum_{i=1}^m
    \mR_{ii}\mC_{j\ell}
    =
   \tr(\mR)\mC_{j\ell}.
\]
\end{proof}

Note this relationship requires access to the population mean of the gradient $\mM_t$, which is infeasible in practice. We can use the identities~\eqref{eq:appendix-gram-moments} to build estimates of our factor matrices $\hat{\mR}_t$ and $\hat{\mC}_t$ as follows.

Let $\hat{\mM}_t$ be our current estimate of the mean gradient (often using momentum) and let $\hat{\mX}_t := \mG_t - \hat{\mM}_t$ be our estimate of the centered gradient. To estimate the expectations in~\eqref{eq:appendix-gram-moments} we can use an exponential moving average given by
\begin{align} \label{eq:shampoofactor}
    \tilde{\mR}_t 
   & =
    \gamma\tilde{\mR}_{t-1} 
    +
    (1-\gamma)\hat{\mX}_t\hat{\mX}_t^\top,
    \qquad
    \tilde{\mC}_t
    =
    \gamma\tilde{\mC}_{t-1}
    +
    (1-\gamma)\hat{\mX}_t^\top \hat{\mX}_t.
\end{align} 

  If one uses $\mG_t$ instead of the centered
residual $\hat{\mX}_t$, then the same formulas estimate uncentered second moments
rather than the covariance of the Matrix Normal noise.  This is often the
choice made by adaptive optimizers such as Shampoo~\citep{gupta2018shampoo}.

There are a few disadvantages to the above approach. First, replacing the population mean by $\hat{\mM}_t$ changes the Gram moments; the moment identities do not by themselves account for the error in this mean estimate. Furthermore, \eqref{eq:shampoofactor} does not correct for the scale of the factors. Indeed, the identity~\eqref{eq:appendix-gram-moments} that gives the $\mR_t$ factor depends on the trace of $\mC_t$, which is not reflected in the updates~\eqref{eq:shampoofactor}. Our approach of modeling $(\hat{\mM}_t, \hat{\mR}_t, \hat{\mC}_t)$ jointly, on the other hand, can more properly address these issues.

\section{Derivations and proofs of~\Cref{sec:mn-muon}}
\label{app:mn-muon}

Throughout this section, $0 < \beta < 1$, the previous factors $\hat{\mR}_{t-1}$ and
$\hat{\mC}_{t-1}$ are positive definite, and
$\hat{\mSigma}_{t-1} = \hat{\mC}_{t-1} \otimes \hat{\mR}_{t-1}$. We write
$p = \min\{m,n\}$, so that $\|\mD\|_F^2 = p$ for every $\mD \in \St(m,n)$.

\subsection{Prior}
\label{sec:results-from-4-prior}

The Stiefel manifold $\St(m,n)$ is compact. We equip it with its uniform measure, that is, the
unique probability measure invariant under $\mD \mapsto \mU\mD\mV$ for orthogonal
$\mU \in \R^{m \times m}$ and $\mV \in \R^{n \times n}$. For $\mF \in \R^{m \times n}$,
the Matrix Langevin distribution $\mathrm{ML}_{m,n}(\mF)$ has density
\begin{equation}
p(\mD \mid \mF) = \frac{\exp(\langle \mF, \mD \rangle)}{c(\mF)},
\qquad
c(\mF) = \int_{\St(m,n)} \exp(\langle \mF, \mD' \rangle)\, d\mD',
\label{eq:ml-density}
\end{equation}
with respect to the uniform measure. The normalizing constant is finite and positive, since the
integrand is continuous and positive on a compact set. 

Suppose $\mF$ has full row or column rank, i.e., $\operatorname{rank}(\mF)=p$.
Let $\mF = \mU\mS\mV^\top$ be a compact singular value decomposition, with
$\mU \in \R^{m \times p}$ and $\mV \in \R^{n \times p}$ having orthonormal columns and
$\mS = \diag(s_1, \dots, s_p)$ with $s_1 \geq \dots \geq s_p > 0$. We write
$\Polar(\mF) = \mU\mV^\top$, which lies in $\St(m,n)$.
This polar factor does not depend on the chosen decomposition. For $m \geq n$,
$\Polar(\mF) = \mF(\mF^\top\mF)^{-1/2}$, and for $m < n$, $\Polar(\mF)=(\mF\mF^\top)^{-1/2}\mF$.

\begin{lemma}[Mode of the matrix Langevin distribution]
\label{lem:ml-mode}
Let $\mF \in \R^{m \times n}$ have rank $p=\min\{m,n\}$.
\begin{enumerate}[(i)]
  \item $\|\mF\|_* = \underset{\mD \in \St(m,n)}{\max}\langle \mF, \mD \rangle$, where
  $\|\mF\|_* = \sum_{i=1}^{p} s_i$ is the nuclear norm. A maximum is attained at
  $\Polar(\mF)$.
  \item $\Polar(\mF)$ is the unique maximizer.
  \item $\Polar(a\mF) = \Polar(\mF)$ for every $a > 0$.
\end{enumerate}
Consequently, since \eqref{eq:ml-density} is increasing in $\langle \mF, \mD \rangle$,
$\Polar(\mF)$ is the unique mode of $\mathrm{ML}_{m,n}(\mF)$.
\end{lemma}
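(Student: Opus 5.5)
The plan is to reduce to the singular values via von Neumann's trace inequality, or, more elementarily, via the SVD of $\mF$. Write $\mF=\mU\mS\mV^\top$ (compact SVD). For $\mD\in\St(m,n)$, we have $\langle\mF,\mD\rangle=\tr(\mS\,\mU^\top\mD\mV)=\sum_{i=1}^p s_i\,\vu_i^\top\mD\vv_i$. Since every singular value of $\mD$ equals one, $\|\mD\|_2=1$, and so $\vu_i^\top\mD\vv_i\le\|\vu_i\|\|\mD\vv_i\|\le 1$. Hence $\langle\mF,\mD\rangle\le\sum_i s_i=\|\mF\|_*$. At $\mD=\Polar(\mF)=\mU\mV^\top$ we get $\langle\mF,\mU\mV^\top\rangle=\tr(\mV\mS\mU^\top\mU\mV^\top)=\tr(\mS)$, so the bound is attained. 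This proves (i), and shows that $\Polar(\mF)$ lies in $\St(m,n)$ because $\mU$ and $\mV$ have orthonormal columns and $p=\min\{m,n\}$.

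For uniqueness (ii), suppose $\langle\mF,\mD\rangle=\|\mF\|_*$. Since every $s_i>0$ by the full-rank assumption, each inequality above must be tight: $\vu_i^\top\mD\vv_i=1$ for all $i$. Combined with $\|\mD\vv_i\|\le1$ and $\|\vu_i\|=1$, Cauchy--Schwarz equality forces $\mD\vv_i=\vu_i$ for every $i$, that is, $\mD\mV=\mU$. Symmetrically, $\|\mD^\top\vu_i\|\le1$ gives $\mD^\top\mU=\mV$.

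We now split into cases. If $m\ge n$, then $p=n$ and $\mV$ is square orthogonal, so $\mD=\mD\mV\mV^\top=\mU\mV^\top$. If $m<n$, then $p=m$ and $\mU$ is square orthogonal, so $\mD^\top=\mV\mU^\top$, and again $\mD=\mU\mV^\top$. The main (minor) obstacle is exactly this step: the reconstruction of $\mD$ from its action on the singular vectors needs one of $\mU,\mV$ to be square, which is where the rank-$p$ hypothesis is essential. Without it, zero singular values would leave $\mD$ free on a complementary subspace and the maximizer would not be unique. This argument also confirms that $\Polar(\mF)$ is independent of the chosen SVD, since it is characterized as the unique maximizer.

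Part (iii) is immediate: $a\mF=\mU(a\mS)\mV^\top$ is a compact SVD with the same singular vectors, so $\Polar(a\mF)=\mU\mV^\top$. Equivalently, $\langle a\mF,\mD\rangle=a\langle\mF,\mD\rangle$ has the same maximizer. Finally, the density \eqref{eq:ml-density} is $\exp(\langle\mF,\mD\rangle)/c(\mF)$ with $c(\mF)\in(0,\infty)$, which is a strictly increasing function of $\langle\mF,\mD\rangle$. Its unique mode is therefore the unique maximizer from (ii), namely $\Polar(\mF)$.
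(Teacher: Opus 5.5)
Your proposal is correct and follows essentially the same route as the paper: SVD reduction, a termwise bound of each $s_i$-weighted entry by the unit operator norm of $\mD$, equality analysis using $s_i>0$, a case split on whether $\mV$ or $\mU$ is square, and the same-singular-vectors argument for (iii). The one small difference is in uniqueness. You apply Cauchy--Schwarz in the full space to get $\mD\mV=\mU$ directly, whereas the paper first obtains $\mU^\top\mD\mV=\mI_p$ and then needs a projector argument with $\mI_m-\mU\mU^\top$ and $\mD^\top\mD=\mI_n$; your version is slightly more direct.
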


\begin{proof}
Using $\mU^\top\mU = \mV^\top\mV = \mI_p$, we have
$\langle \mF, \Polar(\mF) \rangle
= \langle \mF, \mU\mV^\top \rangle = \tr(\mS) = \|\mF\|_*$.
For the upper bound, let $\mD \in \St(m,n)$ and $\mZ = \mU^\top\mD\mV \in \R^{p \times p}$.
All singular values of $\mD$ equal one, so $\|\mZ\|_{\rm op} \leq 1$, and in particular each
column of $\mZ$ has Euclidean norm at most one. Hence
\[
\langle \mF, \mD \rangle
= \tr(\mV\mS\mU^\top\mD)
= \tr(\mS\mZ)
= \sum_{i=1}^{p} s_i Z_{ii}
\leq \sum_{i=1}^{p} s_i .
\]

Since all $s_i > 0$, equality in the bound forces $Z_{ii} = 1$ for all $i$. Since each column
of $\mZ$ has norm at most one, this forces $\mZ = \mI_p$.

Assume first $m \geq n$, so $p = n$ and $\mV$ is orthogonal. From
$\mU^\top\mD\mV = \mI_n$ we get $\mU^\top\mD = \mV^\top$. Let
$\mP = \mI_m - \mU\mU^\top$. Then $\mD = \mU\mV^\top + \mP\mD$, and,
since $\mU^\top\mP = 0$ and $\mP^2 = \mP$,
\[
\mI_n = \mD^\top\mD = \mV\mV^\top + (\mP\mD)^\top(\mP\mD)
= \mI_n + (\mP\mD)^\top(\mP\mD).
\]
Hence $\mP\mD = 0$ and $\mD = \mU\mV^\top$. If $m < n$, apply the same
argument to $\mF^\top$ and $\mD^\top \in \St(n,m)$, using
$\langle \mF, \mD \rangle = \langle \mF^\top, \mD^\top \rangle$.

Finally,
$a\mF = \mU(a\mS)\mV^\top$ is a singular value decomposition of $a\mF$ with the
same $\mU$ and $\mV$. Since the polar factor is unique, this gives
$\Polar(a\mF)=\Polar(\mF)$.
\end{proof}

\subsection{Posterior}
\label{sec:results-from-4-posterior}

Recall that $\mG \sim \MN_{m,n}(\mM, \mR, \mC)$ means
$\vec(\mG) \sim \Normal(\vec(\mM), \mC \otimes \mR)$. Since
$|\mC \otimes \mR| = |\mC|^m|\mR|^n$, its density is
\begin{equation}
\MN_{m,n}(\mG \mid \mM, \mR, \mC)
= (2\pi)^{-\frac{mn}{2}}|\mR|^{-\frac{n}{2}}|\mC|^{-\frac{m}{2}}
\exp\!\left(-\frac{1}{2}\bigl\|\mR^{-1/2}(\mG - \mM)\mC^{-1/2}\bigr\|_F^2\right).
\label{eq:mn-density}
\end{equation}

\begin{lemma}[Conditional posterior of the direction]
\label{lem:ml-posterior}
Fix positive definite $\mR_t$ and $\mC_t$. Under the model
$\mG_t \mid \mD_t, \mR_t, \mC_t \sim \MN_{m,n}(\mR_t^{1/2}\mD_t\mC_t^{1/2}, \mR_t, \mC_t)$
and the priors \eqref{eq:mn-muon-direction-prior}--\eqref{eq:mn-muon-covariance-prior}, we have
\[
\mD_t \mid \mG_t, \mR_t, \mC_t, \widetilde{\mM}_{t-1}
\sim \mathrm{ML}_{m,n}\left(\frac{1}{1-\beta}\widetilde{\mM}_t\right),
\]
with $\widetilde{\mM}_t$ as in \eqref{eq:white-momentum}. If $\widetilde{\mM}_t$ has full rank,
its unique mode is $\hat{\mD}_t = \Polar(\widetilde{\mM}_t)$.
\end{lemma}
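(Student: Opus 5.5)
The plan is to compute the conditional posterior kernel in $\mD_t$ directly, by multiplying the likelihood \eqref{eq:observe-mn-muon} by the direction prior \eqref{eq:mn-muon-direction-prior}. The factor kernel \eqref{eq:mn-muon-covariance-prior} does not depend on $\mD_t$ once $(\mR_t,\mC_t)$ are fixed, so it is absorbed into the normalizing constant. The result should be a Matrix Langevin kernel whose parameter matches \eqref{eq:white-momentum}. The mode claim will then follow from \Cref{lem:ml-mode}.

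First I will use the density \eqref{eq:mn-density} with mean $\mR_t^{1/2}\mD_t\mC_t^{1/2}$. The whitened residual is
\[
\mR_t^{-1/2}\bigl(\mG_t-\mR_t^{1/2}\mD_t\mC_t^{1/2}\bigr)\mC_t^{-1/2}=\mR_t^{-1/2}\mG_t\mC_t^{-1/2}-\mD_t .
\]
Expanding the squared Frobenius norm gives
\[
\bigl\|\mR_t^{-1/2}\mG_t\mC_t^{-1/2}\bigr\|_F^2-2\langle \mR_t^{-1/2}\mG_t\mC_t^{-1/2},\mD_t\rangle+\|\mD_t\|_F^2 .
\]
The first term does not involve $\mD_t$. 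The last term equals $p=\min\{m,n\}$ on $\St(m,n)$, so it is constant too. The determinant factors depend only on the fixed $\mR_t,\mC_t$. Hence, as a function of $\mD_t\in\St(m,n)$, the likelihood is proportional to $\exp(\langle \mR_t^{-1/2}\mG_t\mC_t^{-1/2},\mD_t\rangle)$.

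Next I multiply by the prior kernel $\exp(\langle \frac{\beta}{1-\beta}\widetilde{\mM}_{t-1},\mD_t\rangle)$. Writing $\mathbf{Y}=\mR_t^{-1/2}\mG_t\mC_t^{-1/2}$, the sum of the two linear coefficients is $\frac{\beta}{1-\beta}\widetilde{\mM}_{t-1}+\mathbf{Y}=\frac{1}{1-\beta}\bigl(\beta\widetilde{\mM}_{t-1}+(1-\beta)\mathbf{Y}\bigr)=\frac{1}{1-\beta}\widetilde{\mM}_t$, by \eqref{eq:white-momentum}. The posterior kernel is therefore $\exp(\langle\frac{1}{1-\beta}\widetilde{\mM}_t,\mD_t\rangle)$ with respect to the uniform measure. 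Its normalizing constant $c(\cdot)$ in \eqref{eq:ml-density} is finite and positive, since the integrand is continuous and positive on a compact set. So the posterior is exactly $\mathrm{ML}_{m,n}(\frac{1}{1-\beta}\widetilde{\mM}_t)$.

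For the mode, if $\widetilde{\mM}_t$ has full rank, then so does $\frac{1}{1-\beta}\widetilde{\mM}_t$. By \Cref{lem:ml-mode}(ii), the density has the unique maximizer $\Polar(\frac{1}{1-\beta}\widetilde{\mM}_t)$, and by part (iii) this equals $\Polar(\widetilde{\mM}_t)$. The only real subtlety is measure-theoretic bookkeeping: the posterior must be taken with respect to the uniform measure on $\St(m,n)$, and the unnormalizable factor kernel has to be dropped legitimately. Dropping it is fine because we condition on $(\mR_t,\mC_t)$, so that kernel is a positive constant in $\mD_t$.
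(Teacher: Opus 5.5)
Your proposal is correct and follows essentially the same route as the paper's proof. Both drop the factor kernel as constant in $\mD_t$, expand the whitened residual using $\|\mD_t\|_F^2=p$, combine the linear terms into $\frac{1}{1-\beta}\widetilde{\mM}_t$ via \eqref{eq:white-momentum}, and invoke \Cref{lem:ml-mode} together with scale invariance of $\Polar$ for the mode.
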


\begin{proof}
The prior kernel factorizes as the product of
\eqref{eq:mn-muon-direction-prior} and \eqref{eq:mn-muon-covariance-prior}.
For fixed factors, the factor kernel is constant in $\mD_t$, so the
conditional direction density is proportional to the likelihood times
\eqref{eq:mn-muon-direction-prior}. This density is proper because
$\St(m,n)$ is compact.

With $\mM = \mR_t^{1/2}\mD_t\mC_t^{1/2}$, the exponent of
the likelihood is
\[
-\frac{1}{2}\bigl\|\mR_t^{-1/2}\mG_t\mC_t^{-1/2} - \mD_t\bigr\|_F^2
= -\frac{1}{2}\bigl\|\mR_t^{-1/2}\mG_t\mC_t^{-1/2}\bigr\|_F^2
+ \bigl\langle \mR_t^{-1/2}\mG_t\mC_t^{-1/2}, \mD_t \bigr\rangle
- \frac{p}{2}.
\]
As a function of $\mD_t$, the likelihood is thus proportional to
$\exp\bigl(\langle \mR_t^{-1/2}\mG_t\mC_t^{-1/2}, \mD_t \rangle\bigr)$. Multiplying
by the kernel of \eqref{eq:mn-muon-direction-prior} gives
\begin{align*}
p(\mD_t \mid \mG_t, \mR_t, \mC_t, \widetilde{\mM}_{t-1})
&\propto \exp\!\left(\left\langle \mR_t^{-1/2}\mG_t\mC_t^{-1/2}
+ \frac{\beta}{1-\beta}\widetilde{\mM}_{t-1}, \mD_t \right\rangle\right)\\
&= \exp\!\left(\left\langle \frac{1}{1-\beta}\widetilde{\mM}_t, \mD_t \right\rangle\right),
\end{align*}
since
$\mR_t^{-1/2}\mG_t\mC_t^{-1/2} + \frac{\beta}{1-\beta}\widetilde{\mM}_{t-1}
= \frac{1}{1-\beta}\bigl[(1-\beta)\mR_t^{-1/2}\mG_t\mC_t^{-1/2} + \beta\widetilde{\mM}_{t-1}\bigr]$.
This is the kernel of $\mathrm{ML}_{m,n}\bigl(\frac{1}{1-\beta}\widetilde{\mM}_t\bigr)$.
If $\widetilde{\mM}_t$ has full rank, \Cref{lem:ml-mode} gives its unique mode as
$\Polar\bigl(\frac{1}{1-\beta}\widetilde{\mM}_t\bigr)=\Polar(\widetilde{\mM}_t)$.
\end{proof}

\subsection{Discounting}
\label{sec:results-from-4-discounting}

By discounting a density $p$ with factor $\beta$ we mean replacing it with the density
proportional to $p^{\beta}$, whenever the latter is integrable. This is exponential forgetting:
it downweights all accumulated information by the same factor $\beta$.

\paragraph{Direction.}
The density \eqref{eq:ml-density} satisfies
$p(\mD \mid \mF)^{\beta} \propto \exp(\langle \beta\mF, \mD \rangle)$, so discounting
$\mathrm{ML}_{m,n}(\mF)$ gives $\mathrm{ML}_{m,n}(\beta\mF)$. This is always a proper
distribution, since $\St(m,n)$ is compact. Applying this to the conditional posterior of
\Cref{lem:ml-posterior}, with the factors set to their estimates $(\hat{\mR}_t, \hat{\mC}_t)$,
gives
\[
\mathrm{ML}_{m,n}\left(\frac{\beta}{1-\beta}\widetilde{\mM}_t\right),
\]
which is \eqref{eq:mn-muon-direction-prior} with $t-1$ replaced by $t$, that is, the direction
prior at time $t+1$.

The weights admit a pseudo-count interpretation. The prior carries weight $\frac{\beta}{1-\beta}$,
the new observation adds weight one, and the posterior carries weight
$1/(1-\beta) = \beta/(1-\beta) + 1$. Discounting multiplies the posterior weight by
$\beta$, which restores $\beta/(1-\beta)$. The effective number of pseudo-observations is
therefore constant over time.

\paragraph{Covariance.}
The joint posterior of the factors is not of the form \eqref{eq:mn-muon-covariance-prior},
because the raw mean depends on the factors. The covariance prior at time $t+1$ is instead
obtained by discounting the kernel
$\exp\bigl(-\frac{1}{1-\beta}\KL\bigl(\Normal(0, \hat{\mSigma}_t) \,\bigm\|\, \Normal(0, \mSigma)\bigr)\bigr)$.
This kernel has mode $\hat{\mSigma}_t$. It also carries the same log-determinant weight
$\frac{1}{2}\frac{1}{1-\beta}$ as the posterior kernel, which combines weight
$\frac{1}{2}\frac{\beta}{1-\beta}$ from the prior with weight $\frac{1}{2}$ from the likelihood.
Its $\beta$-power is exactly \eqref{eq:mn-muon-covariance-prior} with $t-1$ replaced by $t$.
Replacing the posterior over the factors with a kernel of the prior's form, centered at the
current estimate, plays the role that the Kronecker projection plays in \Cref{sec:mn}.
The analogous Normal-Inverse-Wishart discounting for \Cref{sec:mn}
is derived in~\Cref{sec:results-from-2-joint-posterior}.
% \rob{Note: removed the repeated Normal-Inverse-Wishart discounting derivation; see the reference above.}

\subsection{Optimization problem}
\label{sec:results-from-4-optimization}

Write the posterior kernel as
\[
K_t(\mD,\mR,\mC)
:= p(\mG_t \mid \mD, \mR, \mC)\,p(\mD \mid \widetilde{\mM}_{t-1})\,
p(\mR, \mC \mid \hat{\mR}_{t-1}, \hat{\mC}_{t-1}).
\]
We call any maximizer of $K_t$ over $\mD \in \St(m,n)$ and positive definite
$\mR,\mC$ a joint posterior mode. This definition does not require the kernel to be
normalizable.

Let $\Phi(\mD,\mR,\mC)$ denote the objective in
\eqref{eq:mn-muon-optimization_problem}. With $\mSigma=\mC\otimes\mR$, $\hat{\mSigma}_{t-1} =\hat{\mC}_{t-1}\otimes\hat{\mR}_{t-1}$  and
$\vg_t=\vec(\mG_t)$, define
\begin{align}
\mathcal{J}(\mD,\mR,\mC)
&:= \log|\mSigma|
+ \tr\Bigl(\mSigma^{-1}\bigl[\beta\hat{\mSigma}_{t-1}
+ (1-\beta)\vg_t\vg_t^\top\bigr]\Bigr)\nonumber\\
&\quad -2\bigl\langle \beta\widetilde{\mM}_{t-1}
+ (1-\beta)\mR^{-1/2}\mG_t\mC^{-1/2},\mD\bigr\rangle.\label{eq:def-J-DRC}
\end{align}

\Needspace{9\baselineskip}
\begin{proposition}[MAP formulation]
\label{prop:mn-muon-map}
For every $\mD \in \St(m,n)$ and positive definite $\mR$ and $\mC$,
\[
-\log K_t(\mD,\mR,\mC)
= \Phi(\mD,\mR,\mC)+\mathrm{const}
= \frac{\mathcal{J}(\mD,\mR,\mC)}{2(1-\beta)}+\mathrm{const}.
\] 
Consequently, the joint posterior modes are exactly the minimizers of $\Phi$
in \eqref{eq:mn-muon-optimization_problem},
equivalently of $\mathcal{J}$.
\end{proposition}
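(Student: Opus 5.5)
We compute $-\log K_t$ term by term, keeping only what depends on $(\mD,\mR,\mC)$, and match the result against both $\Phi$ and $\mathcal{J}$. Throughout we write $\mSigma=\mC\otimes\mR$, $\vg_t=\vec(\mG_t)$, and $p=\min\{m,n\}$.

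\emph{Likelihood.} Using the density \eqref{eq:mn-density} with mean $\mR^{1/2}\mD\mC^{1/2}$, the exponent is $-\frac12\|\mR^{-1/2}\mG_t\mC^{-1/2}-\mD\|_F^2$. Expanding as in the proof of \Cref{lem:ml-posterior} and using $\|\mD\|_F^2=p$, we get
\[
-\log p(\mG_t\mid\mD,\mR,\mC)=\tfrac12\log|\mSigma|+\tfrac12\vg_t^\top\mSigma^{-1}\vg_t-\langle\mR^{-1/2}\mG_t\mC^{-1/2},\mD\rangle+\mathrm{const}.
\]
Here we used $|\mSigma|=|\mC|^m|\mR|^n$ and $\|\mR^{-1/2}\mG_t\mC^{-1/2}\|_F^2=\tr(\mSigma^{-1}\vg_t\vg_t^\top)$, which follows from identity (iv) of the proof of \Cref{lem:alternating}.

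\emph{Direction prior.} The density \eqref{eq:ml-density} gives $-\log p(\mD\mid\widetilde{\mM}_{t-1})=-\frac{\beta}{1-\beta}\langle\widetilde{\mM}_{t-1},\mD\rangle+\log c(\cdot)$, and the normalizer does not depend on $\mD$. On the other side, the identity-covariance Gaussian KL is $\frac12\|\widetilde{\mM}_{t-1}-\mD\|_F^2=-\langle\widetilde{\mM}_{t-1},\mD\rangle+\mathrm{const}$, again because $\|\mD\|_F^2=p$ is constant on $\St(m,n)$. So this prior term agrees with the second term of $\Phi$ up to a constant.

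\emph{Factor kernel.} The kernel \eqref{eq:mn-muon-covariance-prior} gives
\[
-\log p(\mR,\mC\mid\hat{\mR}_{t-1},\hat{\mC}_{t-1})=\tfrac{\beta}{2(1-\beta)}\bigl[\log|\mSigma|+\tr(\mSigma^{-1}\hat{\mSigma}_{t-1})\bigr]+\mathrm{const}.
\]
By \eqref{eq:KL-normal} with zero means, this equals $\frac{\beta}{1-\beta}\KL(\Normal(0,\hat{\mSigma}_{t-1})\|\Normal(0,\mSigma))$ plus a constant, which is the third term of $\Phi$ (this is essentially \Cref{prop:prior-mode} with the mean term absent). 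The kernel is unnormalized, but only its form up to a constant matters here.

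\emph{Matching $\mathcal{J}$.} Summing the three pieces and multiplying by $2(1-\beta)$, the coefficient of $\log|\mSigma|$ is $(1-\beta)+\beta=1$. The trace terms combine to $\tr(\mSigma^{-1}[\beta\hat{\mSigma}_{t-1}+(1-\beta)\vg_t\vg_t^\top])$. The linear terms become $-2\langle\beta\widetilde{\mM}_{t-1}+(1-\beta)\mR^{-1/2}\mG_t\mC^{-1/2},\mD\rangle$. This is exactly $\mathcal{J}$.

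Since the map from $K_t$ to $\Phi$ and to $\mathcal{J}$ is a strictly decreasing affine transformation ($-\log$, a positive scale, an additive constant), maximizers of $K_t$ coincide with minimizers of $\Phi$ and of $\mathcal{J}$.

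The only delicate point is bookkeeping. One must check that every discarded quantity is genuinely constant on the feasible set: $\|\mD\|_F^2=p$, the $\widetilde{\mM}_{t-1}$-dependent parts of the ML normalizer and of the KL, the $\log|\hat{\mSigma}_{t-1}|$ term, and the $(2\pi)$ factors. One must also get the $\frac12$ versus $1$ weights right, so that the factor $\frac{1}{2(1-\beta)}$ comes out correctly.
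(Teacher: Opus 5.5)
Your proposal is correct and follows essentially the same route as the paper's proof. Both identify each weighted KL penalty with the corresponding negative log-prior, using $\|\mD\|_F^2=p$ and the zero-mean Gaussian KL formula. Both expand the likelihood through $\mZ=\mR^{-1/2}\mG_t\mC^{-1/2}$, then multiply by $2(1-\beta)$ so that the $\log|\mSigma|$ coefficients combine to $(1-\beta)+\beta=1$. One small wording point: $-\log$ is not affine, but the composite map is strictly decreasing (since $K_t>0$), which is all the final equivalence of maximizers and minimizers needs.
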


\begin{proof}
All additive constants below are independent of
$(\mD,\mR,\mC)$.
Each weighted KL penalty in $\Phi$ is a negative log-prior, up to a constant.
For the direction, the Gaussian KL formula gives
\begin{align*}
&\frac{\beta}{1-\beta}\KL\!\left(\MN_{m,n}(\widetilde{\mM}_{t-1},\mI_m,\mI_n)
\,\middle\|\,\MN_{m,n}(\mD,\mI_m,\mI_n)\right)\\
&\qquad = \frac{\beta}{2(1-\beta)}\|\widetilde{\mM}_{t-1}-\mD\|_F^2\\
&\qquad = -\frac{\beta}{1-\beta}\langle\widetilde{\mM}_{t-1},\mD\rangle+\mathrm{const}\\
&\qquad = -\log p(\mD\mid\widetilde{\mM}_{t-1})+\mathrm{const},
\end{align*}
where we used $\|\mD\|_F^2=p$ and the Matrix Langevin
prior~\eqref{eq:mn-muon-direction-prior}.
For the covariance, the same KL formula gives
\begin{align*}
&\frac{\beta}{1-\beta}\KL\!\left(\MN_{m,n}(0,\hat{\mR}_{t-1},\hat{\mC}_{t-1})
\,\middle\|\,\MN_{m,n}(0,\mR,\mC)\right)\\
&\qquad = \frac{\beta}{2(1-\beta)}\left[
\tr(\mSigma^{-1}\hat{\mSigma}_{t-1})+\log|\mSigma|
\right]+\mathrm{const}\\
&\qquad = -\log p(\mR,\mC\mid\hat{\mR}_{t-1},\hat{\mC}_{t-1})+\mathrm{const},
\end{align*}
where we used the prior kernel~\eqref{eq:mn-muon-covariance-prior}.
The terms involving only $\hat{\mSigma}_{t-1}$ and the dimensions are constant.

The first term in $\Phi$ is the negative log-likelihood
$-\log p(\mG_t\mid\mD,\mR,\mC)$.
Since $K_t$ is the product of the likelihood and the two priors, 
these identities give
\[
-\log K_t(\mD,\mR,\mC)  
= \Phi(\mD,\mR,\mC)+\mathrm{const}.
\]

To obtain the identity with $\mathcal{J}$, set $\mZ=\mR^{-1/2}\mG_t\mC^{-1/2}$.
Using the mean parameterization
$\mM=\mR^{1/2}\mD\mC^{1/2}$ in
\eqref{eq:mn-muon-optimization_problem}, we have
$\mR^{-1/2}(\mG_t-\mM)\mC^{-1/2}=\mZ-\mD$.
Thus, by \eqref{eq:mn-density},
\[
-2\log p(\mG_t\mid\mD,\mR,\mC)
= \log|\mSigma|+\|\mZ\|_F^2-2\langle\mZ,\mD\rangle+p
+mn\log(2\pi).
\]
Moreover, the trace and vectorization identities give
\[
\|\mZ\|_F^2
= \tr(\mR^{-1}\mG_t\mC^{-1}\mG_t^\top)
= \vg_t^\top\mSigma^{-1}\vg_t
= \tr(\mSigma^{-1}\vg_t\vg_t^\top).
\]
Combining the likelihood and the two KL penalties yields
\begin{align*}
2(1-\beta)\Phi(\mD,\mR,\mC)
&= \log|\mSigma|
+ \tr\Bigl(\mSigma^{-1}\bigl[\beta\hat{\mSigma}_{t-1}
+(1-\beta)\vg_t\vg_t^\top\bigr]\Bigr)\\
&\qquad -2\langle\beta\widetilde{\mM}_{t-1}+(1-\beta)\mZ,\mD\rangle
+\mathrm{const}\\
&= \mathcal{J}(\mD,\mR,\mC)+\mathrm{const}.
\end{align*}
Here the log-determinant coefficient is $(1-\beta)+\beta=1$.
This proves the second equality. Finally, $K_t>0$, $-\log$ is strictly decreasing,
and $2(1-\beta)>0$, so the three optimization problems have the same solutions.
\end{proof}

\subsection{Proof of \Cref{prop:mn-muon-blocks}}
\label{sec:results-from-4-solution-1}

\mnmuonblocks*

\begin{proof}
By \Cref{prop:mn-muon-map}, it suffices to minimize $\mathcal{J}$ over each block.
We use $\mathrm{const}$ for terms independent of the block being optimized.

First, recall $\mathcal{J}$ in \eqref{eq:def-J-DRC} depends on the factors only 
through $\mSigma=\mC\otimes\mR$,
so the rescaling $(\mR,\mC)\mapsto(a^{-1}\mR,a\mC)$, $a>0$, leaves
$\mathcal{J}$ unchanged.

\emph{(i) Direction.}
Fix $\mR,\mC\succ0$ and set
\[
\mF=\beta\widetilde{\mM}_{t-1}
+(1-\beta)\mR^{-1/2}\mG_t\mC^{-1/2}.
\]
Minimizing \eqref{eq:def-J-DRC} over the direction $\mD$ reduces to
\[
\underset{\mD\in\St(m,n)}{\mbox{minimize}}\;
-2\langle\mF,\mD\rangle.
\]
For full rank $\mF$, \Cref{lem:ml-mode} gives the unique minimizer $\Polar(\mF)$.

\emph{(ii) Factors.}
Fix $\mD$ and $\mC\succ0$. Define
\begin{align*}
\mQ
&=\beta\tr(\mC^{-1}\hat{\mC}_{t-1})\hat{\mR}_{t-1}
+(1-\beta)\mG_t\mC^{-1}\mG_t^\top,\\
\mL
&=(1-\beta)\operatorname{sym}(\mG_t\mC^{-1/2}\mD^\top),
\end{align*}
where $\operatorname{sym}(\mY)=(\mY+\mY^\top)/2$.
Using
\[
\log|\mSigma| = m\log|\mC| + n\log|\mR|, \qquad 
\tr\bigl(\mSigma^{-1}(\hat{\mC}_{t-1} \otimes \hat{\mR}_{t-1})\bigr)
= \tr(\mC^{-1}\hat{\mC}_{t-1})\tr(\mR^{-1}\hat{\mR}_{t-1}),
\] 
and
$\tr(\mSigma^{-1}\vg_t\vg_t^\top) = \tr(\mR^{-1}\mG_t\mC^{-1}\mG_t^\top)$,
the terms of $\mathcal{J}$~\eqref{eq:def-J-DRC}  that depend on $\mR$ are
\[
\mathcal{J}(\mD,\mR,\mC)
=n\log|\mR|+\tr(\mR^{-1}\mQ)
-2\tr(\mR^{-1/2}\mL)+\mathrm{const},
\]
since $\mR^{-1/2}$ is symmetric, only the symmetric part of $\mG_t\mC^{-1/2}\mD^\top$ 
contributes to the trace.

The change of variable $\mA=\mR^{-1/2}$ is a bijection on the positive
definite cone, with inverse $\mR=\mA^{-2}$. Thus the row block is equivalent to
\[
\underset{\mA\succ0}{\mbox{minimize}}\quad
\phi(\mA)
:=-n\log|\mA|+\frac12\tr(\mA\mQ\mA)-\tr(\mA\mL),
\]
since $\mathcal{J}(\mD,\mA^{-2},\mC)=2\phi(\mA)+\mathrm{const}$.
The matrix $\mQ$ is positive definite: its first term is positive definite
because $\beta>0$, $\hat{\mR}_{t-1}\succ0$, and
$\tr(\mC^{-1}\hat{\mC}_{t-1})>0$; its second term is positive semidefinite.

We show that $\phi$ has a unique minimizer. For any nonzero symmetric $\bm{H}$,
\[
\left.\frac{d^2}{ds^2}\phi(\mA+s\bm{H})\right|_{s=0}
=n\bigl\|\mA^{-1/2}\bm{H}\mA^{-1/2}\bigr\|_F^2
+\tr(\bm{H}\mQ\bm{H})>0,
\]
so $\phi$ is strictly convex. To establish existence, let $a_1,\ldots,a_m>0$
be the eigenvalues of $\mA$. Then
\[
\phi(\mA)\geq\sum_{i=1}^m h(a_i),
\qquad
h(a)=-n\log a+\frac{\lambda_{\min}(\mQ)}{2}a^2-\|\mL\|_F a.
\]
The function $h$ is bounded below and tends to $+\infty$ as $a\to0^+$ or
$a\to\infty$. Hence every sublevel set of $\phi$ restricts the eigenvalues of
$\mA$ to a compact interval in $(0,\infty)$ and is compact.
Continuity gives existence; strict convexity gives uniqueness.

Differentiating along symmetric directions, the necessary and sufficient
optimality condition is
\begin{equation}
\frac12(\mQ\mA+\mA\mQ)-n\mA^{-1}=\mL.
\label{eq:mn-muon-foc}
\end{equation}
Its unique positive definite solution gives $\mR^+=\mA^{-2}$.
If $\mL=0$, the solution is $\mA=\sqrt{n}\,\mQ^{-1/2}$, so
$\mR^+=\mQ/n$.

The column block also has a unique minimizer by the same argument, transposing $\mG_t$ and $\mD$ and exchanging the row and column factors.

\emph{(iii) Diagonal factors.}
Suppose the previous and current factors are diagonal, with
$\mR=\diag(\vr)$ and $\mC=\diag(\vc)$, where $\vr,\vc>0$ entrywise.
For fixed $\mD$ and $\vc$, let $\mQ_t$ be as in \eqref{eq:mn-muon-statistics},
set $\mB_t=(1-\beta)(\mG_t\circ\mD)$, and define
$u_i=(\mQ_t\vc^{-1})_i>0$ and $v_i=(\mB_t\vc^{-1/2})_i$,
with vector powers taken entrywise.
The row objective separates over the entries of $\vr$:
\[
\mathcal{J}(\mD,\diag(\vr),\diag(\vc))
=\sum_i\left(n\log r_i+\frac{u_i}{r_i}
-\frac{2v_i}{\sqrt{r_i}}\right)+\mathrm{const}.
\]
Setting $y=r_i^{-1/2}$ reduces each scalar problem to minimizing
$-n\log y+\tfrac12u_i y^2-v_i y$ over $y>0$.
This function is strictly convex, and its derivative vanishes at the unique
positive root of $u_i y^2-v_i y-n=0$, giving
\[
r_i^+=y^{-2}=\left(\frac{\sqrt{v_i^2+4nu_i}-v_i}{2n}\right)^2
=\Psi_n(u_i,v_i),
\]
where $\Psi_q$ is defined in \eqref{eq:mn-muon-variance-root}.
The column update follows by exchanging rows and columns and replacing $n$ by $m$;
in a row-then-column sweep, it uses the updated $\vr^+$.
\end{proof}

\subsection{Proof of \Cref{prop:mn-muon-convergence}}
\label{sec:results-from-4-solution-2}

\mnmuonconvergence*

\begin{proof}
We work with $\mathcal{J}$ in \eqref{eq:def-J-DRC} which is equivalent to
\eqref{eq:mn-muon-optimization_problem} as per \Cref{prop:mn-muon-map}.
Start from any $\mD^{(1)}\in\St(m,n)$ and $\mR^{(1)},\mC^{(1)}\succ0$.
One sweep consists of:
\begin{enumerate}[(i)]
  \item \emph{Minimize over the factors:}
  \[
  \mR^{(k+1)}_\circ=\argmin{\mR\succ0}\mathcal{J}(\mD^{(k)},\mR,\mC^{(k)}),
  \qquad
  \mC^{(k+1)}_\circ=\argmin{\mC\succ0}\mathcal{J}(\mD^{(k)},\mR^{(k+1)}_\circ,\mC).
  \]
  \item \emph{Balance the traces:}
  \[
  (\mR^{(k+1)},\mC^{(k+1)})
  =(a_k^{-1}\mR^{(k+1)}_\circ,a_k\mC^{(k+1)}_\circ),
  \qquad
  a_k=\sqrt{\frac{\tr(\mR^{(k+1)}_\circ)}{\tr(\mC^{(k+1)}_\circ)}}.
  \]
  \item \emph{Minimize over the direction:}
  choose $\mD^{(k+1)}\in\argmin{\mD\in\St(m,n)}
  \mathcal{J}(\mD,\mR^{(k+1)},\mC^{(k+1)})$.
\end{enumerate}
The factor minimizers are unique and positive definite by \Cref{prop:mn-muon-blocks}.
The direction minimizer exists because $\St(m,n)$ is compact.

\emph{Step 1: Descent and convergence of the objective.}
Write $h_k=\mathcal{J}(\mD^{(k)},\mR^{(k)},\mC^{(k)})$.
Trace balancing preserves $\mC\otimes\mR$ and hence $\mathcal{J}$.
Since each minimization can only decrease the objective,
\[
h_{k+1}
\leq\mathcal{J}(\mD^{(k)},\mR^{(k+1)}_\circ,\mC^{(k+1)}_\circ)
\leq\mathcal{J}(\mD^{(k)},\mR^{(k+1)}_\circ,\mC^{(k)})
\leq h_k.
\]
To bound $h_k$ below, set $\mS=\beta\hat{\mSigma}_{t-1}\succ0$
and $\mZ=\mR^{-1/2}\mG_t\mC^{-1/2}$.
Using $\|\mD\|_F^2=p$, we can write \eqref{eq:def-J-DRC} as
\[
\mathcal{J}(\mD,\mR,\mC)
=\log|\mSigma|+\tr(\mSigma^{-1}\mS)
+(1-\beta)\|\mZ-\mD\|_F^2
-(1-\beta)p-2\beta\langle\widetilde{\mM}_{t-1},\mD\rangle,
\]
where $\mSigma=\mC\otimes\mR$.
Cauchy--Schwarz therefore gives
\begin{equation}
\mathcal{J}(\mD,\mR,\mC)
\geq\log|\mSigma|+\tr(\mSigma^{-1}\mS)-c_0,
\label{eq:mn-muon-lower-bound}
\end{equation}
with $c_0=(1-\beta)p+2\beta\sqrt{p}\,\|\widetilde{\mM}_{t-1}\|_F$.
By \eqref{eq:logdet-trace-bound}, the right-hand side is at least
$\log|\mS|+mn-c_0$. Thus $\{h_k\}_{k=1}^\infty$ is nonincreasing and bounded below, so it
converges to a finite limit $\bar h$.

\emph{Step 2: Compactness of the iterates.}
Let $\mSigma^{(k)}=\mC^{(k)}\otimes\mR^{(k)}$.
Since $h_k\leq h_1$, \eqref{eq:mn-muon-lower-bound} implies
\[
\log|\mSigma^{(k)}|+\tr\bigl((\mSigma^{(k)})^{-1}\mS\bigr)\leq h_1+c_0.
\]
In \eqref{eq:logdet-trace-bound}, each term $\lambda-\log\lambda$ is at least $1$
and tends to infinity as $\lambda\to0^+$ or $\lambda\to\infty$.
Thus the eigenvalues of $\mS^{1/2}(\mSigma^{(k)})^{-1}\mS^{1/2}$ stay in a fixed
compact interval in $(0,\infty)$, which gives constants $0<\ell\leq u$ such that
$\ell\mI_{mn}\preceq\mSigma^{(k)}\preceq u\mI_{mn}$ for all $k=1, 2, \ldots$.

For $k\geq2$, trace balancing gives
\[
\tr(\mR^{(k)})=\tr(\mC^{(k)})
=\sqrt{\tr(\mSigma^{(k)})}\leq\sqrt{mn\,u}.
\]
Also, $\lambda_{\min}(\mR^{(k)})\lambda_{\min}(\mC^{(k)})
=\lambda_{\min}(\mSigma^{(k)})\geq\ell$.
Hence every eigenvalue of both factors lies in
$[\ell/\sqrt{mn\,u},\sqrt{mn\,u}]$.
Since $\St(m,n)$ is compact, all iterates with $k\ge2$ lie in a compact set.
% This set remains compact when the initial iterate is included.
In particular, limit points exist and their factors are positive definite.

\emph{Step 3: Optimality of each block at every limit point.}
Denote the unique factor minimizers by $\mathcal{R}(\mD,\mC)$ and
$\mathcal{C}(\mD,\mR)$. Both maps are continuous: the Hessian of $\phi$ in the
proof of \Cref{prop:mn-muon-blocks} is positive definite, and the coefficients
$\mQ$ and $\mL$ in \eqref{eq:mn-muon-foc} vary smoothly with the fixed blocks.
The implicit function theorem therefore makes the solution $\mA$, and hence
$\mathcal{R}(\mD,\mC)=\mA^{-2}$, locally smooth; uniqueness identifies these local
solutions with the row minimizer. The column argument is identical.

Take a convergent subsequence
$(\mD^{(k_j)},\mR^{(k_j)},\mC^{(k_j)})\to(\bar{\mD},\bar{\mR},\bar{\mC})$.
Continuity of the factor maps gives
\[
\mR^{(k_j+1)}_\circ\to\mR^+:=\mathcal{R}(\bar{\mD},\bar{\mC}),
\qquad
\mC^{(k_j+1)}_\circ\to\mC^+:=\mathcal{C}(\bar{\mD},\mR^+).
\]
Both intermediate objective values in Step 1 lie between $h_{k_j+1}$ and $h_{k_j}$,
so they converge to $\bar h$. Sending to the limit gives
\[
\mathcal{J}(\bar{\mD},\bar{\mR},\bar{\mC})
=\mathcal{J}(\bar{\mD},\mR^+,\bar{\mC})
=\mathcal{J}(\bar{\mD},\mR^+,\mC^+)=\bar h.
\]
The first equality and uniqueness of the row minimizer imply $\bar{\mR}=\mR^+$.
Substituting this into the second equality and using uniqueness of the column
minimizer gives $\bar{\mC}=\mC^+$. Thus both factor blocks are optimal at the limit.

For the direction block, each $k\geq2$ satisfies
\[
\mathcal{J}(\mD^{(k)},\mR^{(k)},\mC^{(k)})
\leq\mathcal{J}(\mD,\mR^{(k)},\mC^{(k)}),
\quad\text{for all }\mD\in\St(m,n).
\]
Taking the limit along the same subsequence proves that $\bar{\mD}$ minimizes
$\mathcal{J}(\cdot,\bar{\mR},\bar{\mC})$, 
although the minimizing direction need not be unique.

Finally, $\mathcal{J}$ is smooth on the feasible product manifold.
Optimality of each block makes its derivative vanish along every feasible tangent
direction in that block. Since these tangent spaces form the tangent space of the
product manifold, the full derivative vanishes at
$(\bar{\mD},\bar{\mR},\bar{\mC})$. Thus every limit point is a critical point.
\end{proof}

\section{Numerical instability of Newton--Schulz}
\label{sec:Newtonschulz}

To invert the covariance factors in MN, we use the Newton--Schulz iteration
\begin{equation}\label{eq:def-newton-shulz-inv}
    \mX_{k+1}=\mX_k(2\mI-\mA\mX_k),
    \qquad
    \mX_0=\frac{\mA}{\|\mA\|_F^2},
\end{equation}
where $\mA$ is symmetric positive definite.
Although this iteration converges quadratically to $\mA^{-1}$ in exact arithmetic~\cite{highamfunctions}, finite precision can limit its accuracy or cause divergence. Two numerical examples illustrate these difficulties.

For each working precision in $\{\text{BF16},\text{FP32},\text{FP64}\}$,
we apply Newton--Schulz iteration \eqref{eq:def-newton-shulz-inv} to $\mA+\varepsilon\mI$ formed in that precision, with $\varepsilon=10^{-8}$ in both examples.
We evaluate the unnormalized residual
\[
    r_k=\|\mI-(\mA+\varepsilon\mI)\mX_k\|_F
\]
in \text{FP64} against the common target.

\paragraph{GPT-small gradient-Gram EMA.}
We train GPT-small on FineWeb for $400$ Muon updates with global batch $512$ and context length $1024$, using BF16 autocast with FP32 parameters and optimizer state.
We use seed $42$, Muon momentum $0.95$, zero weight decay, and peak learning rate $0.03$, with $40$ warmup, $320$ constant-rate, and $40$ decay updates. 

Let $\mG_t\in\R^{2048\times768}$ be the gradient of the first transformer block's MLP input projection, after gradient accumulation and before clipping.
Along this training trajectory, we accumulate in \text{FP64}
\[
    \mN_t=\beta\mN_{t-1}+(1-\beta)\mG_t^\top\mG_t,
    \qquad \mN_0=0,
\]
and set $\mA=\mN_{400}$.
This $768\times768$ matrix has $\kappa_2(\mA)\approx2\times10^6$; adding $\varepsilon=10^{-8}$ gives $\kappa_2(\mA+\varepsilon\mI)\approx 4.7\times10^5$.
As shown in~\Cref{fig:newtonschulz}, at iteration $42$ the FP32 residual is approximately $7.9\times10^{-2}$, while FP64 reaches $1.9\times10^{-9}$.
BF16 produces its first nonfinite iterate at $k=37$.

\begin{figure}[t]
    \centering
    \includegraphics[width=0.98\linewidth]{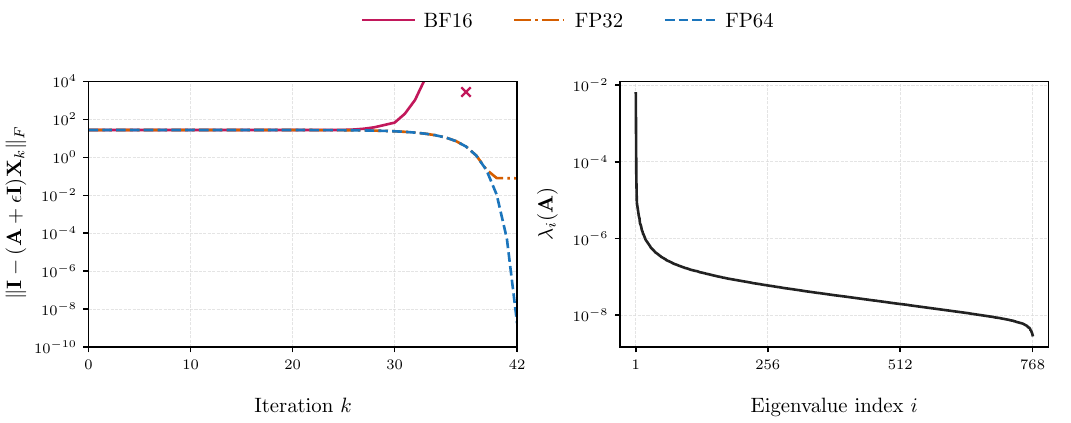}
    \caption{Newton--Schulz on the GPT-small Muon gradient-Gram EMA.
    Left: residual $\|\mI-(\mA+\varepsilon\mI)\mX_k\|_F$.
    Right: eigenvalues of $\mA$ in descending order.
    The cross marks the first nonfinite BF16 iterate ($k=37$).}
    \label{fig:newtonschulz}
\end{figure}

\paragraph{Ill-conditioned synthetic matrix.}
We construct a $768\times768$ matrix $\mA=\mQ\diag(\lambda_i)\mQ^\top$ in \text{FP64}, with eigenvalues logarithmically spaced from $1$ to $10^{-12}$ and an orthonormal basis obtained by QR decomposition of a Gaussian matrix.
The unregularized condition number is approximately $10^{12}$; with $\varepsilon=10^{-8}$, the shifted condition number is approximately $10^8$.
In~\Cref{fig:newtonschulz-synthetic}, BF16 and FP32 become nonfinite at iterations $35$ and $60$, respectively, whereas FP64 reaches a residual of approximately $2.1\times10^{-7}$ at iteration $65$.

\begin{figure}[t]
    \centering
    \includegraphics[width=0.98\linewidth]{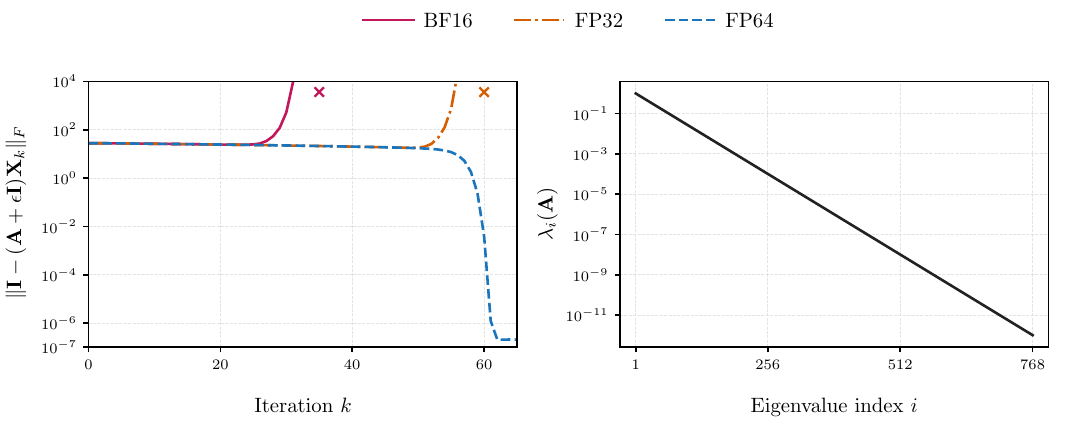}
    \caption{Newton--Schulz on the synthetic matrix.
    Left: residual $\|\mI-(\mA+\varepsilon\mI)\mX_k\|_F$.
    Right: eigenvalues of $\mA$ in descending order.
    Crosses mark the first nonfinite BF16 and FP32 iterates ($k=35$ and $k=60$).}
    \label{fig:newtonschulz-synthetic}
\end{figure}

These examples show that additional Newton--Schulz iterations need not improve the inverse approximation in finite precision. We also encountered divergent training runs when increasing the number of inversion steps. Reliable matrix inversion in low precision therefore remains a practical obstacle for the full MN method.

% \FloatBarrier

\section{Further MN method experiments}
\label{sec:further-mn-experiments}

\subsection{MND comparison protocol}
\label{app:mnd-comparison}

\paragraph{Training and selection.}
\Cref{fig:mn-d} uses GPT-small on FineWeb with effective batch 512,
microbatch 8, context 1024, BF16 autocast, compiled updates,
and highest FP32 matrix-multiplication precision.
The schedule has 10\% linear warmup, 80\% constant learning rate, and
10\% linear decay; weight decay is zero and the global gradient-norm clip
is 1. Each run has 4,743 optimizer updates, except KL-Shampoo,
whose declared token budget produces 4,744 updates. Final validation evaluates
all complete batches of the held-out split, as in \Cref{app:experimental-details}.
For each rate we run seeds 1, 2, and 42 to the full
budget and select the lowest mean final validation loss. The grids are
\begin{align*}
\text{MND}\text{ variants}:&\quad
 \{0.0001,0.0003,0.0006,0.001,0.003,0.006,0.01\},\\
\text{Distributed Shampoo}:&\quad
 \{0.00001,0.00003,0.0001,0.0003,0.0006,0.001,0.003\},\\
\text{KL-Shampoo}:&\quad
 \{0.0003,0.00045,0.0006,0.001,0.003,0.006,0.01,0.02\}.
\end{align*}
Training curves use a per-seed exponential moving average assigning
weight 0.05 to the current loss, followed by the seed mean and
minimum--maximum band. The horizontal axis shows cumulative recorded training hours, averaged across seeds at matching optimizer updates.

Each run uses one H100 GPU. GPT-small uses 64 gradient-accumulation
steps with microbatch 8 (effective batch 512), while GPT-medium uses
96 (effective batch 768). MND takes approximately 2.8 and 21 recorded
training hours, respectively.

\begin{table}[H]
\caption{Selected learning rates and final validation losses for
\Cref{fig:mn-d}. Losses are means $\mathbin{\pm}$ population standard
 deviations over seeds 1, 2, and 42.}
\label{tab:mnd-updated-results}
\centering
\small
\begin{tabular}{lcl}
\toprule
Method & Learning rate & Validation loss \\
\midrule
MND (uncentered) & 0.003 & $3.376\mathbin{\pm}0.00141$ \\
MND & 0.003 & $3.373\mathbin{\pm}0.00122$ \\
MND ($\beta_2$) & 0.003 & $3.409\mathbin{\pm}0.00280$ \\
KL-Shampoo & 0.003 & $3.444\mathbin{\pm}0.0105$ \\
Distributed Shampoo & 0.001 & $3.494\mathbin{\pm}0.00681$ \\
\bottomrule
\end{tabular}
\end{table}

\paragraph{Implemented diagonal recursion.}
All three variants use momentum decay $\beta=0.925$, $\varepsilon=10^{-9}$,
no shrinkage, and one row/column sweep. Momentum starts at zero and the
factor vectors at $\vone_m/m$ and $\vone_n/n$.
In the factor updates, centered and uncentered MND weight the
previous factors by $\beta$ and the sample terms by $\beta(1-\beta)$;
the $\beta_2=0.99$ variant instead uses $\beta_2$ and $1-\beta_2$.
Centered samples subtract the previous momentum; the other variants
use $\mX_t=\mG_t$.  

\subsection{MN comparison}

First we experiment with the full MN (Matrix Normal) method. Our focus here is on comparing the differences between~\Cref{alg:mn-method} and the KL-Shampoo method~\citep{lin2026klshampoo}. 

For both the matrix inverse and inverse square root in~\Cref{alg:mn-method} we use the Newton--Schulz method~\citep{highamfunctions}. The Newton--Schulz methods use polynomials of the matrix to approximate the (square root) inverse, and thus are GPU-friendly~\citep{amsel2025polar}.

To make the inversions more numerically stable, we apply the following shrinkage maps 
\begin{equation}
    \mathcal{P}_{\lambda}^{m}(\mA)
    =
    (1-\lambda)\mA
    +
    \lambda\frac{\tr(\mA)}{m}\mI_m,
    \qquad
    \mathcal{P}_{\lambda}^{n}(\mB)
    =
    (1-\lambda)\mB
    +
    \lambda\frac{\tr(\mB)}{n}\mI_n.
    \label{eq:mn-trace-preserving-shrinkage}
\end{equation}

Furthermore, to tackle the scale ambiguity in the factors $\hat{\mR}_t$ and $\hat{\mC}_t$, we will fix the scale of these factors through their trace
by applying the trace balancing operations
\begin{equation}
    \mathcal{B}(\mA,\mB)
    =
    \left(
    \sqrt{s_B/s_A}\;\mA,\;\;
    \sqrt{s_A/s_B}\;\mB
    \right),
    \quad
    s_A = \max(\tr(\mA),\varepsilon),\quad
    s_B = \max(\tr(\mB),\varepsilon).
    \label{eq:mn-transient-balance}
\end{equation}
This trace balancing prevents one factor, say $\hat{\mR}_t$, from gradually becoming very large while $\hat{\mC}_t$ becomes very small, which can cause floating-point issues.

Due to numerical instabilities of Newton--Schulz inversion, we replace the matrix products inside the trace terms of \eqref{eq:mn-R_update} and \eqref{eq:mn-C_update} by identity matrices, in particular $(\mR^+)^{-1}\hat{\mR}_{t-1}\approx\mI_m$, yielding the simplified factor updates in \Cref{alg:mn-method}.  

\begin{algorithm}[t]
\caption{The (approximate) MN method with full factors}
\label{alg:mn-method}
\begin{algorithmic}[1]
\Require Gradient matrix $\mG_t \in \R^{m \times n}$, step size $\eta > 0$,
momentum coefficient $\beta \in (0, 1)$,
$\lambda\in[0,1]$, and $\varepsilon>0$
\State Initialize $\hat{\mM}_0\gets0$,
$\hat{\mR}_0\gets\mI_m/m$, and $\hat{\mC}_0\gets\mI_n/n$
\For{$t=1,2,\ldots$}
    \State $\mX_t \gets \mG_t-\hat{\mM}_{t-1}$
    \Comment{centered residual}
    \State $\hat{\mM}_t \gets \beta\hat{\mM}_{t-1}+(1-\beta)\mG_t$
    \Comment{update momentum}
    \State $\mR^+ \gets
        \beta\hat{\mR}_{t-1}
        +
        \dfrac{\beta(1-\beta)}{n}\,
        \mX_t\,\mathcal{P}_{\lambda}^{n}(\hat{\mC}_{t-1})^{-1}\,\mX_t^\top$
    \State $\mC^+ \gets
        \beta\hat{\mC}_{t-1}
        +
        \dfrac{\beta(1-\beta)}{m}\,
        \mX_t^\top\,\mathcal{P}_{\lambda}^{m}(\mR^+)^{-1}\,\mX_t$
    \State $(\hat{\mR}_t,\hat{\mC}_t)\gets \mathcal{B}(\mR^+,\mC^+)$
    \Comment{balance factors \eqref{eq:mn-transient-balance}}
    \State $\Delta\mW_t \gets -\eta \mathcal{P}_{\lambda}^{m}(\hat{\mR}_t)^{-1/2}\,\hat{\mM}_t\,\mathcal{P}_{\lambda}^{n}(\hat{\mC}_t)^{-1/2}$
    \State $\mW_{t+1}\gets\mW_t+\Delta\mW_t$
\EndFor
\end{algorithmic}
\end{algorithm}

% \begin{minipage}{0.8\textwidth}
% \begin{algorithm}[H]
% \caption{The MN method  (Matrix Normal)}
% \label{alg:mn-methodsdsdsas}
% \begin{algorithmic}[1]
% \Require Gradient matrix $\mG_t \in \R^{m \times n}$, step size $\eta > 0$,
% momentum coefficient $\beta \in (0, 1)$
% \State Initialize $\hat{\mM}_0\gets0$,
% $\hat{\mR}_0\gets\mI_m/m$, and $\hat{\mC}_0\gets\mI_n/n$
% \For{$t=1,2,\ldots$}
%     \State $\mX_t \gets \mG_t-\hat{\mM}_{t-1}$
%     \State $\hat{\mM}_t \gets \beta\hat{\mM}_{t-1}+(1-\beta)\mG_t$
%     \Comment{update momentum}
%     \State $\hat{\mR}_{t}\gets
%         \beta\hat{\mR}_{t-1}
%         +
%         \dfrac{\beta(1-\beta)}{n}\,
%         \mX_t\,(\hat{\mC}_{t-1})^{-1}\,\mX_t^\top$
%     \State $\hat{\mC}_t \gets
%         \frac{\beta}{m}\tr\bigl((\hat{\mR}_t )^{-1}\hat{\mR}_{t-1}\bigr)\hat{\mC}_{t-1} 
%         +
%         \dfrac{\beta(1-\beta)}{m}\,
%         \mX_t^\top\,(\hat{\mR}_{t})^{-1}\,\mX_t$     \Comment{update factors}
%     \State $\Delta\mW_t \gets -\eta\, (\hat{\mR}_t)^{-1/2}\,\hat{\mM}_t\, (\hat{\mC}_t)^{-1/2}$  \Comment{Whitened Momentum update}
% \EndFor
% \end{algorithmic}
% % \end{algorithm}

%    \vspace{1cm}
% \end{minipage}

For all of our experiments in this section we train a 124M-parameter GPT-small model on FineWeb-10B~\citep{penedo2024fineweb}
for 2.5B tokens (about 20 tokens per parameter, Chinchilla-style). 
We use seeds 1, 2, and 42. For all three MN variants,
the learning rate grid is
\[
\{0.0001,0.0003,0.0006,0.001,0.003,0.006\}.
\]
For KL-Shampoo, it is
\[
\{0.0003,0.00045,0.0006,0.001,0.003,0.006,0.01,0.02\};
\]
for Distributed Shampoo, it is
\[
\{0.00001,0.00003,0.0001,0.0003,0.0006,0.001,0.003\}.
\]
We select each method's learning rate by its mean final validation loss
over the three seeds.

The main differences between our~\Cref{alg:mn-method} and the associated implementation\footnote{\url{https://github.com/yorkerlin/KL-Methods}} of KL-Shampoo~\citep{lin2026klshampoo} are: 1) using centered samples, 2) a shared covariance discount and the additional multiplicative factor $\beta$, and 3) a row-then-column sweep, shrinkage, Newton--Schulz inversion, and trace balancing.
We compare variants to gauge the importance of centering and the covariance discount. We use MN to denote~\Cref{alg:mn-method}, which contains all three of these differences. We use MN (uncentered) to denote the use of 2) and 3), that is,~\Cref{alg:mn-method} where $\mX_t = \mG_t.$  Finally, we use MN ($\beta_2$) to denote~\Cref{alg:mn-method} but where $\mX_t = \mG_t$, and replace $\beta$ in the factor prior terms by $\beta_2=0.99$ and $\beta(1-\beta)$ in the evidence terms by $1-\beta_2$.
That is, MN ($\beta_2$) uses KL-Shampoo-style factor updates, with a row-then-column sweep, shrinkage, trace balancing, and Newton--Schulz inversion.
See~\Cref{fig:mn-lr-sens} for the comparison between all of these variants, where we have also included the official implementation of Distributed Shampoo~\citep{shi2023distributedshampoo}  as a baseline.
Selected learning rates and corresponding mean final validation losses are: MN (uncentered) (0.0006, 3.458), MN (0.0006, 3.453), MN ($\beta_2$) (0.0006, 3.455), KL-Shampoo (0.003, 3.444), and Distributed Shampoo (0.001, 3.494).

On this learning rate sweep, each MN variant reaches a lower minimum validation loss as compared to Distributed Shampoo and has a wider basin of competitive learning rates.  But  KL-Shampoo achieves the lowest final validation loss.  All variants also remain above the tuned AdamW mean of 3.381 in~\Cref{fig:mn-muon-competitive}. 
Centered MN achieves the lowest mean validation loss
among the MN variants (3.453).
For uncentered gradients, using a shared $\beta$ gives a slightly
higher mean loss (3.458) than using a separate $\beta_2$ (3.455).

% Some full-matrix runs were numerically unstable under Newton--Schulz inversion, so the remaining experiments use diagonal factors.

\begin{figure}
    \centering
    \includegraphics[width=0.98\linewidth]{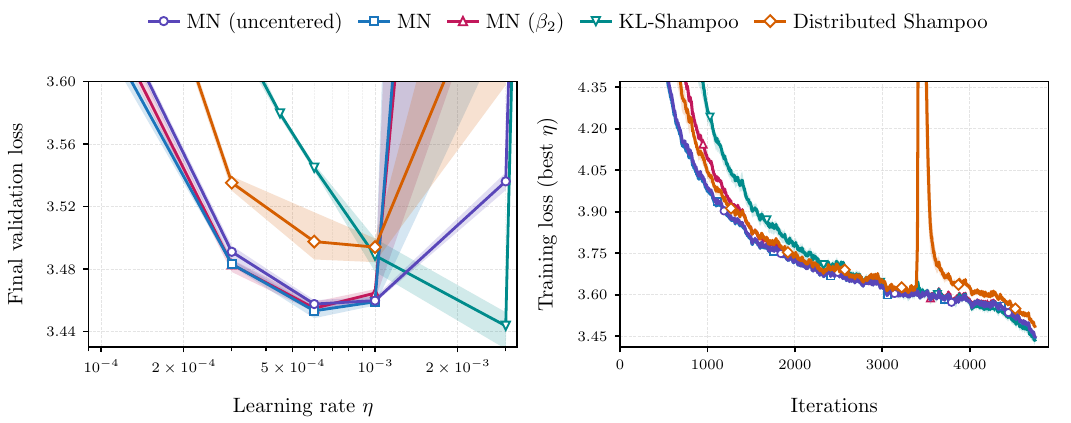}
    \caption{Comparing three full-factor MN variants with KL-Shampoo and Distributed Shampoo on GPT-small for 2.5B tokens. Left: final validation loss versus learning rate. Right: training loss at the selected learning rates, smoothed separately per seed by an EMA with weight 0.05 on the current loss. Lines show means over three seeds; shading: seed ranges. }
    \label{fig:mn-lr-sens}
\end{figure}

% \FloatBarrier
\section{Further MND-Muon experiments}
\label{app:further-mn-muon-experiments}

This appendix supplements the experiments in~\Cref{sec:mn-muon-experiments}
with training and optimizer settings, learning rate selection, and complete
numerical results. All reported MND-Muon experiments use the diagonal direction method
in~\Cref{alg:mn-muon-d-method}.

\subsection{Experimental setup}
\label{app:experimental-details}

\paragraph{Baselines.}
For GPT-small, we compare MND-Muon with Muon,
NorMuon, AdamW, Distributed Shampoo, and KL-Shampoo
on FineWeb at batch 512 (\Cref{fig:mn-muon-competitive}).
AdamW uses $(\beta_1,\beta_2)=(0.95,0.95)$;
Distributed Shampoo uses $(0.9,0.999)$, Adam grafting, and
preconditioning every 100 steps; KL-Shampoo uses $(0.9,0.98)$,
preconditioning every 10 steps, and no gradient normalization.
Muon uses Nesterov momentum with coefficient 0.95.
For AdamW, Distributed Shampoo, and KL-Shampoo, we sweep peak learning rates
\[
\{0.00001,\,0.00003,\,0.0001,\,0.0003,\,0.0006,\,0.001,\,0.003,\,0.006,\,0.01\};
\]
the learning rate grid for Muon, NorMuon, and
MND-Muon is given in~\Cref{app:mn-muon-final-comparison}.
For each method, we select the learning rate with the lowest validation loss after
1B tokens using seed 42, then train fresh runs for 2.5B tokens with seeds
42, 1, and 2 at that fixed learning rate. The right panel shows the mean
training curve and pointwise minimum and maximum across these three seeds,
after smoothing each seed with a 25M-token exponential moving average.
The selected rates for MND-Muon, Muon, and NorMuon
are all 0.03; those for AdamW, Distributed Shampoo, and
KL-Shampoo are 0.003, 0.0006, and 0.003, respectively.
The comparison across corpora and batch sizes includes MND-Muon,
Muon, and NorMuon (\Cref{fig:mn-muon-batch-size}).
Paired runs share initialization, token order, training budget,
schedule, numerical precision, and validation data, with learning rates
selected separately for each method.

\paragraph{Model.}
GPT-small has 12 layers, width 768, 12 attention heads, and 123,551,232
active parameters. Each layer uses width-2048 SwiGLU blocks, RMSNorm,
rotary-style positional transforms, and PyTorch scaled-dot-product attention.
The model has tied input/output embeddings, bias-free linear maps, and no
dropout. Linear and embedding weights are initialized from centered
Gaussians with standard deviation 0.02; residual projections use
$0.02/\sqrt{2L}$, where $L=12$. Normalization scales start at one.
The context length is 1024 and the vocabulary size is 50,257.

\paragraph{Data and loss.}
We use FineWeb-10B~\citep{penedo2024fineweb} and a sequential 10.1B-token
prefix of SlimPajama-627B~\citep{cerebras2023slimpajama}, denoted SlimPajama.
We tokenize documents with GPT-2, prepend an end-of-text token to
each document, and concatenate them in dataset order. Training
batches contain consecutive 1024-token sequences from this stream.
We reserve the first 100M tokens for
validation and evaluate final loss over all complete batches in this split.
Loss is cross-entropy in nats per token.

\begin{table}
\caption{GPT-small training settings. Batch-dependent entries follow the order 32, 64, 128, 512.}
\label{tab:mn-muon-training-config}
\centering
\small
\begin{tabular}{ll}
\toprule
Setting & Value \\
\midrule
Microbatch per GPU & 8 sequences \\
Effective batch & 32, 64, 128, 512 sequences \\
Accumulation steps & 4, 8, 16, 64 \\
Tokens per optimizer step & 32,768; 65,536; 131,072; or 524,288 \\
Learning rate selection budget & 1,000,341,504 tokens \\
Final training budget & 2,486,697,984 tokens \\
Final optimizer steps & 75,888; 37,944; 18,972; or 4,743 \\
Learning rate schedule & linear warmup (10\%), constant (80\%), linear decay (10\%) \\
Weight decay & 0 \\
Global gradient-norm clip & 1 \\
Implementation & BF16 autocast; \text{torch.compile} \\
FP32 matmul precision & \text{highest} \\
\bottomrule
\end{tabular}
\end{table}

Each paired GPT-small run uses one GPU. The Muon baselines use
80\,GB NVIDIA A100 SXM4 or H100 PCIe GPUs; the new MND-Muon
final runs use H100 or H200 GPUs. \Cref{tab:mn-muon-training-config} gives the common training settings.

\paragraph{Optimizers.}
MND-Muon and Muon use polar updates for hidden-layer matrices, with the settings
in~\Cref{tab:mn-muon-optimizer-config}. MND-Muon updates the row
factor once, then the column factor once, holding the previous direction fixed.
The tied embedding/output matrix and nonmatrix parameters use a
bias-corrected AdamW-style update with
$(\beta_1,\beta_2)=(0.95,0.95)$, $\varepsilon=10^{-8}$, and the same
base learning rate and schedule as the matrix updates.

NorMuon uses the authors' implementation with Nesterov momentum
0.95, five BF16 Newton--Schulz iterations, and a row mean-square EMA
with coefficient 0.95. It normalizes rows of the orthogonalized update,
restores its Frobenius norm, and applies the implementation's shape scaling.
The Newton--Schulz and row-normalization stabilizers are $10^{-7}$ and
$10^{-10}$, respectively. Its auxiliary Adam updates use the same
$(0.95,0.95)$ coefficients, $10^{-8}$ stabilizer, and swept learning rate
as the other methods.  

\begin{table}
\caption{Optimizer settings used for both model sizes.}
\label{tab:mn-muon-optimizer-config}
\centering
\small
\begin{tabular}{lll}
\toprule
Setting & MND-Muon & Muon \\
\midrule
Matrix momentum & whitened-gradient EMA, 0.9 & Nesterov, 0.95 \\
Factor discount & 0.9 & --- \\
Covariance factors & diagonal, direction-dependent & --- \\
Factor updates per step & one row, then one column & --- \\
Factor inverse floor & $\varepsilon=10^{-9}$ & --- \\
Polar Express iterations & 5 & 5 \\
\bottomrule
\end{tabular}
\end{table}

\subsection{GPT-small: learning rates and detailed results}
\label{app:mn-muon-final-comparison}

We compare the methods on FineWeb and SlimPajama at effective batch sizes
32, 64, 128, and 512. For each method, corpus, and batch size, we select
the learning rate with the lowest validation loss in the 1B-token sweep
with seed 42. The learning rate grid is
\[
\{0.0003,\,0.001,\,0.003,\,0.006,\,0.01,\,0.02,\,0.03,\,0.06,\,0.1\}.
\]
We then train from initialization for 2.5B tokens at the selected
learning rate using seeds 42, 1, and 2.
\Cref{tab:mn-muon-selected-lrs} lists the selected learning rates.

\paragraph{FineWeb training curves.}
Paired runs share initialization and token order.
At batch 64, mean final validation losses are 3.368 for MND-Muon,
3.362 for NorMuon, and 3.409 for Muon; at batch 128,
they are 3.338, 3.340, and 3.354, respectively.
For \Cref{fig:mn-muon-small-batch-curves}, we smooth each seed's training
loss with a 25M-token exponential moving average before computing the
pointwise mean and minimum--maximum band. The horizontal axis counts
optimizer updates: 37,944 at batch 64 and 18,972 at batch 128.
\Cref{tab:mn-muon-small-batch-results} gives the independently selected
learning rates and final validation losses for all three plotted methods.
The bands describe training-loss variation; the table reports sample
standard deviations of the final validation losses.

\begin{figure}
\centering
\includegraphics[width=0.98\linewidth]{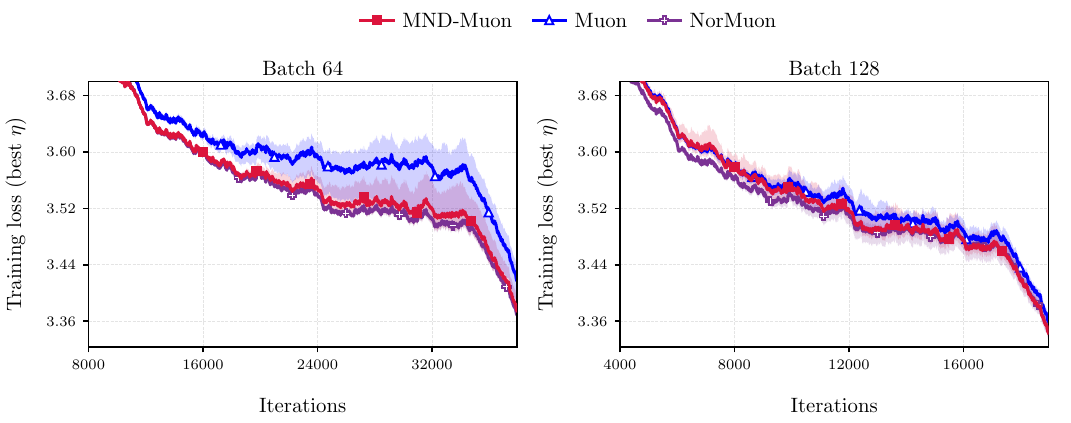}
\caption{FineWeb training loss versus iteration through 2.5B tokens.
Lines: three-seed means; shading: seed ranges.
Protocol and results: \Cref{app:mn-muon-final-comparison}.}
\label{fig:mn-muon-small-batch-curves}
\end{figure}

\begin{table}[H]
\caption{FineWeb results corresponding to \Cref{fig:mn-muon-small-batch-curves}.
Validation loss is the mean $\mathbin{\pm}$ sample standard deviation over
seeds 1, 2, and 42 after 2.5B tokens. Bold marks the lowest unrounded mean loss for each batch size.}
\label{tab:mn-muon-small-batch-results}
\centering
\small
\begin{tabular}{llcl}
\toprule
Batch & Method & Learning rate & Validation loss \\
\midrule
64 & MND-Muon & 0.01 & $3.368\mathbin{\pm}0.0157$ \\
64 & Muon & 0.01 & $3.409\mathbin{\pm}0.0285$ \\
64 & NorMuon & 0.006 & $\bm{3.362\mathbin{\pm}0.00592}$ \\
\midrule
128 & MND-Muon & 0.03 & $\bm{3.338\mathbin{\pm}0.0105}$ \\
128 & Muon & 0.02 & $3.354\mathbin{\pm}0.0104$ \\
128 & NorMuon & 0.02 & $3.340\mathbin{\pm}0.00932$ \\
\bottomrule
\end{tabular}
\end{table}

\begin{table}[H]
\caption{Learning rates selected using seed 42; columns give effective batch size.}
\label{tab:mn-muon-selected-lrs}
\centering
\small
\begin{tabular}{llrrrr}
\toprule
Dataset & Method & 32 & 64 & 128 & 512 \\
\midrule
FineWeb & MND-Muon & 0.03 & 0.01 & 0.03 & 0.03 \\
FineWeb & Muon & 0.03 & 0.01 & 0.02 & 0.03 \\
FineWeb & NorMuon & 0.01 & 0.006 & 0.02 & 0.03 \\
SlimPajama & MND-Muon & 0.02 & 0.02 & 0.02 & 0.03 \\
SlimPajama & Muon & 0.02 & 0.02 & 0.01 & 0.02 \\
SlimPajama & NorMuon & 0.006 & 0.01 & 0.01 & 0.02 \\
\bottomrule
\end{tabular}
\end{table}

\paragraph{Validation results.}
\Cref{fig:mn-muon-batch-size} compares all three methods across batch sizes,
showing mean validation losses and the full seed ranges.
MND-Muon and NorMuon each outperform Muon in all
eight corpus--batch means, and in 21 and 22 of the 24 seed pairs, respectively.
MND-Muon has the lowest mean at FineWeb batch 128 and all four
SlimPajama batches; NorMuon has the lowest mean at FineWeb batches
32, 64, and 512.
\Cref{tab:mn-muon-batch-size} reports mean final validation losses and
sample standard deviations for MND-Muon, Muon, and
NorMuon over seeds 1, 2, and 42. All 72 completed endpoints
are retained, including MND-Muon's FineWeb batch-32 seed-42
loss of 5.565 and Muon's seed-1 loss of 5.967.
At this batch, NorMuon attains $3.49238\mathbin{\pm}0.02290$.
Its only seed-level losses higher than Muon's are FineWeb
batch 32, seed 2, and SlimPajama batch 64, seed 1.

\begin{figure}
\centering
\includegraphics[width=0.98\linewidth]{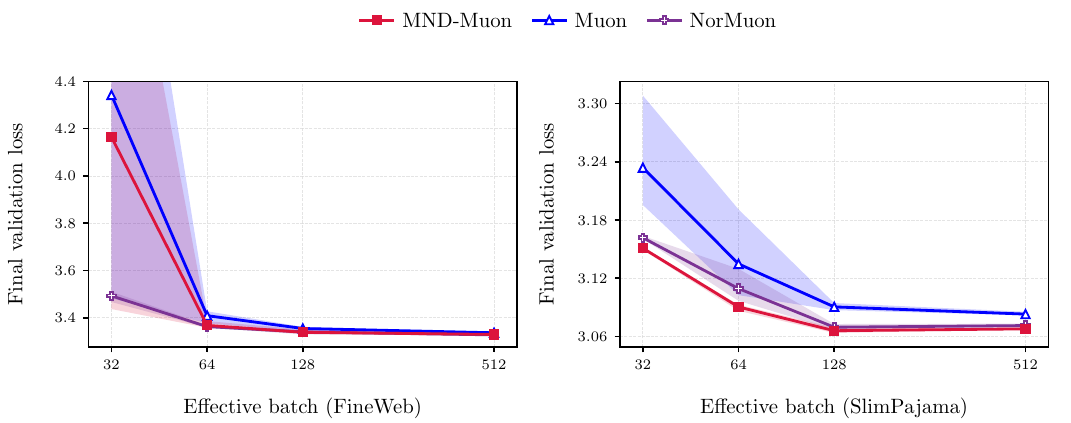}
\caption{Final validation loss versus effective batch size after 2.5B
 tokens on FineWeb (left) and SlimPajama (right). Lines show means over
 seeds 1, 2, and 42; shading spans their minimum and maximum. }
\label{fig:mn-muon-batch-size}
\end{figure}

\begin{table}
\caption{Final validation loss after 2.5B tokens. Entries are means
$\mathbin{\pm}$ sample standard deviations over seeds 1, 2, and 42.
Bold marks the lowest unrounded mean loss in each row.}
\label{tab:mn-muon-batch-size}
\label{tab:mn-muon-batch512}
\centering
\small
\begin{tabular}{lllll}
\toprule
Dataset & Batch & MND-Muon & Muon & NorMuon \\
\midrule
FineWeb & 32 & $4.165\mathbin{\pm}1.21$ & $4.343\mathbin{\pm}1.41$ & $\bm{3.492\mathbin{\pm}0.0229}$ \\
FineWeb & 64 & $3.368\mathbin{\pm}0.0157$ & $3.409\mathbin{\pm}0.0285$ & $\bm{3.362\mathbin{\pm}0.00592}$ \\
FineWeb & 128 & $\bm{3.338\mathbin{\pm}0.0105}$ & $3.354\mathbin{\pm}0.0104$ & $3.340\mathbin{\pm}0.00932$ \\
FineWeb & 512 & $3.329\mathbin{\pm}0.000796$ & $3.336\mathbin{\pm}0.000471$ & $\bm{3.327\mathbin{\pm}0.00149}$ \\
\midrule
SlimPajama & 32 & $\bm{3.151\mathbin{\pm}0.00150}$ & $3.234\mathbin{\pm}0.0643$ & $3.162\mathbin{\pm}0.00162$ \\
SlimPajama & 64 & $\bm{3.090\mathbin{\pm}0.00343}$ & $3.135\mathbin{\pm}0.0487$ & $3.109\mathbin{\pm}0.0177$ \\
SlimPajama & 128 & $\bm{3.066\mathbin{\pm}0.00278}$ & $3.090\mathbin{\pm}0.00369$ & $3.070\mathbin{\pm}0.00306$ \\
SlimPajama & 512 & $\bm{3.068\mathbin{\pm}0.000238}$ & $3.083\mathbin{\pm}0.00209$ & $3.071\mathbin{\pm}0.00221$ \\
\bottomrule
\end{tabular}
\end{table}

\subsection{GPT-medium: setup and full learning rate sweep}
\label{app:mn-muon-medium}

We use the same optimizer settings as for GPT-small
(\Cref{tab:mn-muon-optimizer-config}) and test six learning rates per
method, including NorMuon. All eighteen runs use seed 42 and share
initialization and token order.

\paragraph{Model.}
GPT-medium has 24 layers, width 1024, 16 attention heads, width-2730
SwiGLU blocks, and 353,454,080 active parameters. Other architecture and
initialization settings follow~\Cref{app:experimental-details}.

\paragraph{Data and loss.}
We use the FineWeb-10B data and validation split described in
\Cref{app:experimental-details}. The split is made in token order;
one document spans the training and validation shards.

\begin{table}
\caption{Training settings for the GPT-medium sweep.}
\label{tab:mn-muon-medium-config}
\centering
\small
\begin{tabular}{ll}
\toprule
Setting & Value \\
\midrule
Effective batch & 768 sequences of 1024 tokens \\
Tokens per optimizer update & 786,432 \\
Training budget & 7,090,470,912 tokens; 9,016 updates \\
Schedule & linear warmup, constant, linear decay toward zero \\
Warmup and decay & 901 updates each \\
Weight decay & 0 \\
Global gradient-norm clip & 1 \\
Numerical precision & BF16 autocast; FP32 parameters and state \\
Implementation & \text{torch.compile}; FP32 matmul precision \text{highest} \\
Seed & 42 \\
Validation & one evaluation at the end of each run \\
\bottomrule
\end{tabular}
\end{table}

\Cref{tab:mn-muon-medium-config} gives the training settings.
All runs use single-node DDP with 32 accumulation steps per update.

\paragraph{Learning rate selection.}
We test peak learning rates
\[
\{0.012,\,0.02,\,0.03,\,0.035,\,0.04,\,0.045\}
\]
for Muon and
\[
\{0.012,\,0.02,\,0.03,\,0.045,\,0.06,\,0.09\}
\]
for MND-Muon and NorMuon.
Each learning rate is tested for the full 7.09B-token budget with seed 42.

\paragraph{Complete results.}
\Cref{tab:mn-muon-medium-results} reports all eighteen final validation
losses. The best losses are 3.013852 for MND-Muon at 0.045,
3.017372 for Muon at 0.04, and 3.007364 for NorMuon at 0.045.
Muon's loss of 6.295429 at 0.045 and NorMuon's loss of
6.070581 at 0.09 exceed the displayed validation loss range.
\Cref{fig:mn-muon-medium} uses each method's best tested learning rate;
the stability comparison uses 0.09 for MND-Muon and NorMuon,
and 0.045 for Muon, its largest completed rate.

\begin{table}[htbp]
\caption{Final GPT-medium validation loss after 7.09B training
tokens, seed 42. Bold entries mark the best rate for each method before rounding; a dash denotes an untested combination.}
\label{tab:mn-muon-medium-results}
\centering
\small
\begin{tabular}{clll}
\toprule
Learning rate & MND-Muon & Muon & NorMuon \\
\midrule
0.012 & 3.026 & 3.033 & 3.015 \\
0.020 & 3.016 & 3.024 & 3.011 \\
0.030 & 3.015 & 3.018 & 3.008 \\
0.035 & --- & 3.018 & --- \\
0.040 & --- & \textbf{3.017} & --- \\
0.045 & \textbf{3.014} & 6.295 & \textbf{3.007} \\
0.060 & 3.016 & --- & 3.010 \\
0.090 & 3.024 & --- & 6.071 \\
\bottomrule
\end{tabular}
\end{table}

\Cref{fig:mn-muon-medium-stability} shows the validation loss sweep
and training at each method's largest tested rate. MND-Muon
remains stable at 0.09; Muon at 0.045 and NorMuon at 0.09
develop large loss spikes.

\begin{figure}[H]
\centering
\includegraphics[width=0.98\linewidth]{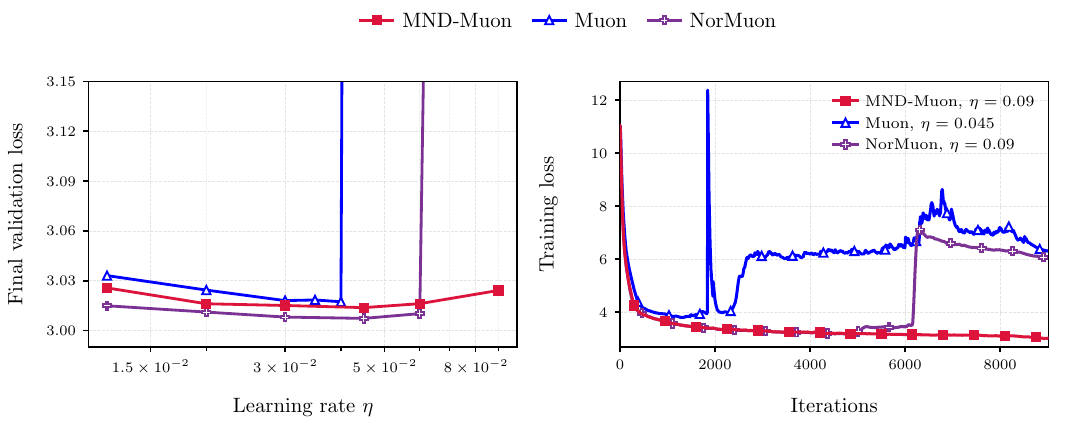}
\caption{GPT-medium on FineWeb, batch 768, seed 42.
Left: final validation loss after 7.09B training tokens versus peak learning
rate.
Right: training loss versus iteration at each method's largest tested rate,
smoothed with a 25M-token exponential moving average. MND-Muon
at 0.09 remains stable (final loss 3.024); Muon at 0.045 and
NorMuon at 0.09 develop large loss spikes (6.295 and 6.071).}
\label{fig:mn-muon-medium-stability}
\end{figure}

\end{document}